\documentclass{article}

\usepackage{microtype}
\usepackage{graphicx}
\usepackage{subcaption}
\usepackage{booktabs} % for professional tables

\usepackage{hyperref}

\usepackage[preprint]{icml2026}

\usepackage{amsmath}
\usepackage{amssymb}
\usepackage{mathtools}
\usepackage{amsthm}

\usepackage[capitalize,noabbrev]{cleveref}

\theoremstyle{plain}
\newtheorem{theorem}{Theorem}[section]
\newtheorem{proposition}[theorem]{Proposition}
\newtheorem{lemma}[theorem]{Lemma}

\theoremstyle{definition}
\newtheorem{definition}[theorem]{Definition}

\theoremstyle{remark}
\newtheorem{remark}[theorem]{Remark}

\usepackage[textsize=tiny]{todonotes}

\usepackage{xcolor}         % colors
\usepackage{wrapfig}
\usepackage{enumitem}
\usepackage{svg}
\usepackage{natbib}
\usepackage[capitalize,noabbrev]{cleveref}
\usepackage{multirow}
\usepackage{soul}    
\usepackage{kotex}

\providecommand{\black}{\textcolor{black}}

\providecommand{\OURS}{HiSNOT~}

\usepackage{pifont}
\newcommand{\cmark}{\textcolor{green!60!black}{\ding{51}}}
\newcommand{\xmark}{\textcolor{red!70!black}{\ding{55}}}

\usepackage{amsmath}\allowdisplaybreaks
\usepackage{amsfonts,bm}
\usepackage{amssymb}
\usepackage{amsthm}
\usepackage{amsmath}
\usepackage{algorithm}
\usepackage{multirow}
\usepackage{booktabs}
\usepackage{xcolor}

\def\eqref#1{equation~(\ref{#1})}
\def\1{\bf{1}}

\makeatletter
\def\Ddots{\mathinner{\mkern1mu\raise\p@
\vbox{\kern7\p@\hbox{.}}\mkern2mu
\raise4\p@\hbox{.}\mkern2mu\raise7\p@\hbox{.}\mkern1mu}}
\makeatother

\makeatletter
\newcommand*{\rom}[1]{\expandafter\@slowromancap\romannumeral #1@}
\makeatother

\icmltitlerunning{Neural Optimal Transport in Hilbert Spaces: Characterizing Spurious Solutions and Gaussian Smoothing}

\begin{document}

\twocolumn[
  \icmltitle{Neural Optimal Transport in Hilbert Spaces: \\
  Characterizing Spurious Solutions and Gaussian Smoothing}

  % It is OKAY to include author information, even for blind submissions: the
  % style file will automatically remove it for you unless you've provided
  % the [accepted] option to the icml2026 package.

  % List of affiliations: The first argument should be a (short) identifier you
  % will use later to specify author affiliations Academic affiliations
  % should list Department, University, City, Region, Country Industry
  % affiliations should list Company, City, Region, Country

  % You can specify symbols, otherwise they are numbered in order. Ideally, you
  % should not use this facility. Affiliations will be numbered in order of
  % appearance and this is the preferred way.
  \icmlsetsymbol{equal}{*}

  \begin{icmlauthorlist}
    \icmlauthor{Jae-Hwan Choi}{kias}
    \icmlauthor{Jiwoo Yoon}{skku}
    \icmlauthor{Dohyun Kwon}{equal,uos,kias}
    \icmlauthor{Jaewoong Choi}{equal,skku}
    %\icmlauthor{}{sch}
    %\icmlauthor{}{sch}
  \end{icmlauthorlist}

    \icmlaffiliation{skku}{Department of Statistics, Sungkyunkwan University, Seoul, Republic of Korea}
    \icmlaffiliation{uos}{Department of Mathematics, University of Seoul, Seoul, Republic of Korea}
    \icmlaffiliation{kias}{Korea Institute for Advanced Study, Seoul, Republic of Korea}

    \icmlcorrespondingauthor{Jaewoong Choi}{jaewoongchoi@skku.edu}
    \icmlcorrespondingauthor{Dohyun Kwon}{dh.dohyun.kwon@gmail.com}
    
  % You may provide any keywords that you find helpful for describing your
  % paper; these are used to populate the "keywords" metadata in the PDF but
  % will not be shown in the document
  \icmlkeywords{Machine Learning, ICML}

  \vskip 0.3in
]

% this must go after the closing bracket ] following \twocolumn[ ...

% This command actually creates the footnote in the first column listing the
% affiliations and the copyright notice. The command takes one argument, which
% is text to display at the start of the footnote. The \icmlEqualContribution
% command is standard text for equal contribution. Remove it (just {}) if you
% do not need this facility.

% Use ONE of the following lines. DO NOT remove the command.
% If you have no special notice, KEEP empty braces:
%\printAffiliationsAndNotice{}  % no special notice (required even if empty)
% Or, if applicable, use the standard equal contribution text:
\printAffiliationsAndNotice{\icmlEqualContribution}

\begin{abstract}
We study Neural Optimal Transport in infinite-dimensional Hilbert spaces. In non-regular settings, Semi-dual Neural OT often generates spurious solutions that fail to accurately capture target distributions. We analytically characterize this spurious solution problem using the framework of regular measures, which generalize Lebesgue absolute continuity in finite dimensions. To resolve  ill-posedness, we extend the semi-dual framework via a Gaussian smoothing strategy based on Brownian motion. Our primary theoretical contribution proves that under a regular source measure, the formulation is well-posed and recovers a unique Monge map. Furthermore, we establish a sharp characterization for the regularity of smoothed measures, proving that the success of smoothing depends strictly on the kernel of the covariance operator. Empirical results on synthetic functional data and time-series datasets demonstrate that our approach effectively suppresses spurious solutions and outperforms existing baselines.
\end{abstract}

\section{Introduction}

Optimal transport (OT) has established itself as a central framework for constructing correspondences between distributions \citep{villani, santambrogio}. To scale OT to modern high-dimensional learning pipelines, recent ``Neural OT'' approaches parameterize OT objects—such as potentials, maps, and couplings—using neural networks \citep{fanTMLR, OTP, otm}. However, in many emerging applications, the underlying data are more naturally modeled as \emph{functions} or \emph{paths}—for example, solutions to PDEs, or time series, or random fields \citep{kovachki2023neural, wang2025optimal, li2020fourier}. This shift necessitates an OT formulation on an infinite-dimensional Hilbert space $H$, rather than on the  Euclidean space $\mathbb{R}^d$. 

The classical OT theory on general Polish spaces provides foundational results regarding the existence of optimal plans (\textit{e.g.} \citep{santambrogio, ambrosio2005gradient}). 
On the other hand, the existence and uniqueness of the \emph{Monge map} $T^{\star}$, which is essential for the neural parameterization of a transport map, are not guaranteed in this general setting.

In the Euclidean space $\mathbb{R}^d$, the concept of absolute continuity with respect to the Lebesgue measure is a sufficient condition for the existence and uniqueness of the Monge map \citep{brenier1991polar, gangbo1996geometry}. Since real-world data often lie on lower-dimensional manifolds—violating this condition—smoothing strategies, such as adding Gaussian noise to the source measure \citep{OTP}, have been proposed to restore well-posedness. 
% plays a pivotal role in the OT problem in the Euclidean space $\mathbb{R}^d$ \cite{brenier1991polar,gangbo1996geometry}.
However, since there is no infinite-dimensional analogue of the Lebesgue measure, this concept cannot be directly extended to a Hilbert space setting.

\paragraph{Neural Optimal Transport in Hilbert Spaces}

In light of these obstacles, this work presents an analytical framework for Semi-dual Neural Optimal Transport in Hilbert spaces $H$.
In this framework, the transport map $T_{\theta}$ and potential $V_{\phi}$ are parameterized via the following max-min formulation:
\begin{equation}  \label{eq:otm}
    \begin{aligned}
        &\sup_{V_{\phi} \in S_c} \inf_{T_{\theta}:\mathcal{X} \rightarrow \mathcal{Y}}
        \mathcal{L}(V_{\phi}, T_{\theta}) \quad \text{where} \quad \mathcal{L}(V, T) := \\
        & \int_{\mathcal{X}} \left( c(x,T(x))-V(T(x)) \right) \mu(\mathrm{d}x) + \int_{\mathcal{Y}} V(y) \nu(\mathrm{d}y).
    \end{aligned}
\end{equation}
This formulation is known to suffer from the \textit{\textbf{spurious solution problem}} \citep{otm, fanTMLR, OTP, chu2026rate}.
An optimal solution to the max-min objective does not necessarily recover the true optimal transport map $T^{\star}$.
Let a pair $(V^*, T_{\mathrm{rec}})$ as an optimal max-min solution if
\begin{align}
\label{eq:opt_pair_V}
V^* &\in \operatorname*{arg\,max}_{V \in S_c}\; \inf_{T:\mathcal{X} \to \mathcal{Y}}\, \mathcal{L}(V,T), \\\label{eq:opt_pair_T}
T_{\operatorname{rec}} &\in \operatorname*{arg\,min}_{T:\mathcal{X} \to \mathcal{Y}}\, \mathcal{L}(V^*,T).
\end{align}
It is well-known that the true pair of Kantrovich potential and optimal transport map $(V^{\star}, T^{\star})$ constitutes an optimal max-min solution.
However, not every optimal solution recovers the true transport map, meaning $T_{\mathrm{rec}}$ may differ significantly from $T^{\star}$ (see also Examples in Section \ref{26.01.25.17.05}). 

This subtlety arises when the inner minimization problem admits multiple global minimizers, potentially leading to the selection of an invalid map.
To eliminate such spurious solutions, we must ensure that the optimal map is uniquely characterized by the potential: 
\begin{equation}
\label{26.01.28.23.22}
    \{T^{\star}(x)\} = \operatorname*{arg\,min}_{y\in H} \left[ c(x,y) - V^\star(y) \right]
\end{equation}
is uniquely determined $\mu$-almost everywhere. In the finite-dimensional Euclidean setting, \citet{OTP} has established that it is ensured if the source measure $\mu$ does not assign mass to sets with Hausdorff dimension at most $d-1$.

% In Theorem~\ref{thm:uniqueness}, we obtain the explicit relationship  (Eq. \ref{26.01.28.23.22}) of the Monge map in an infinite-dimensional Hilbert space $H$ under the \emph{regular} condition on the source measure $\mu$, which serves as a natural generalization of absolute continuity. Moreover, we establish the consistency of the SNOT framework in this regular setting.
% Specifically, we prove that if $\mu$ is regular, and $(V^{\star},T^{\star})$ is an optimal max-min solution, then $T^{\star}$ is the Monge map when $V^{\star}$ is a Kantorovich potential.
In Theorem~\ref{thm:uniqueness}, we obtain the relationship  (Eq. \ref{26.01.28.23.22}) of the Monge map in an infinite-dimensional Hilbert space $H$ under the \emph{regular} condition on the source measure $\mu$, which serves as a natural generalization of absolute continuity. Moreover, we establish the consistency of the SNOT framework in this regular setting.
Specifically, we prove that if $\mu$ is regular, and $(V^{\star},T^{\star})$ is an optimal max-min solution, then $T^{\star}$ is the Monge map when $V^{\star}$ is a Kantorovich potential.

\paragraph{Ill-posedness and Gaussian Smoothing.}
Despite the theoretical guarantees established for regular measures, practical data in infinite-dimensional Hilbert spaces often lie on low-dimensional manifolds, violating the regular condition. This setting has been studied for functional data in various contexts. See \cite{annurev} and references therein. In such non-regular settings, the consistency result (Theorem \ref{thm:uniqueness}) is no longer valid.
Consequently, the max-min formulation (Eq. \ref{eq:otm}) becomes severely ill-posed, suffering from the spurious solution problem previously discussed.

To overcome this, we propose a regularization strategy based on \emph{Gaussian smoothing}.
We construct a perturbed source variable by adding independent Gaussian noise: $\mathbf{X}^\gamma := \mathbf{X} \,+\, \mathbf{G}, $  
where $\mathbf{X}$ represents the source data (supported on a low-dimensional manifold), and $\mathbf{G}$ is a Hilbert-space valued Gaussian random vector \eqref{25.11.29.22.58}.

We provide a theoretical characterization (Theorem~\ref{25.12.09.14.29}) establishing that the smoothing ensures the regular condition if and only if the projection of the source distribution onto the \emph{kernel} of the covariance operator of $\mathbf{G}$ is regular.
Practically, this implies that the noise must be injected along the singular directions of the data manifold. Furthermore, we prove that this regularization is consistent: as the smoothing vanishes (by decaying $\lambda_k \to 0$), the sequence of unique optimal maps $T^{\star}_k$ generates plans that converge weakly to a true optimal plan $\pi^{\star}$ of the original problem (Theorem \ref{thm:convergence}).
This validates Gaussian smoothing not merely as a heuristic, but as a theoretically sound method to recover optimal couplings in infinite-dimensional spaces.

Our contributions can be summarized as follows:
\begin{itemize}[leftmargin=*, topsep=-1pt, itemsep=-1pt]
    \item (Theorem~\ref{thm:uniqueness}) We prove that, under regular assumptions on the source measure, the semi-dual OT method yields a unique optimal transport map. 
    \item (Theorem~\ref{25.12.09.14.29} \& Theorem \ref{thm:convergence})
    We propose a regularization strategy based on Gaussian smoothing, which ensures the regular condition. 
    % In addition, we prove that our smoothing method recovers optimal couplings in infinite-dimensional spaces.
    %extend the Optimal Transport Plan (OTP) model—a smoothing approach for learning transport plans—to the infinite-dimensional setting. In particular, we show that our Brownian-motion–based smoothing guarantees the well-posedness of the optimal transport map.
    %\item We establish that the transport plan learned by the OTP method converges to the original optimal transport plan as the smoothing parameter tends to zero. \dk{Can we say something about the rate of convergence?}
    % \item On time-series datasets, our methods outperform existing approaches (details provided in the experiments). \dk{Comparison to other classical methods or other machine learning methods, what's our advantage?}
    \item (\cref{sec:experiments}) We empirically verify our theoretical findings and evaluate the practical utility of our model (HiSNOT) by achieving state-of-the-art performance on real-world high-dimensional time-series imputation benchmarks.
\end{itemize}

\section{Background} \label{sec:background}
\paragraph{Optimal Transport}
The Optimal Transport (OT) problem is a mathematical framework that investigates the transport problem between the source measure $\mu$ and the target measure $\nu$ \citep{villani, santambrogio}. Formulated by \citet{monge1781memoire}, the Monge OT problem defines the optimal transport map as a deterministic measurable function $T$ that minimizes the total transport cost: 
\begin{equation}\label{eq:ot_monge} 
    \mathcal{T}(\mu, \nu) := \inf_{T_\# \mu = \nu}  \left[ \int_{\mathcal{X} } c(x,T(x)) \mu (\mathrm{d}x) \right].
\end{equation}
The pushforward constraint $T_\# \mu = \nu$ ensures that the map $T$ transforms the distribution $\mu$ into $\nu$, \textit{i.e.}, if $x \sim \mu$, then $T(x) \sim \nu$. Despite its intuitive nature, the Monge problem is inherently non-convex, and the existence of an optimal transport map generally requires regularity assumptions on the measures $\mu$ and $\nu$ as well as on the cost function $c$.

To address these limitations, \citet{Kantorovich1948} proposed a convex relaxation of the Monge problem by allowing mass splitting via joint probability measures. The Kantorovich problem is defined as:
\begin{equation} \label{eq:Kantorovich}
    C(\mu,\nu):=\inf_{\pi \in \Pi(\mu, \nu)} \left[ \int_{\mathcal{X}\times \mathcal{Y}} c(x,y) \pi(\mathrm{d}x,\mathrm{d}y) \right],
\end{equation}
We refer to the joint probability distribution $\pi \in \Pi(\mu, \nu)$ as the \textit{transport plan} between $\mu$ and $\nu$. Unlike the Monge OT problem, the optimal transport plan (OT Plan) $\pi^{\star}$ is guaranteed to exist under mild assumptions on $(\mathcal{X}, \mu)$  and $(\mathcal{Y}, \nu)$ and the cost function $c$ \citep{villani}. 
Moreover, this Kantorovich OT can be viewed as a relaxation of the Monge OT. When the optimal transport map $T^{\star}$ exists, the optimal transport plan $\pi^{\star}$ is given by the corresponding deterministic coupling, \textit{i.e.}, $\pi^{\star} = (Id \times T^{\star})_{\#} \mu$.

\paragraph{Neural Optimal Transport}
Neural Optimal Transport (Neural OT) approaches aim to learn optimal transport maps using neural networks. Among them, the Semi-dual Neural OT (SNOT) methods leverage the semi-dual formulation of the OT problem  \citep{otm, fanscalable, uotm, otmICNN}. Specifically, the Kantorovich problem admits the following semi-dual formulation  \citep[Thm. 5.10]{villani}, \citep[Prop. 1.11]{santambrogio}:
\begin{equation} \label{eq:kantorovich-semi-dual}
     S(\mu,\nu):= \sup_{V\in S_c} \left[ \int_\mathcal{X} V^c(x)\mu(\mathrm{d}x) + \int_\mathcal{Y} V(y) \nu (\mathrm{d}y) \right],
\end{equation}
where $S_c$ denotes the set of $c$-concave functions $V: \mathcal{Y}\rightarrow \mathbb{R}$ and $V^{c}$ denotes the $c$-transform of $V$, \textit{i.e.}, 
\begin{equation} \label{eq:def_c_transform}
  V^c(x)=\underset{y\in \mathcal{Y}}{\inf}\left[ c(x,y) - V(y) \right].
\end{equation}
In SNOT, the transport map $T_{\theta}$ is parameterized to approximate the minimizer in the $c$-transform. By leveraging a correspondence between the $c$-transform $V_{\phi}^{c}$ and the transport network $T_{\theta}$, we define:
\begin{align} 
    & T_{\theta} (x) \in \operatorname*{arg\,min}_{y \in \mathcal{Y}} \left[c(x, y) - V_{\phi}\left( y \right)\right] \label{eq:def_T} \\
    & \quad \Leftrightarrow \quad V_{\phi}^c(x)=c\left(x,T_{\theta}(x) \right) - V_{\phi}\left(T_{\theta}(x)\right). \label{eq:c_transform_with_T}
\end{align}
This parameterization transforms the semi-dual formulation into a max-min optimization objective $\mathcal{L}(V_{\phi}, T_{\theta})$:
\begin{equation}  \label{eq:snot}
    \begin{aligned}
        &\sup_{V_{\phi} \in S_c} \inf_{T_{\theta}:\mathcal{X} \rightarrow \mathcal{Y}} 
        \mathcal{L}(V_{\phi}, T_{\theta}) \quad \text{where} \quad \mathcal{L}(V, T) := \\
        & \int_{\mathcal{X}} c\left(x,T(x)\right)-V \left( T(x) \right) \mu(\mathrm{d}x) + \int_{\mathcal{Y}} V(y)  \nu(\mathrm{d}y).
    \end{aligned}    
\end{equation}
While existing SNOT approaches have primarily focused on finite-dimensional settings, this work generalizes this to infinite-dimensional Hilbert spaces, providing a rigorous treatment of neural transport in functional domains.

\paragraph{Spurious Solutions in SNOT}
Despite the success of SNOT models, the max-min objective has the \textbf{spurious solution problem} \citep{otm, fanTMLR, OTP}. This means that the max-min solution $(V^{*}, T_{rec})$ (Eq. \ref{eq:opt_pair_V} and \ref{eq:opt_pair_T}) of the max-min objective function (Eq. \ref{eq:snot}) may not accurately recover the optimal transport map $T^{\star}$. 
% Formally, we say define a pair $(V^*,T_{\mathrm{rec}})$ is an optimal max-min solution to (Eq. \ref{eq:snot}) if:
% \begin{align}
% \label{eq:opt_pair_V}
% V^* &\in \argmax_{V \in S_c}\; \inf_{T:\mathcal{X} \to \mathcal{Y}}\, \mathcal{L}(V,T), \\\label{eq:opt_pair_T}
% T_{\operatorname{rec}} &\in \argmin_{T:\mathcal{X} \to \mathcal{Y}}\, \mathcal{L}(V^*,T).
% \end{align}
It is well-known that the true pair of Kantrovich potential and optimal transport map $(V^{\star}, T^{\star})$ is an optimal max-min solution, but not every solution recovers the true optimal transport map \citep{otm, fanTMLR}. In other words, $T_{\mathrm{rec}}$ may differ significantly from $T^{\star}$ while still satisfying the optimality conditions of the objective. 
% Their condition requires that the source measure $\mu$ must not assign positive mass to measurable sets with Hausdorff dimension at most $d-1$. 
A known sufficient condition for preventing spurious solutions in finite-dimensional settings is that the source measure $\mu$ must not assign positive mass to measurable sets with a Hausdorff dimension of at most $d-1$ \citep{OTP}. 
This condition is satisfied, for instance, if $\mu$ is absolutely continuous with respect to the Lebesgue measure. Our work generalizes this sufficient condition to \textbf{infinite-dimensional Hilbert spaces}, where the concept of Lebesgue measure is absent, and the geometry of singular measures is more complex.

\section{Analytical Results for Semi-dual Neural OT in Hilbert spaces}
\label{sec:ana}
We extend the Semi-dual Neural OT  framework to infinite-dimensional Hilbert spaces and investigate the corresponding spurious solution problem. We first establish the foundational OT theory in Hilbert spaces (\cref{sec:OT_theory_Hilbert}) and subsequently bridge these theoretical results with SNOT models (\cref{sec:connection_to_SNOT_Hilbert}). Further analysis regarding ill-posedness, spurious solutions, and saddle points analysis of SNOT, accompanied by examples, can be found in Appendix~\ref{sec:fur}.

\subsection{General OT on Hilbert Spaces} \label{sec:OT_theory_Hilbert}

If $\mathcal{X}$ and $\mathcal{Y}$ are (potentially infinite-dimensional) Polish spaces and the cost function $c$ is lower semi-continuous, an optimal transport plan $\pi^{\star}$ exists (\textit{e.g.} \citep[Theorem 1.7]{santambrogio}).
On the other hand, addtional structural assumptions on the source measure, such as a regular measure, are required to ensure that the optimal transport is realized by a deterministic map.

\paragraph{Regular Measure on Hilbert Spaces}
To address this issue on transport maps, the notion of a \emph{regular measure} has been introduced (see, \textit{e.g.}, \cite{ambrosio2005gradient}).
We begin by establishing the functional analytic framework, with a specific focus on Hilbert spaces and Gaussian measures.

\begin{definition}
\label{26.01.15.13.31}
\black{Let $H$ be a separable Hilbert space (possibly infinite-dimensional), and let $\mathcal{P}(H)$ denote the set of all probability measures on $H$.}
%\begin{itemize}[leftmargin=*, topsep=-1pt]
    % \item A measure $\mu\in\mathcal{P}(H)$ is a \emph{nondegenerate Gaussian measure} if for any $h\in H\setminus\{0\}$, the projection \red{$\langle h,x\rangle_{H}$ has} strictly positive variance. Specifically, there exist $m_h\in \mathbb{R}$ and $\sigma_h>0$ such that for any interval $(a,b)\subset\mathbb{R}$:
    % \begin{multline}
    % \mu(\{x\in H: a<\langle h,x\rangle_{H}<b\})\\
    % =\frac{1}{\sqrt{2\pi\sigma_h^2}}\int_a^b \exp\left(-\frac{(t-m_h)^2}{2\sigma_h^2}\right)\mathrm{d}t.
    % \end{multline}
    % \item A Borel set $N\in\mathcal{B}(H)$ is called a \emph{Gaussian null set} if $\mu(N)=0$ for every nondegenerate Gaussian measure $\mu$ on $H$.
    %\item  
    A measure $\mu\in\mathcal{P}(H)$ is \textbf{\emph{regular}} if $\mu(N)=0$ for every Gaussian null set $N$.
    We denote by $\mathcal{P}^r(H)$ the set of regular measures on $H$.
%\end{itemize}

\end{definition}

Refer to Appendix \ref{app:def} for the formal definition of Gaussian null sets. In the finite-dimensional Euclidean case ($H =\mathbb{R}^d$), \black{Gaussian null sets coincide with Lebesgue-negligible sets (see, \textit{e.g.}, \cite{ambrosio2005gradient}).}
% it can be directly checked that every nondegenerate Gaussian measure is absolutely continuous with respect to the Lebesgue measure $\mathcal{L}^d$, thus, Gaussian null sets coincide with $\mathcal{L}^d$-negligible sets (see, \textit{e.g.}, \cite{ambrosio2005gradient}).
Thus, a measure $\mu\in\mathcal{P}(\mathbb{R}^d)$ is regular (in the sense of Definition \ref{26.01.15.13.31}) if and only if it is absolutely continuous with respect to the Lebesgue measure \black{($\mu \ll \mathcal{L}^d$)}.
In this sense, the \textbf{regular measure provides a rigorous generalization of absolute continuity for infinite-dimensional spaces} where the Lebesgue measure is unavailable.

\paragraph{OT on Hilbert Spaces}
In this work, we consider the OT problem with the \textbf{quadratic cost function}:
\begin{equation}
    c(x,y) := \frac{1}{2}\|x-y\|_{H}^2.
\end{equation}
To ensure \black{the finiteness of the optimal transport cost}, we restrict our \black{analysis} to probability measures with finite second moments.
We denote the \textbf{Wasserstein space of order 2} as:
\begin{equation}
\mathcal{P}_2(H) := \left\{ \mu \in \mathcal{P}(H) : \int_H \|x\|_H^2 \mu(\mathrm{d}x) < \infty \right\}.
\end{equation}
Furthermore, we define the subclass of regular measures with finite second moments as $\mathcal{P}_2^r(H) := \mathcal{P}^r(H) \cap \mathcal{P}_2(H)$.

The notion of regular measure is essential in the infinite-dimensional setting.
Specifically, the assumption \black{$\mu\in \mathcal{P}_2^r(H)$} serves as a sufficient condition for the well-posedness of the Monge problem.
This guarantees that the optimal transport can be realized by a \black{unique deterministic map} (Proposition~\ref{25.12.16.14.13}).
% rather than a probabilistic coupling.

\subsection{Connections to SNOT on Hilbert Spaces} \label{sec:connection_to_SNOT_Hilbert}
{\color{black} Our goal is to extend the SNOT framework to Hilbert spaces. To this end, we follow the derivation of SNOT for finite-dimensional settings in \cref{sec:background}. 
While the semi-dual formulation of OT (Eq. \ref{eq:kantorovich-semi-dual}) remains valid for Hilbert spaces \citep{villani, santambrogio,ambrosio2005gradient}, the formal justification for the $T_{\theta}$-parameterization in (Eq. \ref{eq:def_T}) remains a critical theoretical gap. Specifically, we must establish that the \textbf{following optimality condition also holds in the Hilbert space} setting:
\begin{equation} \label{eq:optimality_condition}
\{ T^{\star}(x) \} \subseteq \operatorname*{arg,min}_{y\in H} \left[ c(x,y) - V^\star(y) \right].
\end{equation}
Note that although Proposition \ref{25.12.16.14.13} guarantees the existence of optimal solutions $(V^{\star},T^{\star})$, the direct link between these solutions and the max-min optimization objective (Eq. \ref{eq:otm}) remains under-explored in the infinite-dimensional setting.

Furthermore, the \textbf{well-posedness of this $T_{\theta}$-parameterization, which prevents the spurious solutions}, is non-trivial. Establishing that the minimizer is unique--thereby ensuring $T_{rec}$ is a single-valued function--requires a rigorous analysis leveraging the regular conditions of measures in Hilbert spaces.

To address this, we provide a \textbf{rigorous characterization of the Monge map} (Theorem~\ref{thm:uniqueness}) using functional variations.
}

Using the notion of G\^ateaux differential (Eq. \ref{def:gateaux_derivative}), we \black{prove} that the minimizer of Eq. \ref{eq:argmin} is uniquely determined. This ensures that the $T_\theta$ parameterization is well-posed, \textit{i.e.}, there are no spurious solutions.
The proof is provided in Appendix \ref{26.01.15.16.21}.

% \begin{theorem}[Characterization of Monge Map]
% \label{thm:uniqueness}
%     Consider the quadratic cost $c(x,y)=\frac{1}{2}\| x-y \|_{H}^2$ along with measures $\mu \in \mathcal{P}_2^r(H)$ and $\nu \in \mathcal{P}_2(H)$. 
%     Let $V^\star$ be a Kantorovich potential.
%     The set of minimizers
%     \begin{equation} \label{eq:argmin}
%         \mathcal{D}_x := \operatorname*{arg\,min}_{y\in \mathrm{supp}(\nu)} \left[ c(x,y) - V^\star(y) \right]
%     \end{equation}
%     is a singleton for $\mu$-almost every $x \in H$. 
%     Specifically, there exists a unique element $y_x \in \mathcal{D}_x$ such that defining $T^\star(x) := y_x$ recovers the unique Monge map $\mu$-almost surely.
% \end{theorem}

\begin{theorem}[Characterization of Monge Map and Consistency of HiSNOT]
\label{thm:uniqueness}
    Consider the quadratic cost $c(x,y)=\frac{1}{2}\| x-y \|_{H}^2$ along with measures $\mu \in \mathcal{P}_2^r(H)$ and $\nu \in \mathcal{P}_2(H)$. 
    Let $V^\star$ be a Kantorovich potential. Then, the following hold:
    \begin{enumerate}[leftmargin=*, topsep=-1pt, itemsep=-1pt, label=(\roman*)]
        \item The set of minimizers
        \begin{equation} \label{eq:argmin}
            \mathcal{D}_x := \operatorname*{arg\,min}_{y\in \mathrm{supp}(\nu)} \left[ c(x,y) - V^\star(y) \right]
        \end{equation}
        is a singleton for $\mu$-almost every $x \in H$. Specifically, there exists a unique element $y_x \in \mathcal{D}_x$ such that defining $T^\star(x) := y_x$ recovers the unique Monge map $\mu$-almost surely.
        \item Let $(V^{\star}, T^{\star})$ be an optimal max-min solution to the SNOT objective (Eq.~\ref{eq:otm}). Then, $T^{\star}$ is the unique minimizer of the inner minimization problem: 
        \begin{equation} 
        \label{eq:T_uniqueness}
            T^\star = \operatorname*{arg\,min}_{T:H\to H} \mathcal{L}(V^{\star},T).
        \end{equation}
        Then, $T^{\star}$ coincides with the unique Monge map $\mu$-almost everywhere.
    \end{enumerate}
    % Consequently, $T^{\star}$ coincides with the unique Monge map $\mu$-almost everywhere, ensuring that the $T_\theta$ parameterization is well-posed and free of spurious solutions.
\end{theorem}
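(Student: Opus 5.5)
The approach is the standard convex-analytic route: pass from the quadratic cost to convex potentials, and use that a convex function on $H$ is G\^ateaux differentiable outside a Gaussian null set. Since $c(x,y)=\tfrac12\|x\|^2-\langle x,y\rangle+\tfrac12\|y\|^2$, I set $\varphi(x):=\tfrac12\|x\|_H^2-(V^\star)^c(x)$. Then
\begin{equation*}
\varphi(x)=\sup_{y\in\mathrm{supp}(\nu)}\Big[\langle x,y\rangle-\big(\tfrac12\|y\|_H^2-V^\star(y)\big)\Big],
\end{equation*}
which is convex, lower semicontinuous and proper, being a supremum of affine functions. Moreover, $y\in\mathcal{D}_x$ exactly when the affine function $z\mapsto\langle z,y\rangle-(\tfrac12\|y\|^2-V^\star(y))$ touches $\varphi$ from below at $x$. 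Hence $\mathcal{D}_x\subseteq\partial\varphi(x)$.

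\textbf{Step 1 (existence of minimizers).} Take an optimal plan $\pi^\star$, which exists by Proposition~\ref{25.12.16.14.13}. Since $V^\star$ is a Kantorovich potential, $(V^\star)^c(x)+V^\star(y)=c(x,y)$ holds $\pi^\star$-a.e., and $\pi^\star$ is concentrated on $H\times\mathrm{supp}(\nu)$. Therefore, for $\mu$-a.e.\ $x$ there is some $y\in\mathrm{supp}(\nu)$ in the $\pi^\star$-fibre attaining the infimum, so $\mathcal{D}_x\neq\emptyset$. This also gives $\varphi<\infty$ $\mu$-a.e.

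\textbf{Step 2 (uniqueness, main obstacle).} I would invoke the infinite-dimensional Rademacher/Aronszajn-type theorem: a convex function that is finite and continuous on an open convex set $\Omega\subseteq H$ is G\^ateaux differentiable outside a Gaussian null set. This is stated, e.g., in \cite{ambrosio2005gradient} (Prop.~6.2.x / Thm.~6.2.10), and I would use it directly. Since $\mu$ is regular, $\varphi$ is then G\^ateaux differentiable $\mu$-a.e.\ on $\Omega$, so $\partial\varphi(x)=\{\nabla\varphi(x)\}$ there, and $\mathcal{D}_x$ is a singleton.

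The delicate point is that $\mu$ need not be concentrated on the interior of $\mathrm{dom}\,\varphi$. I would handle it as in \cite{ambrosio2005gradient}: the boundary of a convex set with nonempty interior is Gaussian null. If instead the domain has empty interior, it lies in a closed hyperplane, which is also Gaussian null. So $\mu(\mathrm{dom}\,\varphi\setminus\mathrm{int}\,\mathrm{dom}\,\varphi)=0$. Combined with the fact that convex lsc functions on Hilbert spaces are continuous on the interior of their domain, this closes the gap.

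\textbf{Step 3 (identification with the Monge map).} Define $T^\star(x):=y_x=\nabla\varphi(x)$. By Step 1, any optimal plan $\pi^\star$ is supported on $\{(x,y):y\in\mathcal{D}_x\}$, so $\pi^\star=(\mathrm{Id}\times T^\star)_\#\mu$. Hence $T^\star$ is a Monge map, and it is unique $\mu$-a.e.\ by the uniqueness in Proposition~\ref{25.12.16.14.13}; this proves (i).

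\textbf{Part (ii).} Here $V^\star$ maximizes $V\mapsto\inf_T\mathcal{L}(V,T)$. First, for any measurable $T$,
\begin{equation*}
\mathcal{L}(V,T)\ge\int V^c\,d\mu+\int V\,d\nu,
\end{equation*}
with equality iff $T(x)\in\arg\min_y[c(x,y)-V(y)]$ $\mu$-a.e. The infimum is attained by a measurable selection: on the full-measure set where the argmin is the singleton $\{\nabla\varphi(x)\}$, the map $\nabla\varphi$ is measurable as a pointwise limit of difference quotients. Therefore $\inf_T\mathcal{L}(V,T)$ equals the semi-dual functional, and $V^\star$ is a Kantorovich potential by the semi-dual identity $S=C$. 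Any $T$ in the argmin of $\mathcal{L}(V^\star,\cdot)$ must then satisfy $T(x)\in\arg\min_{y\in H}[c(x,y)-V^\star(y)]$ $\mu$-a.e.

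To apply (i), I need to reduce this argmin over $H$ to one over $\mathrm{supp}(\nu)$. Taking the $c$-concave $V^\star$ to be $+\infty$-penalized, i.e.\ $-\infty$, off $\mathrm{supp}(\nu)$ (its $c$-concave representative), the two argmins agree. By (i) the argmin is then a singleton $\mu$-a.e., so the inner minimizer is unique and equals the Monge map $\mu$-a.e.
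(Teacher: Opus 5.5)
Your skeleton matches the paper's. You write the potential as $\tfrac12\|x\|_H^2$ minus a convex function, invoke the Aronszajn--Rademacher theorem for Gaussian null sets, get uniqueness from $\mathcal D_x\subseteq\partial\varphi(x)$, and for (ii) use the pointwise bound $c(x,T(x))-V^\star(T(x))\ge (V^\star)^c(x)$.

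The gap is in Step 2, where you dispose of $\mathrm{dom}\,\varphi$. The claim that a convex set with empty interior lies in a closed hyperplane is a finite-dimensional fact. In infinite dimensions a dense proper subspace is a counterexample, and this case genuinely occurs for regular $\mu$. Take an orthonormal basis $\{e_k\}$ and $\mu=\mathcal N(0,Q)$ with $Qe_k=k^{-4}e_k$, which is nondegenerate and hence regular. Let $Ae_k=ke_k$, $\nu=A_\#\mu=\mathcal N(0,AQA)\in\mathcal P_2(H)$, and $V^\star(y)=\tfrac12\|y\|_H^2-\tfrac12\sum_k y_k^2/k$ with $y_k=\langle y,e_k\rangle_H$. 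Then $(V^\star)^c=\tfrac12\|\cdot\|_H^2-\varphi$ with $\varphi(x)=\tfrac12\sum_k k\,x_k^2$. The dual value is $\tfrac12\mathbb E\|X-AX\|_H^2$ for $X\sim\mu$, so $V^\star$ is a Kantorovich potential and $A$ is the optimal map. But $\mathrm{dom}\,\varphi$ is a dense subspace with empty interior, and $\mu(\mathrm{dom}\,\varphi)=1$. Hence $\Omega=\emptyset$, $\varphi$ is nowhere G\^ateaux differentiable, and Step 2 yields nothing, even though $\mathcal D_x=\{Ax\}$ holds $\mu$-a.e. The paper's own Step 1 passes over the same point: it applies the local Lipschitz lemma and Rademacher's theorem on the convex hull of $\mathrm{supp}(\nu)$ without checking that this set is open or that $\varphi$ is locally bounded there. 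So the fix is not in the paper either.

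Repair it by truncating the supremum rather than localizing to the domain. Choose $x_0$ with $(V^\star)^c(x_0)>-\infty$. Then $V^\star(y)\le c(x_0,y)-(V^\star)^c(x_0)$, so $g(y):=\tfrac12\|y\|_H^2-V^\star(y)\ge\langle x_0,y\rangle_H-\tfrac12\|x_0\|_H^2+(V^\star)^c(x_0)$. Define
\[
\varphi_R(x):=\sup\bigl\{\langle x,y\rangle_H-g(y):\ y\in\mathrm{supp}(\nu),\ \|y\|_H\le R\bigr\}.
\]
Once the index set meets $\{V^\star>-\infty\}$, this is a real-valued supremum of $R$-Lipschitz affine functions, hence $R$-Lipschitz on all of $H$. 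Rademacher's theorem on the open set $H$ gives Gaussian null sets $N_R$ off which $\varphi_R$ is G\^ateaux differentiable, and $N:=\bigcup_{R\in\mathbb N}N_R$ is Gaussian null. Now take $x\notin N$ and $y_1,y_2\in\mathcal D_x$, and choose $R\ge\max_i\|y_i\|_H$. The affine functions $\langle\cdot,y_i\rangle_H-g(y_i)$ lie below $\varphi_R\le\varphi$ and equal $\varphi(x)$ at $x$, so they touch $\varphi_R$ at $x$. Therefore $y_1,y_2\in\partial\varphi_R(x)=\{\nabla\varphi_R(x)\}$.

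This argument runs verbatim with $\mathrm{supp}(\nu)$ replaced by $H$, which is what (ii) needs. That is preferable to redefining $V^\star=-\infty$ off $\mathrm{supp}(\nu)$, which alters the given potential and hence the inner problem.

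Finally, in (ii) your claim that $\inf_T\mathcal L(V,T)$ equals the semi-dual value for every $V\in S_c$ relies on (i), which is only available for Kantorovich potentials. Since the theorem assumes $V^\star$ is one, you can simply use $\mathcal L(V^\star,T)\ge\int(V^\star)^c\,\mathrm d\mu+\int V^\star\,\mathrm d\nu=C(\mu,\nu)$. Equality holds exactly when $T(x)$ lies in the argmin $\mu$-a.e., and the Monge map attains it.
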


% \black{Theorem \ref{thm:uniqueness} confirms that the \textbf{regular assumption on the source measure $\mu$ is sufficient} to ensure that the $T_\theta$ parameterization is well-posed, \textit{i.e.}, there are no spurious solutions. We conclude this section with the resulting well-posedness of the SNOT objective. }

% \begin{corollary}[Consistency of SNOT]
% \label{thm:uniqueness}
%     Let $\mu \in \mathcal{P}_2^r(H)$ and $\nu \in \mathcal{P}_2(H)$.
%     Suppose that $V^{\star}$ is a Kantorovich potential and the pair $(V^{\star}, T^{\star})$ is an optimal max-min solution to (Eq. \ref{eq:otm}).
%     Then, $T^{\star}$ is the unique minimizer of the inner minimization problem:
%     $$
%     T^\star = \operatorname*{arg\,min}_{T:H\to H}\mathcal{L}(V^{\star},T).
%     $$
%     Consequently, $T^{\star}$ coincides with the unique Monge map $\mu$-almost everywhere.
% \end{corollary}

\section{Method}
\black{In this section, we propose a principled practical scheme to transform a singular source measure $\mu$ into a regular measure $\mu_{Q}$, thereby ensuring the well-posedness of the SNOT framework in Hilbert spaces (\cref{sec:smoothing_hilbert}). 
We then revisit the previously discussed spurious solution cases to demonstrate the efficacy of our approach (\cref{sec:revisit_example}).
Finally, we introduce our SNOT framework for Hilbert space, called \textit{\textbf{Hilbert Semi-dual Neural Optimal Transport (\OURS)}} (\cref{sec:our_method}).
}

\subsection{Gaussian Smoothing in Hilbert Spaces} \label{sec:smoothing_hilbert}

%\paragraph{Gaussian smoothing.}
% To resolve the ill-posedness and overcome spurious solutions observed in the previous example, we introduce a regularization strategy based on \emph{Gaussian smoothing}.
% The core idea is to convolve the singular source measure with an appropriate Gaussian measure, thereby satisfying the regular condition.
\black{To resolve the ill-posedness and overcome spurious solutions inherent in singular settings (\cref{26.01.25.17.05}), we introduce a regularization strategy based on \textbf{Gaussian smoothing}.
The core idea is to convolve the singular source measure with an appropriate Gaussian measure to satisfy the regular condition $\mathcal{P}^r(H)$.
}

To do so, we first formally define the Gaussian measures in the Hilbert space setting.

\begin{definition}[Gaussian Measures] 
Gaussian measures on a separable Hilbert space $H$ are characterized by their covariance operators:
\begin{enumerate}[leftmargin=*, topsep=-2pt, label=(\roman*)]
    \item An operator $Q:H\to H$ is called a \textbf{\emph{covariance operator}} if it is bounded, linear, self-adjoint, positive (\textit{i.e.}, $\langle Qx,x\rangle_{H}\geq0$ for all $x\in H$), and trace-class (\textit{i.e.}, $\sum_{k=1}^{\infty}\langle Qe_k,e_k\rangle_{H}<\infty$).
    
    \item A random variable $\mathbf{X}:\Omega\to H$ is an \emph{$H$-valued Gaussian random vector} if $\langle X,x\rangle_{H}$ is a real Gaussian random variable for every $x\in H$. In this case, there exist a mean vector $m\in H$ and a covariance operator $Q$ such that $\mathbf{X}\sim\mathcal{N}(m,Q)$, meaning
    $$
        \langle \mathbf{X},x\rangle_{H}\sim \mathcal{N}\big(\langle m,x\rangle_{H},\ \langle Qx,x\rangle_{H}\big), \quad \forall x\in H.
    $$
\end{enumerate}
\end{definition}
Formally, let $\mu$ be the source measure. 
We define the \textbf{\emph{Gaussian smoothed measure}} \black{$\mu_{Q} = \mu\ast\gamma$} as the convolution of $\mu$ and a Gaussian measure $\gamma = \mathcal{N}(0, Q)$:
$$
(\mu\ast\gamma)(A):=\int_{H}\mu(A-x)\gamma(\mathrm{d}x).
$$
Practically, this corresponds to perturbing the random variable $\mathbf{X} \sim \mu$ with independent Gaussian noise:
$$
\mathbf{X}^\gamma := \mathbf{X} + \xi, \quad \text{where } \xi \sim \mathcal{N}(0, Q).
$$
Here, the \textbf{\emph{covariance operator $Q$ plays a decisive role}}, as it determines the \textbf{directions and magnitude of the smoothing}.
Through its spectral decomposition $Q e_k = \lambda_k e_k$, the operator $Q$ specifies that the noise is injected along the eigen-directions $\{e_k\}$ with variances $\{\lambda_k\}$.
Therefore, choosing $Q$ is equivalent to selecting the geometry of the regularization: if $\lambda_k > 0$, the smoothing is active in the direction $e_k$; if $\lambda_k = 0$ (\textit{i.e.}, $e_k \in \mathrm{Ker}(Q)$), no smoothing occurs in that direction.

\paragraph{\black{Convergence of Smoothed Monge Maps}}
The strategy of using regularized measures is theoretically justified by the following convergence result, whose proof is provided in Appendix \ref{26.01.29.13.52}:

\begin{theorem} \label{thm:convergence}
    Let $\{\mu_{k}\}_{k\in \mathbb{N}}$ be a sequence of probability measures in $\mathcal{P}_2^r(H)$, and let $T_k^{\star}$ denote the unique optimal transport map from $\mu_k$ to $\nu$.
    If $\mu_k$ converges to $\mu$ in the Wasserstein metric $W_2$ as $k\to\infty$, then there exist a subsequence $\{k_n\}_{n\in\mathbb{N}}$ and an optimal transport plan $\pi\in\Pi(\mu,\nu)$ such that the associated plans $\pi^{\star}_{k_n} = (Id\times T_{k_n}^\star)_{\#} \mu_{k_n}$ converge weakly to $\pi^{\star}$.
\end{theorem}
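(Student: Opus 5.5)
The plan is to use the standard stability of optimal plans under weak convergence of marginals, specialized to the quadratic cost on the Polish space $H\times H$. First, by Proposition~\ref{25.12.16.14.13} (or Theorem~\ref{thm:uniqueness}), since $\mu_k\in\mathcal{P}_2^r(H)$ and $\nu\in\mathcal{P}_2(H)$, the map $T_k^\star$ exists and is unique. Moreover, $\pi_k^\star=(Id\times T_k^\star)_\#\mu_k$ is an optimal plan in $\Pi(\mu_k,\nu)$.

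\textbf{Tightness.} Since $W_2(\mu_k,\mu)\to0$, the sequence $\mu_k\to\mu$ narrowly. A narrowly convergent sequence on a Polish space is tight, by Prokhorov's theorem in its converse direction. The singleton $\{\nu\}$ is tight as well. For compact sets $K_1,K_2$ with $\mu_k(H\setminus K_1)<\varepsilon$ for all $k$ and $\nu(H\setminus K_2)<\varepsilon$, we get $\pi_k^\star(H\times H\setminus K_1\times K_2)<2\varepsilon$. Hence $\{\pi_k^\star\}$ is tight. Prokhorov then gives a subsequence $\pi^\star_{k_n}\to\pi^\star$ narrowly. Projections are continuous, so the marginals pass to the limit and $\pi^\star\in\Pi(\mu,\nu)$.

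\textbf{Optimality of the limit.} This is the main step. I would argue through convergence of costs. By lower semicontinuity of $c\ge0$ under narrow convergence (Portmanteau applied to the truncations $c\wedge M$),
$$\int c\,\mathrm{d}\pi^\star\le\liminf_n\int c\,\mathrm{d}\pi^\star_{k_n}=\liminf_n \tfrac12 W_2^2(\mu_{k_n},\nu).$$
The triangle inequality gives $|W_2(\mu_{k},\nu)-W_2(\mu,\nu)|\le W_2(\mu_k,\mu)\to0$. Hence the right-hand side equals $\tfrac12W_2^2(\mu,\nu)=C(\mu,\nu)$. Combined with $\pi^\star\in\Pi(\mu,\nu)$, this forces $\int c\,\mathrm{d}\pi^\star=C(\mu,\nu)$, so $\pi^\star$ is optimal.

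\textbf{Expected obstacle.} The only delicate point is that in infinite dimensions norm-bounded sets are not compact. Tightness therefore cannot come from moment bounds alone. It must be taken from the narrow convergence of $\mu_k$ (via Prokhorov on Polish spaces) together with the tightness of the single measure $\nu$, which is what I do above. Lower semicontinuity of the cost also requires care, because $c$ is unbounded. I handle this with the monotone truncation $c\wedge M$ and let $M\to\infty$. An alternative route to optimality is stability of $c$-cyclical monotonicity of supports. I would avoid it, since the cost-comparison argument is shorter and uses only $W_2$-continuity.
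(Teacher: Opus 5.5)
Your proof is correct, but it takes a different route from the paper's. The paper does not carry out the tightness and optimality steps itself. It only checks that $\liminf_k\int c\,d\pi_k^\star<\infty$, using $\|x-y\|_H^2\le 2(\|x\|_H^2+\|y\|_H^2)$ and the convergence of second moments. It then invokes Villani's stability theorem (Theorem 5.20). That theorem's proof does three things: it extracts a narrowly convergent subsequence from the tightness of the coupling sets, it passes $c$-cyclical monotonicity of the supports to the limit, and it concludes optimality because a $c$-cyclically monotone plan with finite cost is optimal.

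You replace the cyclical-monotonicity step with a direct cost comparison. You combine lower semicontinuity of $\pi\mapsto\int c\,d\pi$ under narrow convergence with the continuity $W_2(\mu_k,\nu)\to W_2(\mu,\nu)$, which follows from the triangle inequality.

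The trade-off is this:
\begin{itemize}
\item The paper's route needs only narrow convergence of $\mu_k$ plus a uniform bound on the costs. The $W_2$ hypothesis is used there merely to bound the costs.
\item Your route uses $W_2$ convergence essentially, to identify $\lim_n\int c\,d\pi^\star_{k_n}$ with $C(\mu,\nu)$. In exchange, it is self-contained and avoids cyclical monotonicity entirely. As a by-product, it shows that the optimal costs $\int c\,d\pi_k^\star$ converge to $C(\mu,\nu)$.
\end{itemize}

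Your explicit handling of tightness, through the converse Prokhorov theorem on the Polish space $H$ and the tightness of the single measure $\nu$ rather than through moment bounds, is exactly the right point to be careful about in infinite dimensions.
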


Importantly, the limit measure $\mu$ is not required to be regular. 
In cases where $\mu$ is singular, the existence of a Monge map is not guaranteed; only an optimal transport plan $\pi^\star$ exists. 
Theorem \ref{thm:convergence} addresses this by guaranteeing that $\pi^{\star}$ can be recovered as the limit of unique Monge maps $T_k^{\star}$ derived from regularized approximations (\textit{e.g.}, via Gaussian smoothing). 
This theoretical result directly validates our annealing strategy proposed in \cref{sec:our_method}, representing a Hilbert space generalization of the finite-dimensional framework established by \citet{OTP}.

\paragraph{Characterization of Necessary Smoothing}
\black{
We further establish a \textbf{\textit{necessary and sufficient}} condition for the smoothed measure $\mu * \gamma$ to satisfy the regular condition. This result characterizes how the covariance $Q$ (specifically, its kernel) determines the success of the smoothing. Note that this characterization was not established in \citep{OTP}, and it provides the finite-dimensional Euclidean case as a direct corollary.
}

\begin{theorem}
\label{25.12.09.14.29}
Let $\gamma$ be a centered Gaussian measure with a covariance operator $Q$, and let $\mu\in\mathcal{P}(H)$.
Then, the convolution $\gamma\ast\mu$ belongs to $\mathcal{P}^r(H)$ if and only if the projection of $\mu$ onto the kernel of $Q$ is regular, \textit{i.e.}, $\mu_K\in\mathcal{P}^r(\mathrm{Ker}(Q))$.
% Here, the convolution and the projected measure are defined as:
\black{Here, the projected measure $\mu_K(B)$ is defined as:}
        $$
        \mu_K(B):=\mu(\{h\in H:\pi_{K}(h)\in B\}).\,
        $$
where $\pi_K:H\to \mathrm{Ker}(Q)$ denotes the orthogonal projection.
\end{theorem}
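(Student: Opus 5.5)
The plan is to split $H$ orthogonally as $H=K\oplus K^\perp$ with $K:=\mathrm{Ker}(Q)$ and $K^\perp=\overline{\mathrm{Ran}(Q)}$, and to write $x=(a,b)$ with $a=\pi_K x$. Since $Q$ vanishes on $K$, the measure $\gamma$ is concentrated on $K^\perp$ (its projection onto $K$ is $\delta_0$). Viewed on $K^\perp$, $\gamma$ is a \emph{nondegenerate} centered Gaussian. Two elementary facts about Gaussian null sets (Appendix~\ref{app:def}) will be used repeatedly.

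\textbf{(a) Translation invariance.} Translates of Gaussian null sets are Gaussian null, because a translate of a nondegenerate Gaussian is again a nondegenerate Gaussian.

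\textbf{(b) Products and marginals.} If $\rho_K$ and $\rho_\perp$ are nondegenerate Gaussians on $K$ and $K^\perp$, then $\rho_K\otimes\rho_\perp$ is a nondegenerate Gaussian on $H$. Conversely, the $\pi_K$-image of any nondegenerate Gaussian on $H$ is a nondegenerate Gaussian on $K$.

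\emph{Necessity.} Assume $\gamma\ast\mu\in\mathcal P^r(H)$ and let $N\subset K$ be Gaussian null in $K$. Consider the cylinder $\pi_K^{-1}(N)=N\times K^\perp$. For every nondegenerate Gaussian $\rho$ on $H$, fact (b) gives $\rho(\pi_K^{-1}(N))=(\pi_K)_\#\rho(N)=0$, so the cylinder is Gaussian null in $H$. Since $\gamma$ does not move the $K$-component,
\[
(\gamma\ast\mu)(N\times K^\perp)=\mu(\pi_K^{-1}(N))=\mu_K(N).
\]
Hence $\mu_K(N)=0$, and $\mu_K$ is regular on $K$.

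\emph{Sufficiency.} Let $N\subset H$ be Gaussian null and write $N_a:=\{b\in K^\perp:(a,b)\in N\}$ for its slices. By Fubini,
\[
(\gamma\ast\mu)(N)=\int_H \gamma(N_a-b)\,\mu(\mathrm{d}a,\mathrm{d}b).
\]
Fix an arbitrary nondegenerate Gaussian $\rho_K$ on $K$ and a point $b_0\in K^\perp$. The measure $\rho_K\otimes\gamma(\cdot-b_0)$ is a nondegenerate Gaussian on $H$ by facts (a) and (b), so it annihilates $N$. Fubini then shows that $A_{b_0}:=\{a:\gamma(N_a-b_0)>0\}$ is $\rho_K$-null. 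As $\rho_K$ was arbitrary, $A_{b_0}$ is Gaussian null in $K$, and therefore $\mu_K(A_{b_0})=0$ by regularity of $\mu_K$. Disintegrating $\mu=\int\mu^a\,\mu_K(\mathrm{d}a)$, it remains to show that $\int\gamma(N_a-b)\,\mu^a(\mathrm{d}b)=0$ for $\mu_K$-a.e.\ $a$.

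\textbf{Main obstacle.} The null set $A_{b_0}$ depends on $b_0$, while under $\mu$ the variable $b$ is coupled to $a$ arbitrarily. We therefore need a single Gaussian-null exceptional set $A\subset K$ that works for all $b$ at once. My first attempt would be a Cameron--Martin argument. For fixed $a$, quasi-invariance of $\gamma$ shows that positivity of $b\mapsto\gamma(N_a-b)$ is unchanged under shifts of $b$ by elements of the Cameron--Martin space $Q^{1/2}H$. I would then aim to show that the function $(a,b)\mapsto\gamma(N_a-b)$ either vanishes identically in $b$ or is positive on a set of positive $\gamma$-measure in $b$. To do this, split $\gamma=\gamma_1\ast\gamma_1$ with $\gamma_1=\mathcal N(0,Q/2)$, and run the $\rho_K\otimes\gamma_1$ argument with $b$ integrated against $\gamma_1$ rather than frozen. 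This yields a Gaussian-null set $A\subset K$ outside of which $\gamma(N_a-\cdot)\equiv0$ holds, using measurable dependence in $a$. Then $\mu_K(A)=0$, and the displayed integral vanishes. If the dichotomy is not available in this form, I would instead approximate each $\mu^a$ by countable mixtures of Dirac masses in $K^\perp$. That reduces the problem to countably many $b_0$'s, for which the union of the sets $A_{b_0}$ is still Gaussian null, and I would then pass to the limit using the continuity of $b\mapsto\gamma(\cdot-b)$ along Cameron--Martin directions.
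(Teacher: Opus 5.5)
Your necessity argument is correct and coincides with the paper's Steps~1--2: the cylinder $\pi_K^{-1}(N)$ is Gaussian null, and $\gamma$ does not move the $K$-component. The gap is in sufficiency, and no better argument can close it.

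The obstacle you name is real: the exceptional set $A_{b_0}$ depends on $b_0$, while $\mu$ couples $b$ to $a$ arbitrarily. Your proposed dichotomy (``$b\mapsto\gamma(N_a-b)$ vanishes identically or is positive on a set of positive $\gamma$-measure'') is false. The countable-Dirac fallback also fails. The reason is that $\gamma(\cdot-b)$ and $\gamma(\cdot-b')$ are mutually singular whenever $b-b'\notin Q^{1/2}H$. So no approximation of $\mu^a$ by atoms controls $\int\gamma(N_a-b)\,\mu^a(\mathrm{d}b)$.

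The paper gets past this point in its Step~3 only by asserting $\gamma_Q(A_k'-h)=0$ for \emph{all} $h\in\overline{H_Q}$, citing a Cameron--Martin theorem stated with $\overline{H_Q}$. The correct theorem gives quasi-invariance only for $h\in H_Q=Q^{1/2}H$, so the paper's proof has exactly the hole you identified. Your route does work when $\mathrm{Ran}(Q)$ is finite-dimensional: all translates of $\gamma$ on $K^\perp$ are then equivalent, so $A_{b_0}=\{a:\gamma(N_a)>0\}$ does not depend on $b_0$. It works trivially when $\mathrm{Ker}(Q)=\{0\}$. In general, however, the ``if'' direction is false.

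\textbf{Counterexample.} Take an orthonormal basis $\{e_0,e_1,\dots\}$ with $Qe_0=0$ and $Qe_k=k^{-4}e_k$, so $K=\mathbb{R}e_0$.
\begin{itemize}
\item Let $E:=\{x\in K^\perp:\sum_{k\ge1}k^2\langle x,e_k\rangle^2<\infty\}$. This is a Borel linear subspace with $\gamma(E)=1$, since its defining series has $\gamma$-expectation $\sum_k k^{-2}$.
\item Let $\beta(a):=\sum_{k\ge1}k^{-1}\sin(2\pi N_k a)\,e_k$ and $\mu:=\mathrm{Law}(Ue_0+\beta(U))$ with $U\sim\mathrm{Unif}(0,1)$.
\item Let $N:=\{ae_0+x:\ a\in[0,1],\ x\in K^\perp,\ x-\beta(a)\in E\}$.
\end{itemize}
Then $\mu_K$ is uniform on a segment of the line $K$, hence regular, but $(\gamma\ast\mu)(N)=\gamma(E)=1$.

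It remains to check that $N$ is Gaussian null. Every nondegenerate Gaussian $\rho$ on $H$ is the law of $X_0e_0+m_0+X_0m_1+R$, where $m_0,m_1\in K^\perp$, $X_0$ is a nondegenerate real Gaussian, and $R$ is a centered Gaussian in $K^\perp$ independent of $X_0$ (regress $\pi_{K^\perp}X$ on $X_0$). By Fubini, $\rho(N)=0$ as soon as $\{a\in[0,1]:\beta(a)-c-am_1\in E\}$ is Lebesgue-null for all $c,m_1\in K^\perp$.
\begin{itemize}
\item Membership in $E$ forces $\sin(2\pi N_ka)-u_k-av_k\to0$, with $u_k=k\langle c,e_k\rangle$ and $v_k=k\langle m_1,e_k\rangle$.
\item The second and third derivatives of this function do not see $(u_k,v_k)$. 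Sublevel-set estimates therefore bound the measure of $\{|\sin(2\pi N_ka)-u_k-av_k|<\delta\}$ on each period by $c(\delta)/N_k$, with $c(\delta)\to0$ uniformly in $(u_k,v_k)$.
\item Each such sublevel set has at most $4N_k+2$ components. So for sufficiently lacunary $N_k$ (e.g.\ $N_{k+1}\ge2^k\sum_{j\le k}(4N_j+2)$), the set of $a$ where all terms from some index on are below $\delta$ has measure zero.
\end{itemize}
Hence $N$ is Gaussian null and $\gamma\ast\mu\notin\mathcal{P}^r(H)$.

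In this example $\gamma(N_a-b)=\mathbf{1}_{\beta(a)+E}(b)$, with $\beta(a)\notin E$ for a.e.\ $a$. This function is not identically zero, yet it vanishes $\gamma$-a.e.---precisely the case your dichotomy was meant to exclude.
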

\black{Theorem \ref{25.12.09.14.29} proves that the smoothed measure becomes regular if and only if the injected noise covers all singular directions of $\mu$. This ensures the existence of a unique Monge map for the regularized problem. Combined with Theorem \ref{thm:convergence}, this provides a principled mechanism to recover the true optimal transport plan in the limit as the noise amplitude vanishes.}

\subsection{Our HiSNOT Framework} \label{sec:our_method}
{\color{black}
Our \textbf{\textit{Hilbert Semi-dual Neural Optimal Transport (HiSNOT)}} framework is the first attempt to generalize the Semi-dual Neural Optimal Transport (SNOT) framework to infinite-dimensional Hilbert spaces. (\cref{sec:hisnot}) 
Moreover, we propose an annealed Gaussian smoothing scheme to ensure the well-posedness of the SNOT objective and the recovery of the optimal transport plan (\cref{sec:hiOTP}).
% Moreover, our approach integrates a functional generalization of the max-min objective in \eqref{eq:hisnot} with a principled Annealed Gaussian Smoothing scheme to ensure theoretical well-posedness and the recovery of optimal transport plans.

\subsubsection{Generalization of SNOT to Hilbert Spaces} \label{sec:hisnot}
While SNOT has traditionally been applied to finite-dimensional Euclidean spaces \citep{otm, fanTMLR, otmICNN}, we generalize the framework to operate on a separable Hilbert space $H$. 
We parameterize the transport map $T_\theta: H \to H$ and the potential $V_\phi: H \to \mathbb{R}$ as functional operators acting on $H$ (Eq. \ref{eq:def_T}), which leads to the following functional max-min objective:
\begin{equation}  \label{eq:hisnot}
    \begin{aligned}
        &\sup_{V_{\phi} \in S_c} \inf_{T_{\theta}:H \rightarrow H} 
        \mathcal{L}(V_{\phi}, T_{\theta}) \quad \text{where} \quad \mathcal{L}(V, T) := \\
        & \int_{\mathcal{X}} c\left(x,T(x)\right)-V \left( T(x) \right) \mu(\mathrm{d}x) + \int_{\mathcal{Y}} V(y)  \nu(\mathrm{d}y).
    \end{aligned}    
\end{equation}
where $c(x,y) = \frac{1}{2}\|x-y\|_H^2$. Note that this objective has exactly the same form except that $T_{\theta}: H \rightarrow H$. Our contribution lies in theoretically proving that this $T_{\theta}$-parameterization remains valid in Hilbert spaces (Theorem~\ref{thm:uniqueness}).
Unlike Euclidean SNOT, the transport map $T_\theta$ must now be capable of mapping functions to functions (\textit{e.g.}, through Neural Operators \citep{kovachki2023neural}). In our implementation, we adopt the Fourier Neural Operator architectures \citep{li2020fourier}. 

\subsubsection{Learning Optimal Transport Plan via Annealed Gaussian Smoothing} \label{sec:hiOTP}
To learn the true optimal transport plan $\pi^{\star}$ between a potentially singular source $\mu$ and a target $\nu$, we propose an \textbf{Annealed Gaussian Smoothing} strategy based on the \OURS framework. This method utilizes our theoretical results in \cref{sec:OT_theory_Hilbert} to ensure that the neural networks optimize a well-posed objective without spurious solutions at every stage of training.

Intuitively, our method is to \textbf{construct a sequence of regularized source measures $\{\mu_{\epsilon_n}\}$ where the noise level $\epsilon_n$ is gradually annealed to zero}. For any $\epsilon > 0$, we define the smoothed measure as the convolution $\mu_\epsilon := \mu * \mathcal{N}(0, \epsilon^2 Q)$.
As established in Theorem~\ref{thm:uniqueness}, this convolution transforms a singular measure into a regular measure ($\mu_\epsilon \in \mathcal{P}_2^r(H)$), provided the covariance operator $Q$ covers the singular directions of $\mu$. This makes the HiSNOT objective at each noise level well-posed, thereby increasing the training stability and providing guarantees to recovering the true Monge map. By Theorem \ref{thm:convergence}, as $\epsilon \to 0$, the sequence of maps $\{T_{\epsilon_n}^{\star}\}$ is guaranteed to converge toward the original optimal transport plan $\pi^{\star}$. See Algorithm \ref{alg:hiSNOT} for details.
\begin{equation}  \label{eq:hisnot_noise}
    \begin{aligned}
        &\sup_{V_{\phi} \in S_c} \inf_{T_{\theta}:H \rightarrow H} 
        \mathcal{L}_{n}(V_{\phi}, T_{\theta}) \quad \text{where} \quad \mathcal{L}_{n}(V, T) := \\
        & \int_{\mathcal{X}} c\left(x,T(x)\right)-V \left( T(x) \right) \mu_{\epsilon_{n}}(\mathrm{d}x) + \int_{\mathcal{Y}} V(y)  \nu(\mathrm{d}y).
    \end{aligned}    
\end{equation}

\begin{figure*}[h]
     \centering
     \begin{subfigure}[b]{0.20\textwidth} 
         \centering
         \includegraphics[width=\textwidth]{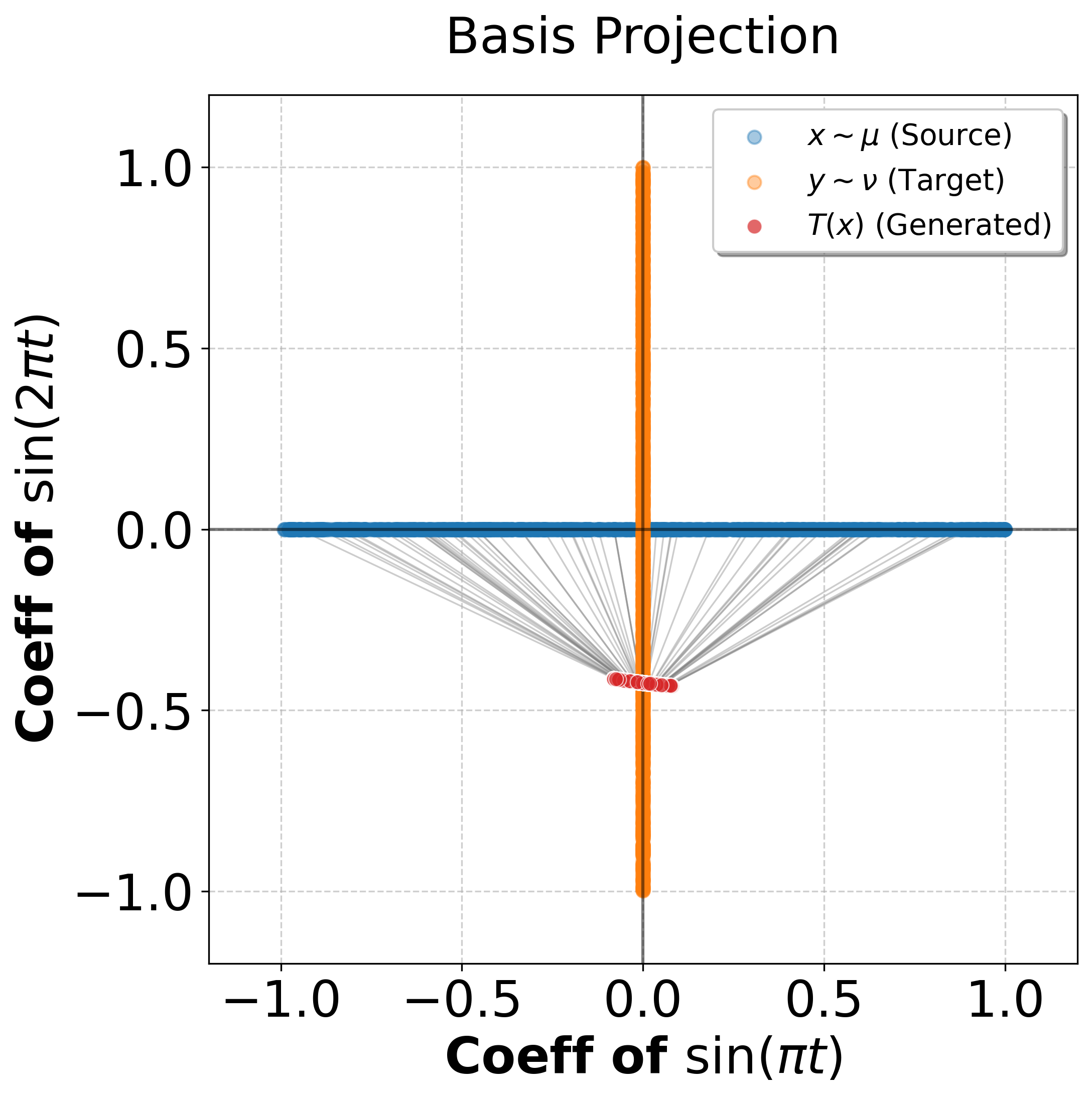}
     \end{subfigure}
     \hfill 
     \begin{subfigure}[b]{0.20\textwidth}
         \centering
         \includegraphics[width=\textwidth]{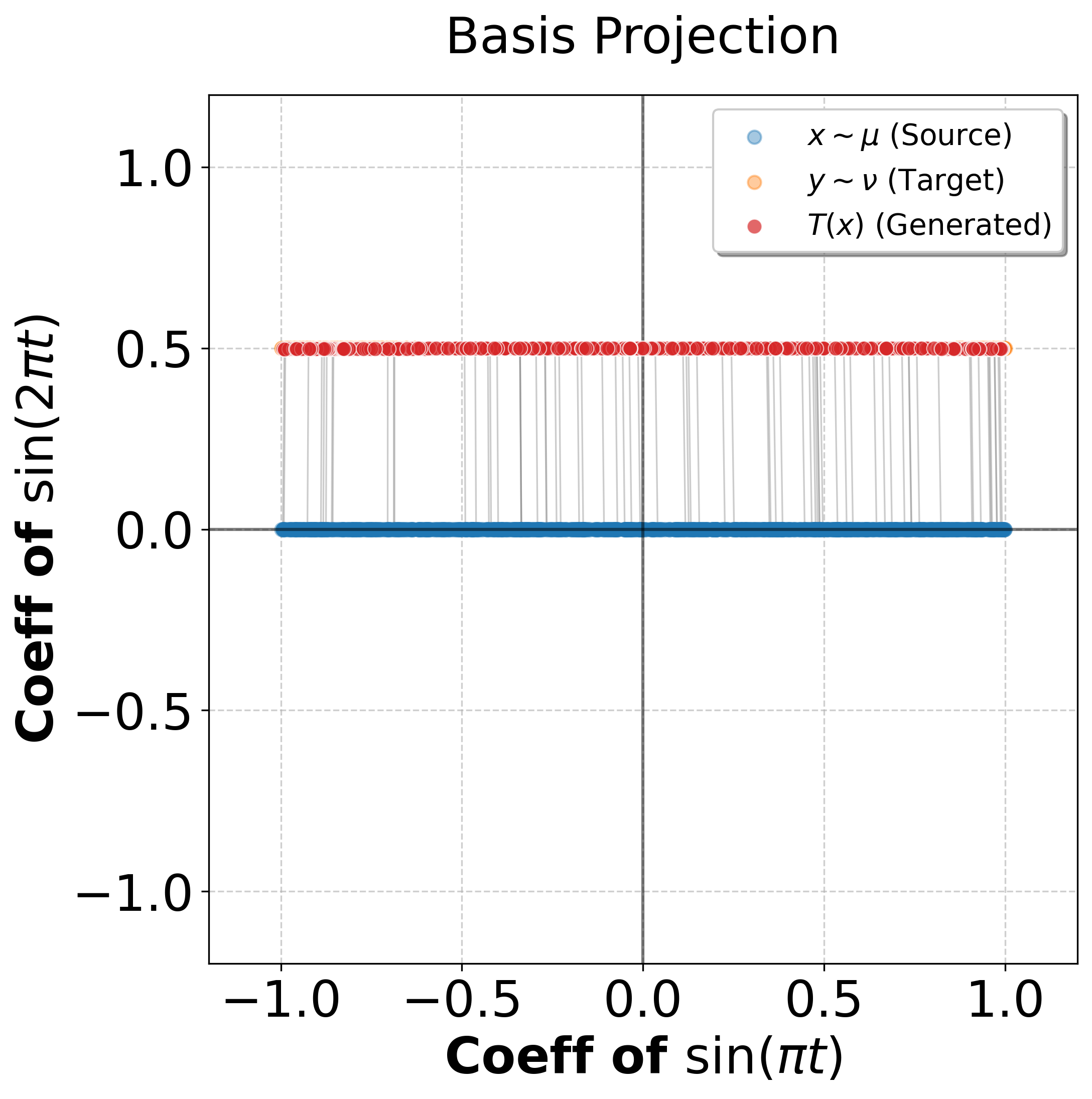}
     \end{subfigure}
     \hfill
     \begin{subfigure}[b]{0.20\textwidth}
         \centering
         \includegraphics[width=\textwidth]{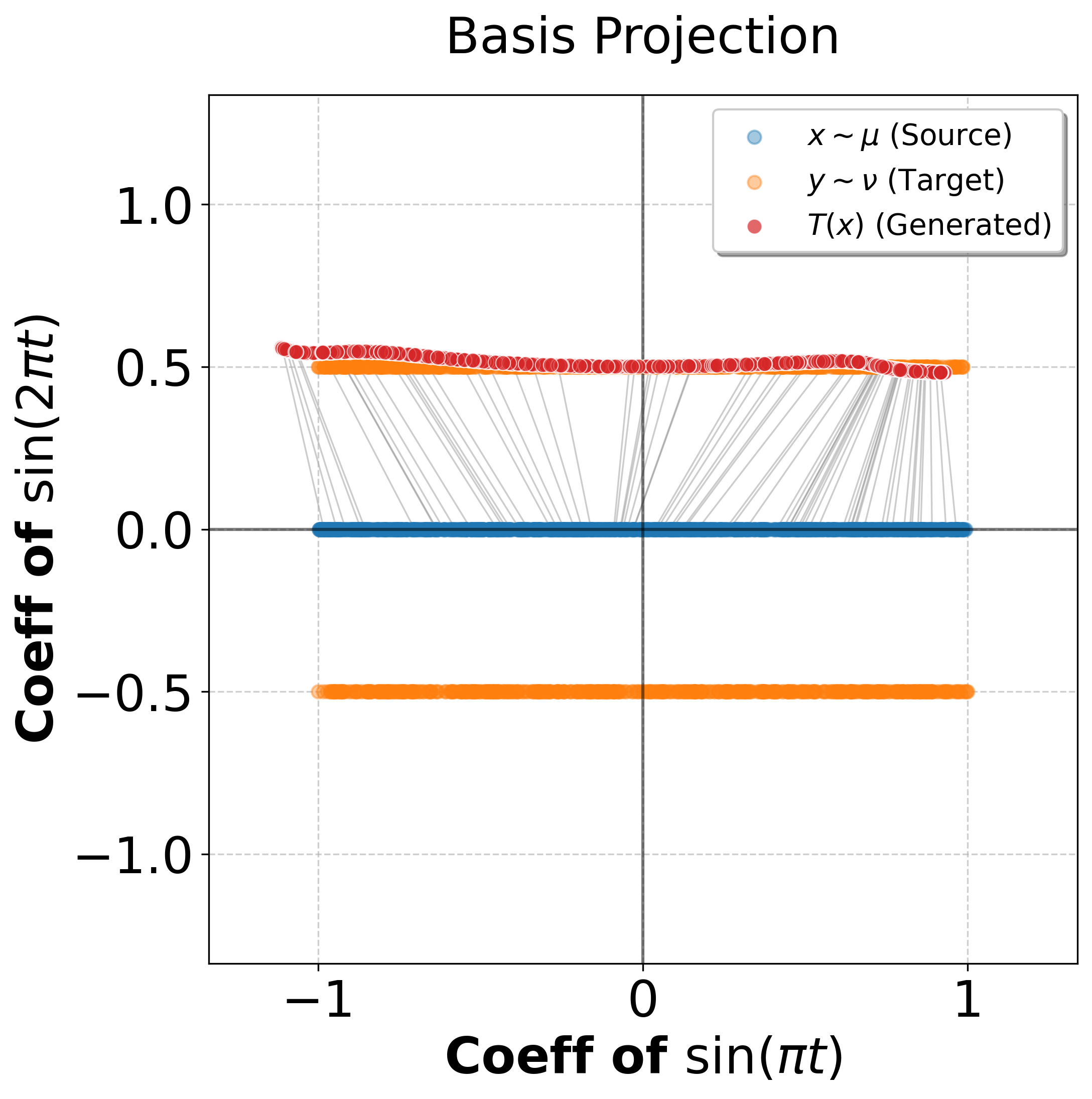}
     \end{subfigure}
     \hfill
     \begin{subfigure}[b]{0.20\textwidth}
         \centering
         \includegraphics[width=\textwidth]{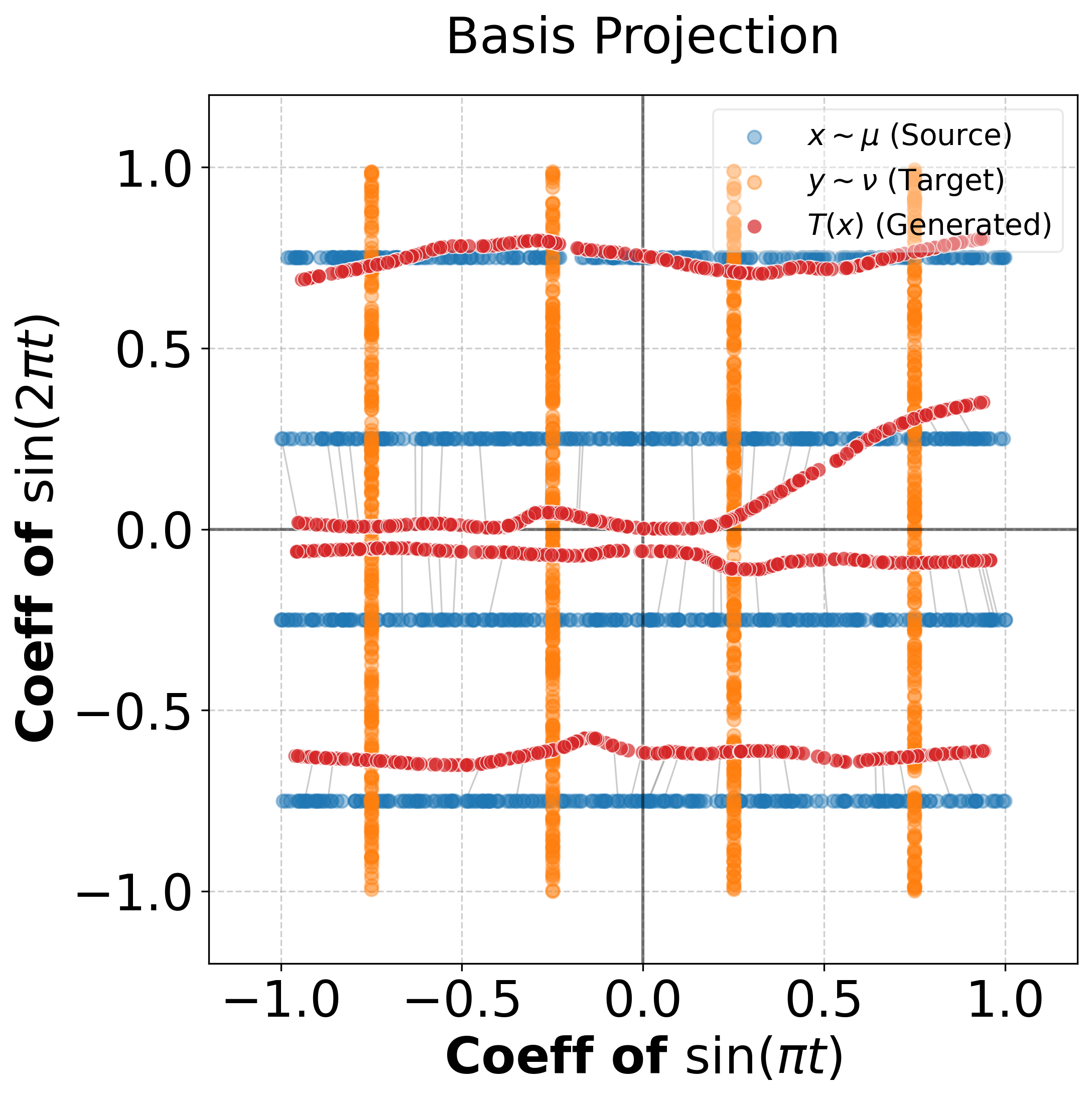}
     \end{subfigure}

     \medskip

     \begin{subfigure}[b]{0.20\textwidth}
         \centering
         \includegraphics[width=\textwidth]{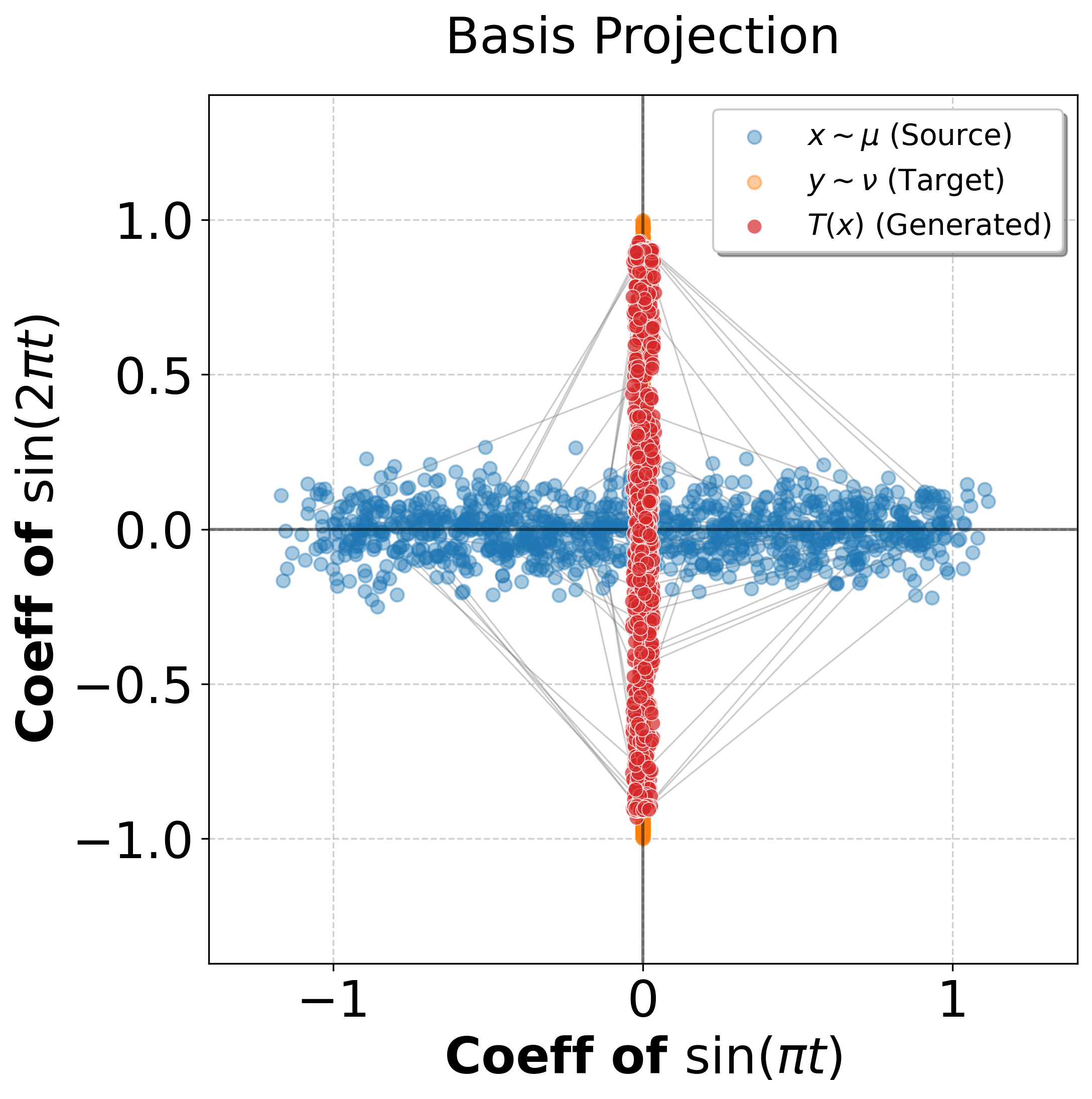}
         \caption{Perpendicular}
     \end{subfigure}
     \hfill
     \begin{subfigure}[b]{0.20\textwidth}
         \centering
         \includegraphics[width=\textwidth]{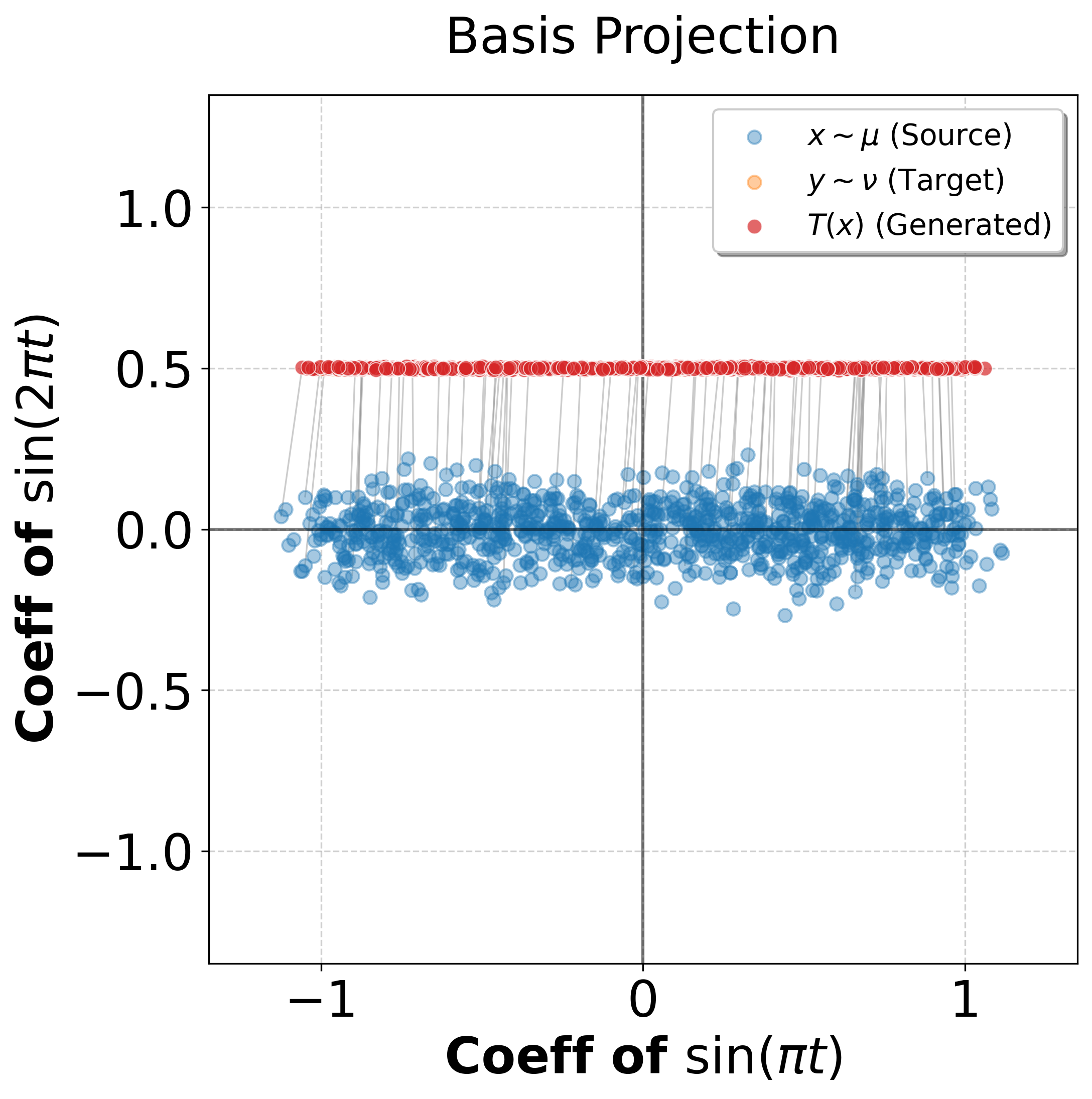}
         \caption{Parallel}
     \end{subfigure}
     \hfill
     \begin{subfigure}[b]{0.20\textwidth}
         \centering
         \includegraphics[width=\textwidth]{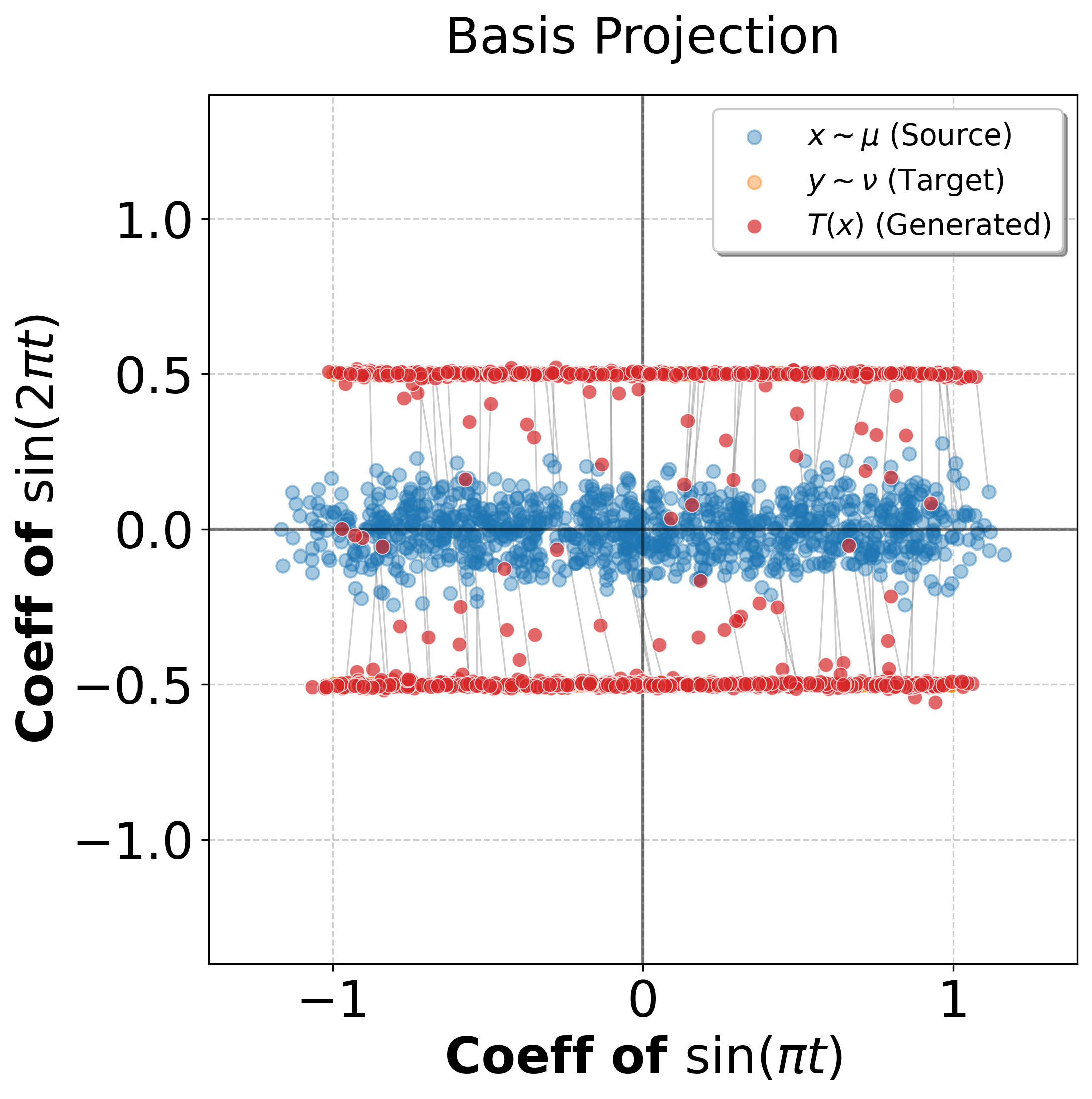}
         \caption{One-to-Many}
     \end{subfigure}
     \hfill
     \begin{subfigure}[b]{0.20\textwidth}
         \centering
         \includegraphics[width=\textwidth]{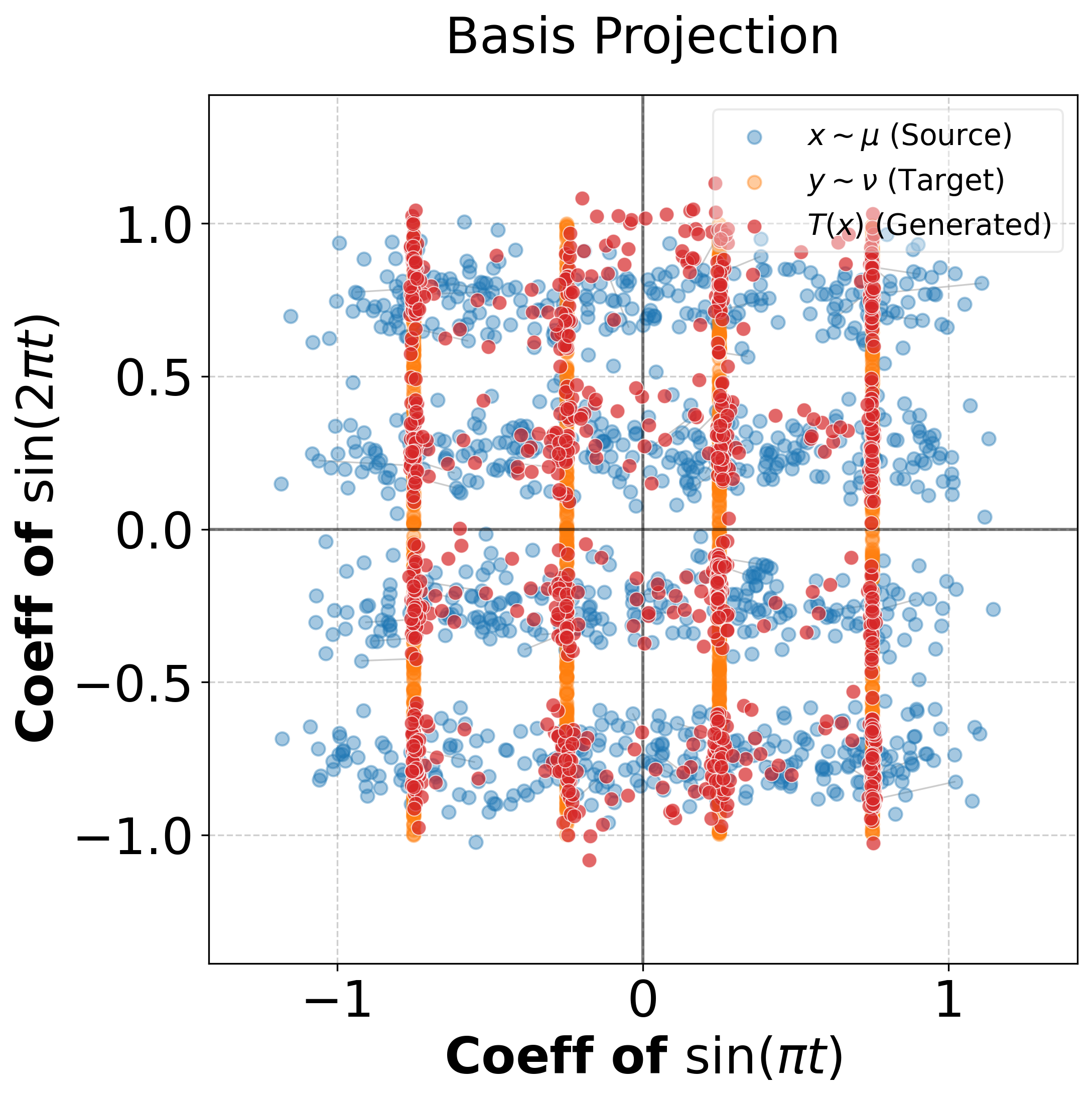}
         \caption{Grid}
     \end{subfigure}

     \caption{\textbf{Comparison of transport maps} learned by vanilla \OURS (top row) and \OURS with annealed smoothing (bottom row). The vanilla model exhibits spurious solutions in singular cases (Perpendicular, One-to-Many, Grid), while our proposed smoothing correctly recovers the optimal transport plan.
     }
     \label{fig:synthetic}
     % \vspace{-8pt}
\end{figure*}

\paragraph{Practical Smoothing via Spectral Augmentation} Here, we provide a detailed explanation of how to actually implement Gaussian smoothing in practice via examples.
In Hilbert spaces such as $H=L^2(0,1)$, we implement the \textbf{Gaussian smoothing $\mathcal{N}(0, \epsilon^2 Q)$ by injecting noise into the spectral coefficients} of the source data. This ensures that the regularization is geometrically aligned with the functional structure of the data.

Let $\{e_k\}_{k\in\mathbb{N}}$ be an orthonormal basis of $H$ (\textit{e.g.}, Fourier or Haar wavelet basis) and let $\{\lambda_k\}_{k\in\mathbb{N}}$ be a summable sequence of non-negative components, \textit{i.e.}, $\sum_{k} \lambda_{k} < \infty$ (For instance, $\lambda_k = 1/k^2$). We construct $H$-valued Gaussian noise via the following proposition. The proof can be found in Appendix~\ref{app:gaussian_hilbert}.

\begin{proposition}
\label{prop:hilbert_Gaussian_random_app}
    Let $\{e_k\}_{k\in\mathbb{N}}$ be an orthonormal basis of $H$.
    For a summable sequence $\{\lambda_k\}_{k\in\mathbb{N}}\subset[0,\infty)$ and i.i.d. standard Gaussian variables $\{\xi_k\}$
    \begin{equation}
    \label{25.11.29.22.58}
    X:=\sum_{k=1}^{\infty}\sqrt{\lambda_k}\xi_ke_k \quad \textrm{where} \,\, \xi_{k} \sim i.i.d. \,\, \mathcal{N}(0, 1)
    \end{equation}
    is a $H$-valued Gaussian random vector with zero mean, and a trace-class operator $Q$ defined in (Eq. \ref{25.11.20.11.50}).
\end{proposition}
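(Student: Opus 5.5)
We prove Proposition~\ref{prop:hilbert_Gaussian_random_app} in three steps. First we show that the series defining $X$ converges in $H$ almost surely and in $L^2(\Omega;H)$. Next we show that every one-dimensional projection $\langle X,h\rangle_H$ is a centered real Gaussian. Finally we identify its variance as $\langle Qh,h\rangle_H$, where $Q$ is the diagonal operator $Qh=\sum_k \lambda_k\langle h,e_k\rangle_H e_k$ (this is how we read Eq.~\ref{25.11.20.11.50}).

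\textbf{Step 1 (convergence).} Let $S_n:=\sum_{k=1}^n\sqrt{\lambda_k}\xi_k e_k$. By orthonormality and independence, for $m<n$,
\begin{equation*}
\mathbb{E}\|S_n-S_m\|_H^2=\sum_{k=m+1}^n\lambda_k\,\mathbb{E}\xi_k^2=\sum_{k=m+1}^n\lambda_k .
\end{equation*}
This tends to $0$ because $\sum_k\lambda_k<\infty$, so $(S_n)$ is Cauchy in $L^2(\Omega;H)$ and converges to a limit $X$. For almost sure convergence, note that $\|S_n-S_m\|_H^2=\sum_{k=m+1}^n\lambda_k\xi_k^2$. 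The series $\sum_k\lambda_k\xi_k^2$ has nonnegative terms and finite expectation $\sum_k\lambda_k$, so it is finite a.s. Hence $(S_n(\omega))$ is Cauchy in $H$ for a.e.\ $\omega$, and the two limits agree a.s. Measurability of $X$ follows because it is an a.s.\ limit of measurable maps into the separable space $H$. The same computation gives $\mathbb{E}\|X\|_H^2=\sum_k\lambda_k<\infty$.

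\textbf{Step 2 (Gaussian projections).} Fix $h\in H$ and write $h_k:=\langle h,e_k\rangle_H$. Then
\begin{equation*}
\langle S_n,h\rangle_H=\sum_{k\le n}\sqrt{\lambda_k}\,h_k\,\xi_k .
\end{equation*}
This is a finite linear combination of independent standard Gaussians, so it has law $\mathcal{N}\bigl(0,\sigma_n^2\bigr)$ with $\sigma_n^2=\sum_{k\le n}\lambda_k h_k^2$. The map $x\mapsto\langle x,h\rangle_H$ is continuous, so $\langle S_n,h\rangle_H\to\langle X,h\rangle_H$ almost surely. Moreover, $\sigma_n^2\to\sigma^2:=\sum_k\lambda_k h_k^2\le(\sup_k\lambda_k)\|h\|_H^2<\infty$.

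\textbf{Passage to the limit.} The characteristic functions satisfy $e^{-t^2\sigma_n^2/2}\to e^{-t^2\sigma^2/2}$. Dominated convergence turns a.s.\ convergence of $\langle S_n,h\rangle_H$ into convergence of characteristic functions, so $\langle X,h\rangle_H\sim\mathcal{N}(0,\sigma^2)$. The degenerate case $\sigma^2=0$ is allowed and gives the Dirac mass at $0$. This step is the one place a limiting argument on laws is needed, and it is the main (mild) obstacle; Gaussian characteristic functions make it routine. Thus $X$ is an $H$-valued Gaussian random vector with mean zero.

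\textbf{Step 3 (covariance).} By definition, $\langle Qh,h\rangle_H=\sum_k\lambda_k h_k^2=\sigma^2$, so the variance found in Step 2 is exactly $\langle Qh,h\rangle_H$. The operator $Q$ is a covariance operator:
\begin{itemize}
\item $Q$ is linear and bounded, with $\|Q\|\le\sup_k\lambda_k$.
\item $Q$ is self-adjoint, since it is diagonal with real entries in the basis $\{e_k\}$.
\item $Q$ is positive, since $\lambda_k\ge0$.
\item $Q$ is trace-class with $\sum_k\langle Qe_k,e_k\rangle_H=\sum_k\lambda_k<\infty$. Because $Q$ is positive, the trace is basis-independent.
\end{itemize}
Polarization then gives $\mathbb{E}\bigl[\langle X,h\rangle_H\langle X,g\rangle_H\bigr]=\langle Qh,g\rangle_H$ for all $h,g\in H$. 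Hence $X\sim\mathcal{N}(0,Q)$.
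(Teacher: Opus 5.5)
Your proof is correct and follows the same route as the paper: $L^2(\Omega;H)$-Cauchy partial sums, Gaussianity of each projection $\langle X,h\rangle_H$ as a limit of finite Gaussian combinations, and identification of the covariance with the diagonal operator $Q$. The differences are only refinements. You obtain almost sure convergence directly from $\sum_k\lambda_k\xi_k^2<\infty$ a.s.\ rather than citing the It\^o--Nisio theorem. You also justify the Gaussian limit via characteristic functions, and you get the cross-covariances by polarization instead of interchanging expectation with a double series.
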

During training, for each source sample $\mathbf{x} \sim \mu$, we generate a smoothed sample $\mathbf{x}_\epsilon \sim \mu_{\epsilon}$ through the following process:
\begin{enumerate}[leftmargin=*, topsep=-1pt, itemsep=-1pt]
    \item Project $\mathbf{x}$ onto the chosen basis to obtain coefficients $c_k = \langle \mathbf{x}, e_k \rangle_H$.
    \item Add weighted Gaussian noise to each coefficient: $\hat{c}_k = c_k + \epsilon \sqrt{\lambda_k} \zeta_k$.
    \item Form the smoothed sample $\mathbf{x}_\epsilon = \sum_k \hat{c}_k e_k$.
\end{enumerate}
The basis examples $\{e_k\}_{k\in\mathbb{N}}$ are included in Appendix \ref{app:gaussian_hilbert}.

\paragraph{Annealing Schedule}
We initialize the training with a sufficiently large $\epsilon_{\max}$. As the training progresses, we decay $\epsilon$ according to an annealing schedule. Specifically, following \citep{OTP}, we employ a linear interpolation annealing from $\sigma_{\text{max}}$ to $\sigma_{\text{min}}$. This allows the transport map $T_\theta$ to first capture the coarse global structure of the transport plan and then refine the mapping as the source measure converges back to its original singular support. 
}

\section{Experiments} \label{sec:experiments}
We empirically validate our theoretical analysis on Semi-dual Neural OT on Hilbert spaces through the following experiments:
\begin{itemize}[leftmargin=*, topsep=-1pt, itemsep=-1pt]
	\item In Sec \ref{sec:exp_synthetic_data}, we evaluate the effectiveness of Gaussian smoothing in overcoming spurious solutions.
	\item In Sec \ref{sec:exp_smoothing_characterization}, we empirically test our characterization theorem for smoothing operators (Theorem~\ref{25.12.09.14.29}) by comparing theoretically appropriate operators and inappropriate operators.
	\item In Sec \ref{sec:exp_timeseries}, we assess our SNOT model on time-series imputation benchmarks, which are real-world applications for high-dimensional function space datasets.
\end{itemize}
For implementation details of experiments, please refer to Appendix \ref{app:implementation_detail}.

\begin{table*}[h]
\centering
\small
\caption{
\textbf{Quantitative comparison of \OURS performance with and without Gaussian smoothing} across synthetic scenarios. We report the transport cost error $D_{\text{cost}}(\downarrow)$ and target distribution error $D_{\text{target}}(\downarrow)$. The inclusion of smoothing ($\cmark$) effectively resolves singular measures, significantly reducing errors due to spurious solutions.
}
\label{tab:synthetic}
\scalebox{0.85}{
\begin{tabular}{lcccccccc}
\toprule
\multirow{2}{*}{Smoothing} & \multicolumn{2}{c}{Perpendicular} & \multicolumn{2}{c}{Parallel} & \multicolumn{2}{c}{One-to-Many} & \multicolumn{2}{c}{Grid} \\
\cmidrule(lr){2-3} \cmidrule(lr){4-5} \cmidrule(lr){6-7} \cmidrule(lr){8-9}
& $D_{cost}$ & $D_{target}$ & $D_{cost}$ & $D_{target}$ & $D_{cost}$ & $D_{target}$ & $D_{cost}$ & $D_{target}$ \\
\midrule
\xmark    & 0.101 & 0.249 & \textbf{0.001} & \textbf{0.000} & 0.051 & 0.288 & 0.007 & 0.031 \\
\cmark & \textbf{0.038} & \textbf{0.006} & 0.003 & 0.001 & \textbf{0.001} & \textbf{0.004} & \textbf{0.007} & \textbf{0.008} \\
\bottomrule
\end{tabular}
}
% \vspace{-8pt}
\end{table*}

\begin{table*}[t]
\centering
\small
\caption{
\textbf{Unpaired time-series imputation performance (MSE and MAE)} across five datasets. Results are averaged over missing ratios of $0.5$ and $0.7$. The \textbf{bold} and \underline{underlined} values denote the best and second-best results. All baseline results are taken from \citet{wang2025optimal}.
}
\label{table:TSI_average}
\vspace{-2pt}
\scalebox{0.85}{
\begin{tabular}{l cc cc cc cc cc}
\toprule
\textbf{Datasets} & \multicolumn{2}{c}{\textbf{ETTh1}} & \multicolumn{2}{c}{\textbf{ETTh2}} & \multicolumn{2}{c}{\textbf{ETTm1}} & \multicolumn{2}{c}{\textbf{ETTm2}} & \multicolumn{2}{c}{\textbf{Exchange}} \\
\cmidrule(r){2-3} \cmidrule(r){4-5}  \cmidrule(r){6-7} \cmidrule(r){8-9} \cmidrule(r){10-11}  
\textbf{Methods} & \textbf{MSE} & \textbf{MAE} & \textbf{MSE} & \textbf{MAE} & \textbf{MSE} & \textbf{MAE} & \textbf{MSE} & \textbf{MAE} & \textbf{MSE} & \textbf{MAE} \\
\midrule
Transformer \citep{vaswani2017attention}   & 0.338 & 0.410 & 0.282 & 0.383 & 0.085 & 0.196 & 0.056 & 0.161 & 0.320 & 0.289 \\
DLinear \citep{zeng2023transformers}  & \underline{0.187} & 0.307 & 0.138 & 0.262 & 0.131 & 0.250 & 0.107 & 0.227 & 0.288 & 0.238 \\
TimesNet \citep{wu2022timesnet}      & 0.343 & 0.415 & 0.147 & 0.278 & 0.086 & 0.209 & 0.093 & 0.220 & 0.372 & 0.308 \\
FreTS \citep{freTS}        & 0.233 & 0.355 & 0.159 & 0.278 & 0.061 & 0.168 & 0.040 & 0.136 & 0.229 & 0.189 \\
PatchTST \citep{patchTST}     & 0.216 & 0.338 & 0.144 & 0.272 & \textbf{0.055} & 0.153 & 0.033 & 0.122 & 0.227 & 0.188 \\
SCINet \citep{scinet}       & 0.195 & 0.321 & 0.172 & 0.288 & 0.088 & 0.204 & 0.085 & 0.211 & 0.326 & 0.269 \\
iTransformer \citep{itransformer}  & 0.206 & 0.321 & 0.125 & 0.237 & 0.068 & 0.173 & 0.043 & 0.142 & 0.087 & 0.070 \\
SAITS \citep{saits}        & 0.286 & 0.358 & 0.236 & 0.303 & 0.079 & 0.188 & 0.059 & 0.160 & 1.007 & 0.832 \\
CSDI \citep{csdi}         & \underline{0.187} & 0.303 & 0.107 & 0.271 & 0.156 & 0.206 & 0.161 & 0.150 & 0.134 & 0.150 \\
Sinkhorn \citep{sinkhorn}  & 0.936 & 0.685 & 0.877 & 0.644 & 0.965 & 0.717 & 0.927 & 0.702 & 0.783 & 0.648 \\
TDM \citep{TDM}          & 0.991 & 0.749 & 0.998 & 0.745 & 1.003 & 0.756 & 0.998 & 0.748 & 0.969 & 0.801 \\
PWS-I \citep{wang2025optimal}        & \textbf{0.160} & \textbf{0.262} & \underline{0.053} & \underline{0.155} & \underline{0.057} & \textbf{0.146} & \underline{0.025} & \underline{0.103} & \underline{0.036} & \textbf{0.030} \\
\midrule
HiSNOT (Ours)  & 0.215 & \underline{0.288} & \textbf{0.048} & \textbf{0.144} & 0.066 & \underline{0.150} & \textbf{0.021} & \textbf{0.089} & \textbf{0.004} & \underline{0.042} \\
\bottomrule
\end{tabular}
}
\vspace{-8pt}
\end{table*}

\subsection{Effect of Gaussian Smoothing on Spurious Solutions} \label{sec:exp_synthetic_data}
We empirically verify the effectiveness of Gaussian smoothing, proved in \ref{thm:uniqueness}, by evaluating under the spurious solution examples described in \cref{26.01.25.17.05}. Specifically, we compare the vanilla \OURS model (Eq. \ref{eq:hisnot}) and the regularized version utilizing Gaussian smoothing (Eq. \ref{eq:hisnot_noise}). In these settings, the vanilla model is exposed to ill-posedness and spurious solution problems. In contrast, the regularized model introduces principled Gaussian smoothing with a noise level $\epsilon$. This leads to the well-posed learning objective that eliminates spurious solutions.
The evaluation is conducted on synthetic datasets, including Perpendicular (Example 1) and One-to-Many (Example 2) (See Appendix \ref{app:example_details} for details). Here, we utilize two metrics: the transport cost error $D_{cost} = | W^2_2 (\mu, \nu) - \int \Vert T_{\theta}(x) - x \Vert^2 d\mu(x) |$ and the target distribution error $D_{target} = W^2_2 (T_{\theta \#} \mu, \nu)$.
$D_{cost}$ measures how closely the model approximates the theoretically optimal transport cost, while $D_{target}$ measures how accurately the model generates the target distribution.

The results are provided in Figure \ref{fig:synthetic} and Table \ref{tab:synthetic}. While the baseline \OURS exhibits spurious solutions and fails to accurately reconstruct the target measure, our regularized model faithfully learns the optimal transport plan. Quantitatively, our model is significantly more accurate in the Perpendicular, One-to-Many, and Grid tasks. Specifically, in the Perpendicular data, the introduction of Gaussian smoothing significantly reduces the transport cost error $D_{cost}$ from 0.101 to 0.038 and the target distribution error $D_{target}$ from 0.249 to 0.006. Interestingly, in the Parallel data, the unregularized model performs well. Thus, both models achieve results with competitive accuracy.

\subsection{Characterization of Smoothing Operator} \label{sec:exp_smoothing_characterization}
Theorem \ref{25.12.09.14.29} establishes the necessary and sufficient conditions for a Gaussian covariance operator to transform a singular source measure $\mu$ into a regular measure via convolution smoothing (\textit{i.e.}, $\gamma * \mu$). This condition requires the Gaussian measure $\gamma$ to have non-zero variance along every eigen-direction where the source measure $\mu$ is singular. 

To empirically validate this, we compare (1) theoretically appropriate Gaussian smoothing against (2) inappropriate Gaussian smoothing using the Perpendicular dataset. In this setting, the source measure $\mu$ is regular specifically along the eigen-direction $\{ \sin (\pi t) \}$ but remains singular along all other directions. We evaluate two Gaussian measures:
\begin{itemize}[leftmargin=*, itemsep=-1pt]
    \item \textbf{Appropriate smoothing} ($\gamma_{1}$) such that $\text{Ker}(Q_{1}) = \text{span}\{ \sin (\pi t) \}$. This operator avoids smoothing only in the direction where $\mu$ is already regular, thereby satisfying the theorem.
    \item \textbf{Inappropriate smoothing} ($\gamma_{2}$) such that $\text{Ker}(Q_{2}) = \text{span}\{ \sin (2\pi t) \}$. This operator fails to apply smoothing to a direction where $\mu$ is singular, thus failing to produce a regular measure.
\end{itemize}

Figure \ref{fig:smoothing_comparison} presents the qualitative comparison of these two smoothings. With appropriate smoothing (left), the \OURS model successfully captures the target data distribution, as observed in the full-space regularization experiments (Figure \ref{fig:synthetic}). However, under the appropriate smoothing (right), the \OURS exhibits the spurious solution patterns. These results empirically verify our characterization of the necessary smoothing result (Theorem~\ref{25.12.09.14.29}). For regularization, the noise distribution must explicitly cover the singularities of the source measure to ensure a well-posedness.

\begin{figure}[htbp]
    \centering

    \begin{subfigure}[b]{0.45\linewidth}
        \centering
        \includegraphics[width=\textwidth]{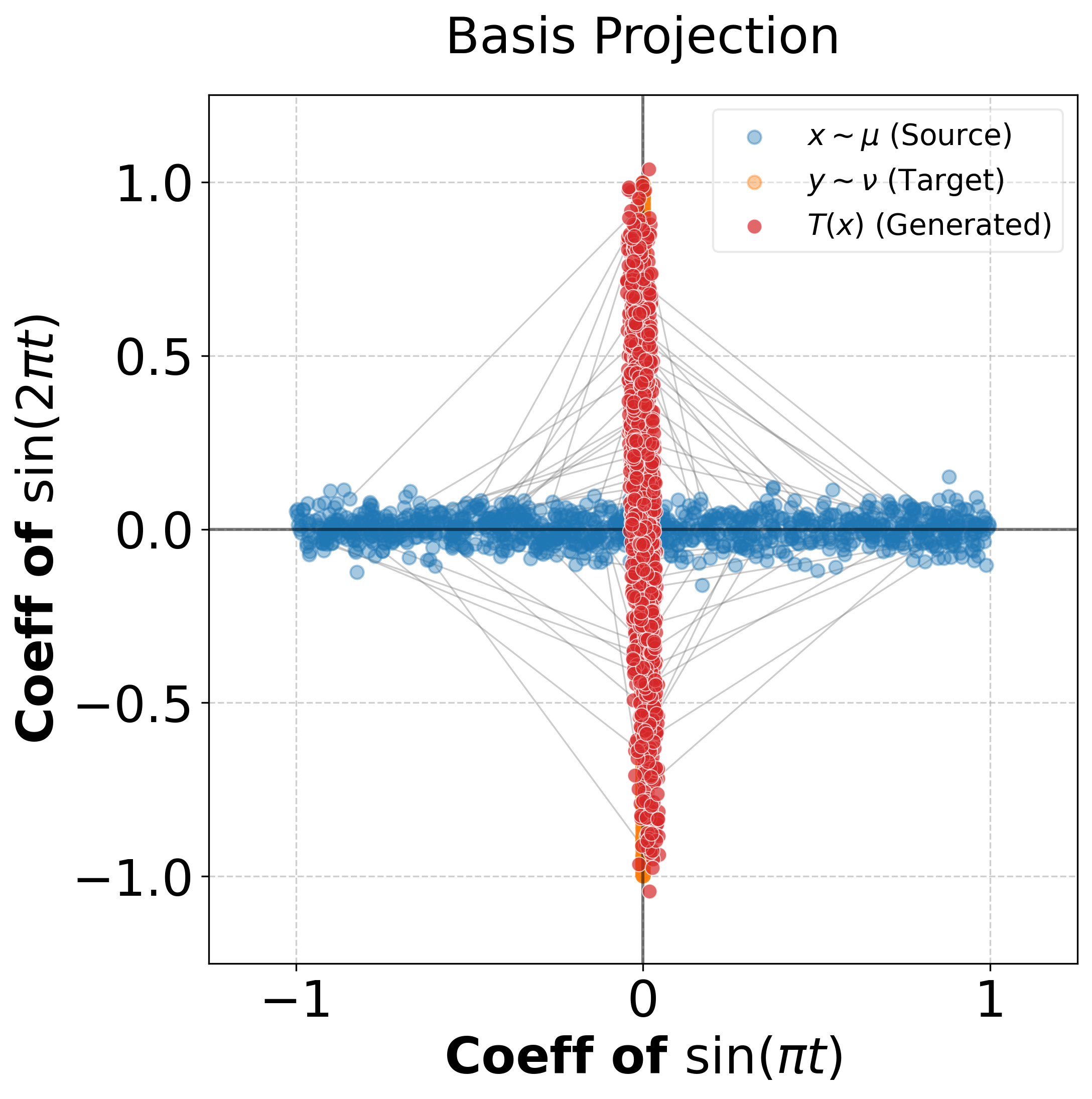}
    \end{subfigure}
    \hfill
    \begin{subfigure}[b]{0.45\linewidth}
        \centering
        \includegraphics[width=\textwidth]{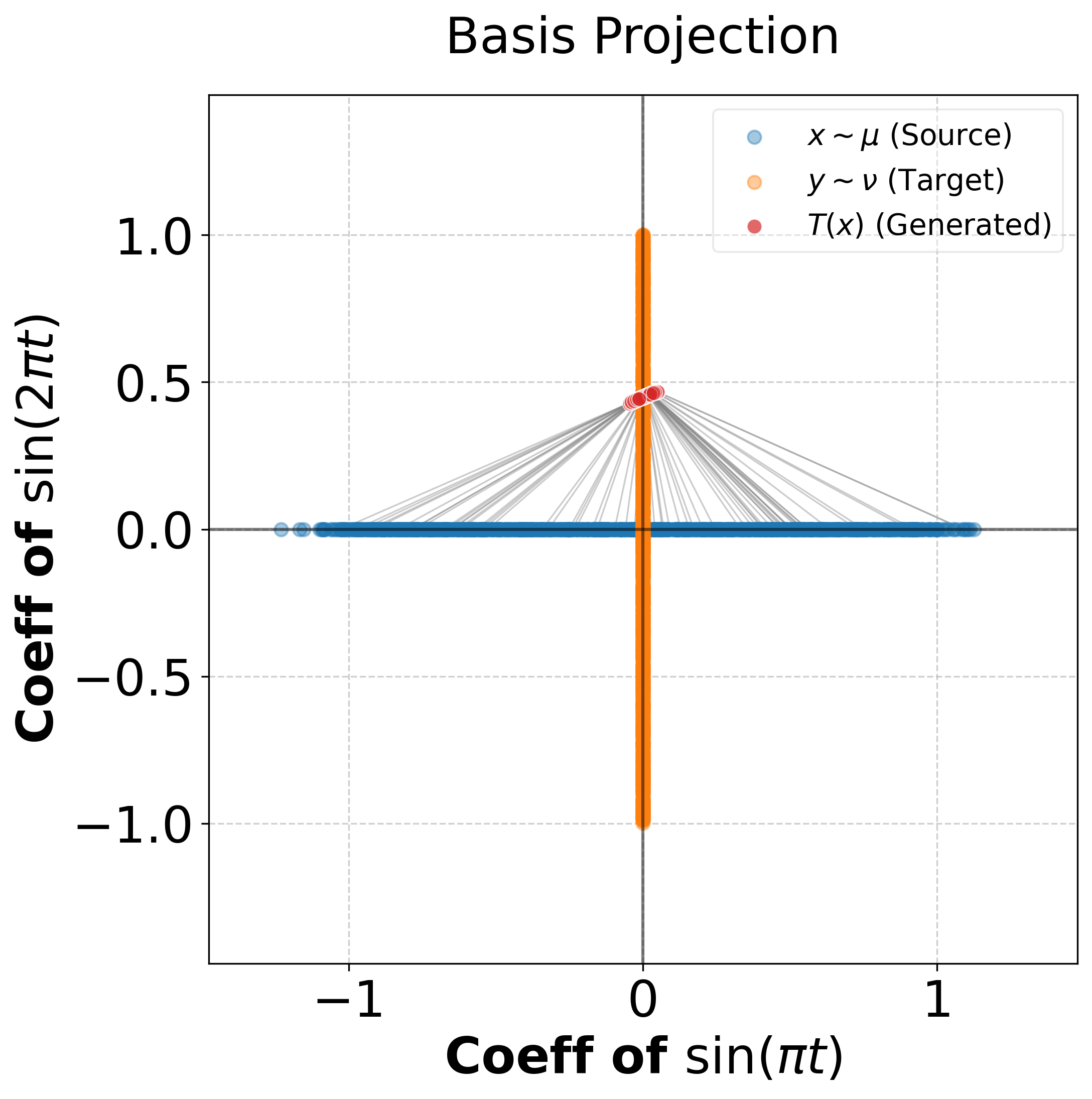}
    \end{subfigure}
    
    \caption{\textbf{Empirical validation of Theorem~\ref{25.12.09.14.29}}. \textbf{Left:} Appropriate smoothing covers singular directions, leading to a unique Monge map. \textbf{Right:} Inappropriate smoothing fails to cover singular directions, resulting in spurious solutions.
    % Left: Appropriate smoothing case. Right: Inappropriate smoothing case.
    }
    \label{fig:smoothing_comparison}
    \vspace{-20pt}
\end{figure}

\subsection{Time Series Imputation Benchmarks} \label{sec:exp_timeseries}
We evaluate \OURS with Smoothing on unpaired time series imputation benchmarks. Time-series data is one of the most widely used benchmarks for machine learning models operating on functional data \citep{ramsay2005functional, jiang2025deepfrc}.
For time-series data, (possibly irregular) observations are regarded as discrete samples from underlying continuous processes defined in a Hilbert space.
Here, we treat the imputation task as an optimal transport problem between the distribution of partially observed sequences (source) and fully observed paths (target). 

Real-world time series data often reside on low-dimensional manifolds (\textit{Manifold Hypothesis}), which implies that partially observed data also follows a singular measure. This makes standard SNOT prone to spurious solutions. By applying \OURS with annealed smoothing, we address these singular distributions. This experiment assesses the practical utility of \OURS in high-dimensional, real-world scenarios where the data inherently possesses functional structures. The imputation performance is evaluated through the Mean Absolute Error (MAE) and Mean Squared Error
(MSE) following \citep{wang2025optimal}.

The results in Table \ref{table:TSI_average} demonstrate that \OURS consistently achieves state-of-the-art or competitive performance across all benchmarks. Our model achieves the best performance in MSE and MAE for ETTh2, ETTm2, and Exchange datasets, and the second-best performance in several other categories.
Specifically, on the Exchange dataset, our model significantly outperforms all baselines in MSE, achieving a value of 0.004, which is an order of magnitude lower than the second-best performer, PWS-I (0.036).
Compared to classical OT baselines such as Sinkhorn and TDM, \OURS provides a significant improvement in accuracy, achieving better or comparable to previous state-of-the-art method \citep{wang2025optimal}.

\section{Conclusion}
We introduce \textbf{HiSNOT}, the first generalization of the Semi-dual Neural Optimal Transport (SNOT) framework to infinite-dimensional Hilbert spaces. We provide the first theoretical analysis of spurious solutions in Hilbert spaces. Based on these, we propose a principled method for learning the true optimal transport plan when a Monge map does not exist.
Our contributions include a rigorous treatment of Gaussian smoothing techniques for transforming singular measures into regular ones, and the derivation of a necessary and sufficient condition for the Gaussian covariance operators required for well-posedness. 
Our experiments demonstrate that our proposed annealed smoothing method successfully addresses failure cases and achieves state-of-the-art results in real-world functional applications.
One limitation of our work is that our convergence guarantee (Theorem \ref{thm:convergence}) holds up to a subsequence. 
This is an inherent characteristic of the ill-posed nature of OT problems where the optimal plan $\pi^{\star}$ may not be unique; thus, the sequence may oscillate between valid solutions. 
Nevertheless, our theorem ensures that any limit point obtained from training is a valid optimal transport plan. 
Empirically, our gradual training scheme exhibited stable convergence behavior in all experiments.

% \appendix

% Acknowledgements should only appear in the accepted version.

%J.-H. Choi, J. Yoon, D. Kwon, J. W. Choi

\section*{Acknowledgements}
J.-H. Choi was supported by a KIAS Individual Grant (MG102701) at the Korea Institute for Advanced Study.
J. Yoon and J. W. Choi were partially supported by the National Research Foundation of Korea (NRF) grant funded by the Korea government (MSIT) (No. RS-2024-00349646). D. Kwon is partially supported by the National Research Foundation of Korea (NRF) grant funded by the Korea government (MSIT) (No. RS-2023-00252516 and No. RS-2024-00408003), the POSCO Science Fellowship of POSCO TJ Park Foundation, and the Korea Institute for Advanced Study. 
J. W. Choi and D. Kwon thank the Center for Advanced Computation in KIAS for providing computing resources.

\section*{Impact Statement}
This paper presents work whose goal is to advance the field of Machine Learning. There are many potential societal consequences of our work, none which we feel must be specifically highlighted here.

% In the unusual situation where you want a paper to appear in the
% references without citing it in the main text, use \nocite
\nocite{langley00}

\bibliography{my_bib}
\bibliographystyle{icml2026}

%%%%%%%%%%%%%%%%%%%%%%%%%%%%%%%%%%%%%%%%%%%%%%%%%%%%%%%%%%%%%%%%%%%%%%%%%%%%%%%
%%%%%%%%%%%%%%%%%%%%%%%%%%%%%%%%%%%%%%%%%%%%%%%%%%%%%%%%%%%%%%%%%%%%%%%%%%%%%%%
% APPENDIX
%%%%%%%%%%%%%%%%%%%%%%%%%%%%%%%%%%%%%%%%%%%%%%%%%%%%%%%%%%%%%%%%%%%%%%%%%%%%%%%
%%%%%%%%%%%%%%%%%%%%%%%%%%%%%%%%%%%%%%%%%%%%%%%%%%%%%%%%%%%%%%%%%%%%%%%%%%%%%%%

\clearpage
\appendix
\onecolumn
\section{Algorithm}
\begin{algorithm}[h]
\caption{Training algorithm of HiSNOT with Gaussian Annealing}
\begin{algorithmic}[1]
\REQUIRE Source measure $\mu$, target measure $\nu$; Orthonormal basis $\{e_k\}$ and summable sequence $\{\lambda_k\}$; Transport operator $T_\theta$ and potential $V_\phi$; Iterations $K$, inner-loop steps $K_T$; Annealing schedule $\{\epsilon_k \}^K_{k=0}$.
\FOR{$k = 0, 1, 2, \dots, K$}
    \STATE Sample batches $\{x_i\}_{i=1}^n \sim \mu$ and $\{y_i\}_{i=1}^n \sim \nu$.
    \STATE Generate $H$-valued Gaussian noise: $\xi = \sum_{j=1}^{\infty} \sqrt{\lambda_j} \zeta_j e_j$ where $\zeta_j \sim \mathcal{N}(0, 1)$.
    \STATE Apply Spectral Smoothing
    \STATE $\tilde{x} \leftarrow x + \epsilon_k \xi$. 
    \STATE Update $\phi$ to maximize:
    \STATE \quad $\mathcal{L}_{\phi} = \frac{1}{n} \sum_{i=1}^n \left[ - V_\phi \left( T_\theta(\tilde{x}_i) \right) + V_\phi(y_i) \right]$.
    
    \FOR{$j = 0, 1, \dots, K_T$}
        \STATE Sample a batch $\{x_i\}_{i=1}^n \sim \mu$ and generate new noise $\xi$.
        \STATE $\tilde{x} \leftarrow x + \epsilon_k \xi$.
        \STATE Update $\theta$ to minimize:
        \STATE \quad $\mathcal{L}_{\theta} = \frac{1}{n} \sum_{i=1}^n \left[ \frac{1}{2} \| \tilde{x}_i - T_\theta(\tilde{x}_i) \|_H^2 - V_\phi \left( T_\theta(\tilde{x}_i) \right) \right]$.
    \ENDFOR
\ENDFOR
\end{algorithmic}
\label{alg:hiSNOT}
\end{algorithm}

\section{Definitions} \label{app:def}
\begin{definition}
\black{Let $H$ be a separable Hilbert space (possibly infinite-dimensional), and let $\mathcal{P}(H)$ denote the set of all probability measures on $H$.}
\begin{enumerate}[leftmargin=*, label=(\roman*)]
    
    \item A measure $\mu\in\mathcal{P}(H)$ is a \emph{nondegenerate Gaussian measure} if for any $h\in H\setminus\{0\}$, the projection $\langle h,x\rangle_{H}$ has strictly positive variance. Specifically, there exist $m_h\in \mathbb{R}$ and $\sigma_h>0$ such that for any interval $(a,b)\subset\mathbb{R}$:
    \begin{equation}
    \mu(\{x\in H: a<\langle h,x\rangle_{H}<b\})
    =\frac{1}{\sqrt{2\pi\sigma_h^2}}\int_a^b \exp\left(-\frac{(t-m_h)^2}{2\sigma_h^2}\right)\mathrm{d}t.
    \end{equation}
    
    \item A Borel set $N\in\mathcal{B}(H)$ is called a \emph{Gaussian null set} if $\mu(N)=0$ for every nondegenerate Gaussian measure $\mu$ on $H$.
    
    \item  A measure $\mu\in\mathcal{P}(H)$ is \textbf{\emph{regular}} if $\mu(N)=0$ for every Gaussian null set $N$.
    We denote by $\mathcal{P}^r(H)$ the set of regular measures on $H$.
\end{enumerate}
\end{definition}

Because standard calculus is insufficient for infinite-dimensional spaces, we employ the \textit{G\^ateaux differential}. This generalizes the directional derivative to Banach spaces, providing the necessary mathematical tools
to rigorously capture functional variations.

\begin{definition}[G\^ateaux Differential]
\label{def:gateaux_derivative}
Let $H$ be a Hilbert space, and $J: H \to \mathbb{R}$ be a real-valued functional.
The G\^ateaux differential of $J$ at $x \in H$ in the direction $y \in H$ is defined as the limit:
\begin{equation}
\nabla_x J(x;y) := \lim_{\epsilon \to 0} \frac{J(x + \epsilon y) - J(x)}{\epsilon},
\end{equation}
provided the limit exists.
If the limit exists for all directions $y \in H$, then $J$ is said to be G\^ateaux differentiable at $x$.
Moreover, if $J$ is G\^ateaux differentiable at $x$, then for convenience, we omit the direction $y$.
\end{definition}

\section{Proof of Theorem \ref{thm:uniqueness}}

\begin{proposition}[Monge and its dual problems in Hilbert space]
\label{25.12.16.14.13}
Let $\mu \in \mathcal{P}_2^r(H), \nu \in \mathcal{P}_2(H)$, and $c(x,y)=\frac{1}{2}\| x-y \|_{H}^2$.
%\begin{enumerate}[leftmargin=*, label=(\roman*), itemsep=-1pt]
        % \item \textbf{Existence and Uniqueness of Monge map:} 
        %\item 
        There exists a unique Monge map $T^\star:H\to H$ in (Eq. \ref{eq:ot_monge}), \textit{i.e.}, $T^{\star}_{\#}\mu=\nu$.
        % \item \textbf{Existence of Kantrovich Potential:} 
        %\item 
        Also, the dual problem (Eq. \ref{eq:kantorovich-semi-dual}) admits an optimal potential $V^\star \in S_c$.
%\end{enumerate}
\end{proposition}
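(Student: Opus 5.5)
The plan is to reduce the statement to the classical theory of optimal transport in Hilbert spaces with quadratic cost, in the spirit of \cite{ambrosio2005gradient}.

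\textbf{Existence of an optimal plan and a potential.} Since $H$ is Polish, $c$ is continuous and nonnegative, and $\mu,\nu\in\mathcal{P}_2(H)$, the Kantorovich problem (Eq.~\ref{eq:Kantorovich}) admits an optimal plan $\pi^\star$ with finite cost $C(\mu,\nu)\le \int\|x\|^2\,\mu(\mathrm{d}x)+\int\|y\|^2\,\nu(\mathrm{d}y)<\infty$. Kantorovich duality then holds with no duality gap. Moreover, the dual supremum is attained by a $c$-concave pair $(V^\star, (V^\star)^c)$ with $V^\star\in S_c$; see \citep[Thm.~5.10]{villani}, which applies since $c(x,y)\le \|x\|^2+\|y\|^2$ is dominated by integrable functions. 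This settles the second claim.

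\textbf{Reduction to convex analysis.} Set
\[
\varphi(x):=\tfrac12\|x\|^2-(V^\star)^c(x)=\sup_{y}\Bigl[\langle x,y\rangle-\tfrac12\|y\|^2+V^\star(y)\Bigr].
\]
This $\varphi$ is a supremum of affine functions, hence convex and lower semicontinuous. It is finite $\mu$-a.e., and in fact finite on the convex hull of a set of full $\mu$-measure, because $(V^\star)^c\in L^1(\mu)$. By optimality, $\pi^\star$ is concentrated on the set where equality holds in $(V^\star)^c(x)+V^\star(y)\le c(x,y)$. Equivalently, $\pi^\star$ is concentrated on the graph of the subdifferential $\partial\varphi$, that is, on $\{(x,y): y\in\partial\varphi(x)\}$.

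\textbf{Key step and main obstacle.} The crux is to show that $\varphi$ is Gâteaux differentiable $\mu$-a.e. This uses the infinite-dimensional Rademacher/Aronszajn-type theorem: a convex function that is continuous on an open convex set is Gâteaux differentiable outside a Gaussian null set. The delicate point is that finiteness of $\varphi$ alone does not give continuity in infinite dimensions. To handle this, I would first restrict to the interior of the domain of $\varphi$. I would then argue that $\mu$ gives no mass to the boundary of this convex domain, since it lies in a closed convex set with empty interior or in the boundary of a convex set, and such sets are Gaussian null. This is where regularity of $\mu$ enters, as in \cite[Thm.~6.2.10]{ambrosio2005gradient}. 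A lower semicontinuous convex function is continuous on the interior of its domain, since $H$ is Banach (barrelled). Granting this, $\partial\varphi(x)=\{\nabla\varphi(x)\}$ for $\mu$-a.e.\ $x$. Hence $\pi^\star=(Id\times T^\star)_\#\mu$ with $T^\star:=\nabla\varphi$, which is Borel measurable as a pointwise limit of difference quotients. It follows that $T^\star_\#\mu=\nu$ and that $T^\star$ is a Monge minimizer, because the Monge cost is at least the Kantorovich cost and here they coincide.

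\textbf{Uniqueness.} Suppose $T_1,T_2$ are two optimal maps. Then $\tfrac12\bigl[(Id\times T_1)_\#\mu+(Id\times T_2)_\#\mu\bigr]$ is again optimal, by linearity of the cost in the plan. By the previous step this average must also be induced by a map, which forces $T_1=T_2$ $\mu$-a.e. This completes the plan.
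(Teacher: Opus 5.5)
\textbf{Verdict: there is a genuine gap, at the step you yourself call the crux.} Your overall plan is the right one: an optimal $c$-concave potential, passage to the convex $\varphi$ whose subdifferential carries $\pi^\star$, single-valuedness of $\partial\varphi$ off a Gaussian null set, then averaging for uniqueness. The paper simply cites \citep[Thm.~6.2.10]{ambrosio2005gradient} for a unique optimal plan induced by a map and \citep[Thm.~6.1.5]{ambrosio2005gradient} for the potential, and then gives the same averaging argument for uniqueness.

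\textbf{The false step.} You claim a closed convex set with empty interior is Gaussian null. This fails in infinite dimensions. Take an orthonormal basis $\{e_k\}$ and $K=\{x\in H:\ |\langle x,e_k\rangle_H|\le 1/k \text{ for all } k\}$. It is compact and convex with empty interior. Yet the nondegenerate Gaussian $\gamma$ with $Qe_k=k^{-4}e_k$ gives it mass $\prod_k\mathbb{P}(|\xi_k|\le k)>0$. More generally, every Gaussian measure is Radon, so it charges some compact set, whose closed convex hull is compact with empty interior.

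\textbf{This breaks the route, not just its justification.} Take $\mu:=\gamma(\cdot\cap K)/\gamma(K)$, which is regular, $\nu:=\delta_0$, and $V^\star(y):=\tfrac12\|y\|_H^2-\sup_{x\in K}\langle x,y\rangle_H$. Then $V^\star\in S_c$, $(V^\star)^c=\tfrac12\|\cdot\|_H^2$ on $K$ and $-\infty$ off $K$, and the dual value equals $C(\mu,\delta_0)$. So $V^\star$ is an optimal potential, but your $\varphi$ is the convex indicator of $K$, with $\operatorname{int}D(\varphi)=\emptyset$ and $\mu(D(\varphi))=1$. An abstract existence theorem such as \citep[Thm.~5.10]{villani} cannot exclude such a potential, so restricting to the interior of the domain gives nothing. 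The other half of your claim, that boundaries of convex sets with nonempty interior are Gaussian null, is true but also needs proof, e.g.\ via log-concavity of $t\mapsto\gamma(tC)$.

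\textbf{The fix: bypass the domain of $\varphi$ by truncating slopes.}
\begin{enumerate}
\item Put $\psi(y):=\tfrac12\|y\|_H^2-V^\star(y)$, so $\varphi=\psi^\ast$. Every optimal plan is concentrated on $E:=\{(x,y):\varphi(x)+\psi(y)=\langle x,y\rangle_H\}$.
\item For $R\in\mathbb{N}$ set $\varphi_R(x):=\sup_{\|y\|_H\le R}\bigl[\langle x,y\rangle_H-\psi(y)\bigr]$. Since $V^\star$ is $c$-concave, it is bounded above on bounded sets. Hence $\varphi_R$ is finite, convex and $R$-Lipschitz on all of $H$.
\item By \citep[Thm.~6.2.3]{ambrosio2005gradient}, $\varphi_R$ is G\^ateaux differentiable outside a Gaussian null set $N_R$.
\item If $(x,y)\in E$ and $\|y\|_H\le R$, then $\varphi_R(x)=\varphi(x)$ and $\varphi_R(z)\ge\varphi_R(x)+\langle z-x,y\rangle_H$ for all $z$. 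So $y\in\partial\varphi_R(x)$, which forces $y=\nabla\varphi_R(x)$ when $x\notin N_R$.
\item Outside the Gaussian null, hence $\mu$-negligible, set $\bigcup_R N_R$, each section $\{y:(x,y)\in E\}$ has at most one point. This gives $\pi^\star=(Id\times T^\star)_\#\mu$ for every optimal plan.
\end{enumerate}
The rest of your argument, including uniqueness, then goes through unchanged.
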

Proposition \ref{25.12.16.14.13} generalizes \textbf{Brenier's classical result on the characterization of optimal maps} to the Hilbert space setting and \black{is derived directly from} \citep[Theorem 6.2.10]{ambrosio2005gradient}. 
%For completeness, we provide a detailed proof in Appendix \ref{26.01.12.22.14}.

\subsection{Proof of Proposition \ref{25.12.16.14.13}}
\label{26.01.12.22.14}
First, we prove the well-posedness of Monge map.

\emph{Step 1. Existence of a Monge map $T^{\star}$.}

By \citep[Theorem 6.2.10]{ambrosio2005gradient}, for the quadratic cost
$$
c(x,y)=\tfrac12\|x-y\|^2,
$$
there exists a unique optimal transport plan $\pi^\star\in\Pi(\mu,\nu)$, and it is induced by a measurable transport map $T^\star\in L^2(H,\mu;H)$:
$$
\pi^\star = (Id\times T^\star)_{\#}\mu.
$$

\emph{Step 2. Uniqueness of a Monge map $T^{\star}$.} 

Assume $T_1^\star,T_2^\star\in L^2(H,\mu;H)$ are two Monge maps such that
$$
(Id\times T_1^\star)_{\#}\mu
= \pi^\star
= (Id\times T_2^\star)_{\#}\mu.
$$
Define a probability measure
$$
\tilde\mu_x := \frac{1}{2}\big( \delta_{T_1^\star(x)} + \delta_{T_2^\star(x)}\big)
\quad\text{and}\quad
\bar\pi := \tilde\mu_x \otimes \mu(\mathrm{d}x).
$$
Since $T_1^\star$ and $T_2^\star$ are optimal,
$$
\int_{H\times H} c(x,y)\bar\pi(\mathrm{d}x,\mathrm{d}y)
= \int_H \frac{ c(x,T_1^\star(x)) + c(x,T_2^\star(x))}{2}\,\mu(\mathrm{d}x)
= \int_H c(x,T_1^\star(x))\,\mu(\mathrm{d}x)
= C(\mu,\nu),
$$
where $C$ is the primal cost.  
$\bar{\pi}\in\Pi(\mu,\nu)$, $\bar{\pi}$ is also an optimal transport plan which is not induced by any transport map.
This is a contradiction to \citep[Theorem 6.2.10]{ambrosio2005gradient}.
Hence $T_1^\star=T_2^\star$ $\mu$-almost surely.

\medskip

Now, we prove the existence of optimal potential $V^{\star}\in S_c$.
By \citep[Theorem 6.1.5]{ambrosio2005gradient}, there exists a maximizing pair $(V^{\star},(V^{\star})^{c})$ such that
$$
(V^{\star})^c(x)+V^{\star}(y)=c(x,y)\quad \pi^{\star}\text{-a.e. in }H\times H.
$$
Therefore, there exists a null set $N\subset H\times H$ such that for all $(x,y)\in H\times H\setminus N$, $(V^{\star})^c(x)+V^{\star}(y)=c(x,y)$.
By Step 1 in the proof of Theorem \ref{25.12.16.14.13},
$$
0=\pi(N)=\mu((Id\times T^{\star})^{-1}(N)),
$$
where
\begin{equation}
\label{26.01.25.17.39}
(I\times T^{\star})^{-1}(N)=\{x\in H:(x,T^{\star}(x))\in N\}=:N_1.
\end{equation}
Then for $x\in H\setminus N_1$, $(x,T^{\star}(x))\in H\times H\setminus N$, thus,
\begin{equation}
\label{25.12.10.11.02}
(V^{\star})^c(x)+V^{\star}(T^{\star}(x))=c(x,T^{\star}(x)).
\end{equation}
Therefore,
\begin{equation}
\label{26.01.26.12.35}
T^{\star}(x)\in \operatorname*{arg\,min}_{y\in H}[c(x,y)-V^{\star}(y)]
\end{equation}
for $\mu$-almost surely $x\in H$.

\subsection{Proof of Theorem \ref{thm:uniqueness} (1)}
\label{26.01.15.16.21}

\begin{lemma}
\label{26.01.25.21.19}
    Let $f:H\to \mathbb{R}$ be a locally bounded convex function.
    Then $f$ is locally Lipschitz continuous.
\end{lemma}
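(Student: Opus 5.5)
The plan is to prove the local statement at each point: a convex function bounded on a ball is Lipschitz on a smaller concentric ball. Fix $x_0\in H$. By local boundedness there exist $r>0$ and $M<\infty$ with $|f(x)|\le M$ for all $x\in B(x_0,2r)$. The goal is then to show that $f$ is Lipschitz on $B(x_0,r)$ with constant $L:=2M/r$. Since $x_0$ is arbitrary, this gives local Lipschitz continuity.

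The key step is a one-dimensional convexity argument along lines. Take distinct $x,y\in B(x_0,r)$ and set $d:=\|x-y\|_H$. Extend the segment beyond $y$ to the point
$$z:=y+\frac{r}{d}(y-x).$$
Then $\|z-x_0\|_H\le\|y-x_0\|_H+r<2r$, so $z\in B(x_0,2r)$ and $|f(z)|\le M$. Next write $y$ as a convex combination of $x$ and $z$:
$$y=\frac{r}{d+r}\,x+\frac{d}{d+r}\,z.$$
Convexity then gives $f(y)\le \frac{r}{d+r}f(x)+\frac{d}{d+r}f(z)$. Rearranging,
$$f(y)-f(x)\le \frac{d}{d+r}\bigl(f(z)-f(x)\bigr)\le \frac{2M}{r}\,d.$$
Exchanging the roles of $x$ and $y$ gives the reverse inequality. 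Together these yield $|f(x)-f(y)|\le L\|x-y\|_H$.

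No step here is a serious obstacle. The argument uses only the vector-space and norm structure of $H$, not compactness, so infinite dimensionality causes no difficulty. The one point that needs care is that the hypothesis is bounded on a neighbourhood of each point, in both directions, rather than merely finite-valued. I will read \emph{locally bounded} in exactly that sense. The two-sided bound $|f|\le M$ on $B(x_0,2r)$ is what controls $f(z)-f(x)$, and it also keeps the extension point $z$ inside the ball where the bound holds.
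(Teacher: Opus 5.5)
Your proof is correct and follows essentially the paper's argument: the same radii $r$ and $2r$, the same extension of the segment to a point $z$ in the larger ball, and the same constant $2M/r$. Your choice $z=y+\frac{r}{d}(y-x)$, placed beyond $y$, is slightly cleaner than the paper's $z=x+\frac{r}{\|x-y\|_H}(y-x)$. The paper's $z$ puts $y$ on the segment $[x,z]$ only when $\|x-y\|_H<r$, so for the remaining pairs it tacitly needs the trivial bound $|f(x)-f(y)|\le 2M\le \frac{2M}{r}\|x-y\|_H$, whereas your convex combination works for every $d>0$.
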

\begin{proof}
    Let $x_0 \in H$ and $r>0$ be such that $|f(u)| \le M$ for all $u \in B_{2r}(x_0)$.
    Let $x,y \in B_r(x_0)$ be two distinct points.
    Define a point $z$ extending the segment from $x$ through $y$ to the boundary of the ball of radius $r$ centered at $x$:
    $$
    z := x + \frac{r}{\|x-y\|_H}(y-x).
    $$
    Then $\|z-x\|_H = r$, and by the triangle inequality, $\|z-x_0\|_H \le \|z-x\|_H + \|x-x_0\|_H < r + r = 2r$, so $z \in B_{2r}(x_0)$.
    Since $y$ lies on the segment connecting $x$ and $z$, we can write $y=(1-\lambda)x+\lambda z$ with $\lambda = \frac{\|y-x\|_{H}}{r} \in (0,1)$.
    By the convexity of $f$,
    $$
    f(y) \le (1-\lambda)f(x) + \lambda f(z) = f(x) + \lambda(f(z)-f(x)).
    $$
    Rearranging the terms yields:
    $$
    f(y)-f(x) \le \lambda(f(z)-f(x)) \le \lambda(|f(z)|+|f(x)|) \le \frac{\|x-y\|_H}{r}(2M).
    $$
    Interchanging the roles of $x$ and $y$, we similarly obtain:
    $$
    f(x)-f(y) \le \frac{2M}{r}\|x-y\|_H.
    $$
    Combining these two inequalities, we conclude:
    $$
    |f(y)-f(x)| \le \frac{2M}{r}\|x-y\|_{H}.
    $$
    Thus, $f$ is Lipschitz continuous on $B_r(x_0)$ with the constant $L=2M/r$.
\end{proof}

\emph{Step 1. G\^teaux differentiability of $V^{\star}$.}

Since $V^{\star}$ is the maximal Kantorovich potential, $V^{\star}$ satisfies
\begin{align*}
    V^{\star}(x) &= \inf_{y\in H} \left[ \frac{1}{2}\|x-y\|_H^2 - (V^{\star})^c(y) \right] \\
    &= \inf_{y\in H} \left[ \frac{1}{2}\|x\|_H^2 - \langle x, y \rangle_H + \frac{1}{2}\|y\|_H^2 - (V^{\star})^c(y) \right] \\
    &= \frac{1}{2}\|x\|_H^2 - \sup_{y\in H} \left[ \langle x, y \rangle_H - \left[ \frac{1}{2}\|y\|_H^2 + (V^{\star})^c(y) \right] \right].
\end{align*}
Let us define the function $\varphi: H \to \mathbb{R}\cup\{+\infty\}$ as
$$
\varphi(x) := \sup_{y\in H} [\langle x, y \rangle_H - \zeta(y)], \quad \text{where } \zeta(y) := \frac{1}{2}\|y\|_H^2 + (V^{\star})^c(y).
$$
Then $V^{\star}(x) = \frac{1}{2}\|x\|_H^2 - \varphi(x)$.
Since $x\mapsto \langle x,y\rangle_{H}-\zeta(y)$ is affine and continuous, $\varphi$ is convex and lower semi-continuous.

Since $V^{\star}\in L^1(H,\nu)$, we have $|V^{\star}| < \infty$ $\nu$-almost surely.
This implies that $\varphi$ is also finite $\nu$-almost surely.
Moreover, if $x,y$ are points where $\varphi$ is finite, then by the convexity of $\varphi$, it is finite on the line segment connecting $x$ and $y$.
Thus, $\varphi$ is finite on the convex hull of $\mathrm{supp}(\nu)$.

By Lemma \ref{26.01.25.21.19}, $\varphi$ is locally Lipschitz continuous on this convex hull, and thus $V^{\star}$ is also locally Lipschitz.
By the infinite-dimensional Rademacher theorem \citep[Theorem 6.2.3]{ambrosio2005gradient}, $V^{\star}$ is G\^ateaux differentiable except on Gaussian null sets.

\emph{Step 2. Uniqueness of the minimizer.}

Suppose that $(x,y)\in H\times H$ satisfies
\begin{equation}
\label{25.12.10.13.21}
V^{\star}(x)+(V^{\star})^c(y)=c(x,y).
\end{equation}
Since $V^{\star}(z)+(V^{\star})^c(y)\le c(z,y)$ for all $z\in H$, the nonnegative function
$$
G(z) := c(z,y)-(V^{\star})^c(y) - V^{\star}(z)
$$
achieves a global minimum at $z=x$.
Thus, the G\^ateaux differential at $x$ satisfies
$$
0 = \nabla G(x)
=\nabla_x c(x,y)
  - \nabla V^{\star}(x).
$$
Hence
\begin{equation}\label{eq:gradient-equality-c}
\nabla V^{\star}(x)
  = \nabla_x c(x,y)
  = x-y.
\end{equation}

If $y_1,y_2$ both satisfy (Eq. \ref{25.12.10.13.21}) for same $x$, then
$$
x-y_1 = \nabla V^{\star}(x) = x-y_2,
$$
so $y_1=y_2$.  
Thus for $\mu$-a.e.\ $x\in H$, there exists a unique $y\in H$
satisfying (Eq. \ref{25.12.10.13.21}).
Due to (Eq. \ref{26.01.26.12.35}), we obtain the desired result.

% \emph{Step 2. $\mathrm{supp}(\nu)$ is unbounded.}

% Let
% $$
% \nu_n(A):=\frac{\nu(A\cap B_n(0))}{\nu(B_n(0))}\in \mathcal{P}_2(H).
% $$

\subsection{Proof of Theorem \ref{thm:uniqueness} (2)}
\textbf{1.} First, we prove that $T^{\star}$ is a minimizer.
By the definition of the $c$-transform, for any $x \in H$,
$$
(V^{\star})^c(x) = \inf_{y \in H} \left[ c(x,y) - V^{\star}(y) \right].
$$
Therefore, for any measurable map $T: H \to H$, the following pointwise inequality holds:
$$
c(x, T(x)) - V^{\star}(T(x)) \geq (V^{\star})^c(x).
$$
Integrating both sides with respect to $\mu$ yields a lower bound for the objective function:
\begin{equation} \label{eq:lower_bound}
\begin{aligned}
    \mathcal{L}(V^{\star},T) &= \int_{H} \left( c(x,T(x)) - V^{\star}(T(x)) \right) \mu(\mathrm{d}x) + \int_{H} V^{\star}(y) \nu(\mathrm{d}y) \\
    &\geq \int_{H} (V^{\star})^c(x) \mu(\mathrm{d}x) + \int_{H} V^{\star}(y) \nu(\mathrm{d}y) = S(\mu,\nu).
\end{aligned}
\end{equation}
Thus,
$$
\inf_{T:H\to H}\mathcal{L}(V^{\star},T) \geq S(\mu,\nu).
$$
Now, consider the Monge map $T^{\star}$. 
By (Eq. \ref{25.12.10.11.02}), the equality holds $\mu$-almost surely:
$$
c(x, T^{\star}(x)) - V^{\star}(T^{\star}(x)) = (V^{\star})^c(x), \quad \mu\text{-a.e.}
$$
Substituting this into the objective function:
$$
\mathcal{L}(V^{\star},T^{\star}) = \int_{H} (V^{\star})^c(x) \mu(\mathrm{d}x) + \int_{H} V^{\star}(y) \nu(\mathrm{d}y) = S(\mu,\nu).
$$
Since $\mathcal{L}(V^{\star},T^{\star})$ attains the lower bound, $T^{\star}$ is a minimizer.

Now, we prove the uniqueness of the minimizer.
Assume that there exists another minimizer $\bar{T}: H \to H$.
Since $\bar{T}$ achieves the lower bound in (Eq. \ref{eq:lower_bound}), we have:
$$
\int_{H} \left( c(x,\bar{T}(x)) - V^{\star}(\bar{T}(x)) \right) \mu(\mathrm{d}x) = \int_{H} (V^{\star})^c(x) \mu(\mathrm{d}x).
$$
Rearranging the terms, we get:
$$
\int_{H} \underbrace{\left( c(x,\bar{T}(x)) - V^{\star}(\bar{T}(x)) - (V^{\star})^c(x) \right)}_{\ge 0} \mu(\mathrm{d}x) = 0.
$$
Since the integrand is non-negative by definition of the $c$-transform, the integral being zero implies that the integrand must be zero $\mu$-almost everywhere.
That is, for $\mu$-almost every $x$:
$$
c(x,\bar{T}(x)) - V^{\star}(\bar{T}(x)) = (V^{\star})^c(x) = \inf_{y \in H} [c(x,y) - V^{\star}(y)].
$$
This implies:
$$
\bar{T}(x) \in \operatorname*{arg\,min}_{y\in H} \left[ c(x,y) - V^\star(y) \right] := \mathcal{D}_x.
$$
Since $\mu \in \mathcal{P}_2^r(H)$ (the source measure is regular), Theorem \ref{thm:uniqueness} guarantees that the set $\mathcal{D}_x$ is a singleton for $\mu$-almost every $x$, specifically $\mathcal{D}_x = \{T^{\star}(x)\}$.
Therefore, $\bar{T}(x) = T^{\star}(x)$ for $\mu$-almost every $x$. 
Hence, the minimizer is unique.

\medskip

\textbf{2.} From the Inner Minimization condition in (Eq. \ref{26.01.25.01.26}), ($\inf_T \mathcal{L}(V^{\star}, T) = \mathcal{L}(V^{\star}, T^{\star})$), $T^{\star}$ must minimize the integrand pointwise.
Specifically, for $\mu$-almost every $x$:
$$
c(x, T^{\star}(x)) - V^{\star}(T^{\star}(x)) = \inf_{y \in H} [c(x,y) - V^{\star}(y)] = (V^{\star})^c(x).
$$
This implies that for a pair $(V^{\star},T^{\star})$ satisfying (Eq. \ref{26.01.25.01.26}),
\begin{equation}
\label{26.01.25.01.55}
\begin{aligned}
    \mathcal{L}(V^{\star}, T^{\star}) &= \int_{H} \left[ c(x, T^{\star}(x)) - V^{\star}(T^{\star}(x)) \right] \mu(\mathrm{d}x) + \int_{H} V^{\star}(y) \nu(\mathrm{d}y) \\
    &= \int_{H} (V^{\star})^c(x) \mu(\mathrm{d}x) + \int_{H} V^{\star}(y) \nu(\mathrm{d}y) =: J(V^{\star}),
\end{aligned}
\end{equation}
where $J(V)$ is the standard Kantorovich dual functional.

On the other hand, since $T^{\star}$ is the Monge map, it satisfies the push-forward constraint $T^{\star}_{\#}\mu = \nu$. 
As discussed before,
\begin{equation}
\label{26.01.25.01.56}
\begin{aligned}
    \mathcal{L}(V^{\star}, T^{\star}) &= \int_{H} c(x, T^{\star}(x)) \mu(\mathrm{d}x) - \underbrace{\left( \int_{H} V^{\star}(T^{\star}(x)) \mu(\mathrm{d}x) - \int_{H} V^{\star}(y) \nu(\mathrm{d}y) \right)}_{=0} \\
    &= \int_{H} c(x, T^{\star}(x)) \mu(\mathrm{d}x) = \text{Primal Cost}.
\end{aligned}
\end{equation}
Combining (Eq. \ref{26.01.25.01.55}) and (Eq. \ref{26.01.25.01.56}),
$$
J(V^{\star}) = \int_{H} (V^{\star})^c(x) \mu(\mathrm{d}x) + \int_{H} V^{\star}(y) \nu(\mathrm{d}y) = \text{Primal Cost}.
$$
By the Kantorovich Duality Theorem, a potential $V$ is optimal (\textit{i.e.}, a Kantorovich potential) if and only if its dual objective value equals the primal cost.
Therefore, $V^{\star}$ is a Kantorovich potential.

\section{Proof of Theorem \ref{25.12.09.14.29}}
\begin{lemma}\label{lem:diag-trace}
Let $H$ be a separable Hilbert space and $\{e_k\}_{k\in\mathbb{N}}$ an orthonormal basis of $H$.
\begin{enumerate}
  \item If $\{\lambda_k\}_{k\in\mathbb{N}}\subset[0,\infty)$ is summable, then
  \begin{equation}\label{25.11.20.11.50}
    Qx:=\sum_{k=1}^{\infty}\lambda_k\langle x,e_k\rangle_{H} e_k,\qquad x\in H,
  \end{equation}
  defines a covariance operator with $\mathrm{tr}\,Q=\sum_{k}\lambda_k$.
  Conversely, if $Q$ is a covariance operator, then there exists a summable sequence $\{\lambda_k\}\subset[0,\infty)$ with $Qe_k=\lambda_k e_k$, hence (Eq. \ref{25.11.20.11.50}) holds.
  \item For $\alpha\in[0,\infty)$ and a covariance operator $Q$ which satisfies (Eq. \ref{25.11.20.11.50}), the fractional operator
        $$
        Q^{\alpha}x:=\sum_{k=1}^{\infty}\lambda_k^{\alpha}\langle x,e_k\rangle_{H} e_k
        $$
        is a self-adjoint, positive, bounded, and linear operator.
        If $\sum_{k=1}^{\infty}\lambda_k^{\alpha}<\infty$, then $Q^{\alpha}$ is also a covariance operator.
  \item The Hilbert space $H$ has an orthogonal decomposition with respect to a trace-class operator $Q$;
        \begin{equation}
        \label{25.11.29.13.56}
        \begin{aligned}
        H&=\overline{\mathrm{Ran}(Q^{\alpha})}\oplus\mathrm{Ker}(Q)\\
        &:=\{x+y:x\in \overline{\mathrm{Ran}(Q^{\alpha})},\,y\in \mathrm{Ker}(Q)\},
        \end{aligned}
        \end{equation}
        where $\alpha\in(0,1)$,
        $$
        \mathrm{Ran}(Q^{\alpha}):=\{x\in H:\exists y\in H \text{ such that } x=Q^{\alpha}y\}, \quad \text{and}\quad \mathrm{Ker}(Q):=\{x\in H: Qx=0\}.
        $$
\end{enumerate}
\end{lemma}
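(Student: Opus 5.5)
The proof will handle the three items of Lemma~\ref{lem:diag-trace} in order. Each is standard spectral theory for compact self-adjoint operators. The one point that needs care is that in item 2 and item 3 the basis $\{e_k\}$ must be an eigenbasis of $Q$.

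For item 1 (forward direction), I will first show that the series $\sum_k \lambda_k\langle x,e_k\rangle e_k$ converges in $H$. This follows from Parseval and $\sup_k\lambda_k\le\sum_k\lambda_k<\infty$, which give $\|Qx\|^2=\sum_k\lambda_k^2|\langle x,e_k\rangle|^2\le(\sup_k\lambda_k)^2\|x\|^2$. That bound yields linearity and boundedness at once. Self-adjointness and positivity come from $\langle Qx,y\rangle=\sum_k\lambda_k\langle x,e_k\rangle\langle e_k,y\rangle$, which is symmetric in $x,y$ and equals $\sum_k\lambda_k|\langle x,e_k\rangle|^2\ge0$ when $y=x$. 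The trace is $\sum_k\langle Qe_k,e_k\rangle=\sum_k\lambda_k$.

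For the converse of item 1, I will note that a trace-class operator is compact. The spectral theorem for compact self-adjoint positive operators then gives an orthonormal eigenbasis of $\overline{\mathrm{Ran}(Q)}$ with eigenvalues $\lambda_k\ge0$. I complete it with an orthonormal basis of $\mathrm{Ker}(Q)$, on which $\lambda_k=0$. Separability of $H$ makes this family countable. The trace is independent of the basis, so it gives summability $\sum_k\lambda_k=\mathrm{tr}\,Q<\infty$, and expanding $x$ in this basis gives (Eq.~\ref{25.11.20.11.50}). I will state explicitly that "$Qe_k=\lambda_ke_k$" refers to this eigenbasis, not to an arbitrary orthonormal basis.

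Item 2 repeats the argument of item 1 with $\lambda_k^\alpha$ in place of $\lambda_k$. Boundedness needs only $\sup_k\lambda_k^\alpha<\infty$, which holds because $\lambda_k\to0$ and so the sequence is bounded; for $\alpha=0$ I use the convention $0^0=1$, which makes the operator the identity. If in addition $\sum_k\lambda_k^\alpha<\infty$, the trace computation from item 1 shows that $Q^\alpha$ is trace class, hence a covariance operator.

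Item 3 is the only step needing an argument beyond bookkeeping. I will show $\overline{\mathrm{Ran}(Q^\alpha)}=\overline{\mathrm{span}}\{e_k:\lambda_k>0\}$ and $\mathrm{Ker}(Q)=\overline{\mathrm{span}}\{e_k:\lambda_k=0\}$.
\begin{itemize}
\item The kernel identity holds because $Qx=0$ if and only if $\lambda_k\langle x,e_k\rangle=0$ for every $k$.
\item For the range, each $e_k$ with $\lambda_k>0$ equals $Q^\alpha(\lambda_k^{-\alpha}e_k)$, which gives one inclusion. Conversely, $Q^\alpha y$ has no component along any $e_k$ with $\lambda_k=0$ (this uses $\alpha>0$), which gives the other.
\end{itemize}
These two closed subspaces are spanned by complementary subsets of one orthonormal basis, so they are orthogonal and their sum is $H$. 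Equivalently, $\overline{\mathrm{Ran}(Q^\alpha)}=\mathrm{Ker}(Q^\alpha)^\perp$ by self-adjointness, and $\mathrm{Ker}(Q^\alpha)=\mathrm{Ker}(Q)$ for $\alpha>0$, which gives the decomposition directly.
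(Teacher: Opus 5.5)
Your proof is correct. For the forward half of item 1, for item 2, and for item 3 it follows the paper's argument closely.

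For item 3, the paper observes $\mathrm{Ker}(Q)=\mathrm{Ker}(Q^{\alpha})$ and cites $\mathrm{Ker}(Q^{\alpha})^{\perp}=\overline{\mathrm{Ran}(Q^{\alpha})}$ for the self-adjoint $Q^{\alpha}$. It then applies the projection theorem, which is exactly the alternative you mention at the end. Your identification of both subspaces, in coordinates, as closed spans of complementary parts of the eigenbasis is equivalent and a little more self-contained. You also correctly write $\overline{\mathrm{span}}$ where the paper writes $\mathrm{span}$.

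The one genuine difference is the converse of item 1. The paper disposes of it by setting $\lambda_k:=\langle Qe_k,e_k\rangle_H$ for the given basis. That yields a summable nonnegative sequence, but it does not give $Qe_k=\lambda_k e_k$ unless $\{e_k\}$ already diagonalizes $Q$. For example, the orthogonal projection onto $\mathrm{span}\{e_1+e_2\}$ is a covariance operator with $Qe_1=\tfrac12(e_1+e_2)$.

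Your route is what actually proves the claim: trace class implies compact, then the spectral theorem for compact self-adjoint positive operators, then completion by a basis of $\mathrm{Ker}(Q)$, then basis-independence of the trace for summability. Your stipulation that $\{e_k\}$ must be chosen as an eigenbasis of $Q$ is necessary, not pedantic. With the basis fixed in advance, as the statement literally reads, the converse is false. Items 2 and 3 rely on the diagonal representation, so this correction is what lets them apply to an arbitrary covariance operator.
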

\begin{proof}
1. Since $\langle Qe_{k},e_k\rangle_{H}=\lambda_k$ and $\{\lambda_k\}_{k\in\mathbb{N}}$ is summable, $Q$ is a trace-class operator.
    For the converse direction, it is straightforward by defining $\lambda_k:=\langle Qe_k,e_k\rangle_{H}$.

2. Boundedness is immediate since $\|Q^{\alpha}\|\leq\sup_k \lambda_k^{\alpha}$. 
Trace-class is also obtained from $\mathrm{tr}(Q^{\alpha})=\sum_k\lambda_k^{\alpha}$.

3. From $Qe_k=\lambda_k e_k$ we have 
$$
\mathrm{Ker}(Q)=\mathrm{span}\{e_k:\lambda_k=0\}=\mathrm{Ker}(Q^{\alpha}).
$$
Since $\{0\}\subset H$ is closed,
    $$
    \mathrm{Ker}(Q^{\alpha})=Q^{-\alpha}(\{0\})
    $$
    is also closed, and is also a subspace because $Q$ is bounded and linear.
    Due to \citep[Theorem 2.19]{conway2019course},
    $$
    \mathrm{Ker}(Q^{\alpha})^{\perp}=\overline{\mathrm{Ran}(Q^{\alpha})}.
    $$
By \citep[Theorem 4.11]{rudin1987real}, we obtain the result.
\end{proof}

\begin{remark}
\label{25.11.29.23.02}
    1. In (Eq. \ref{25.11.29.13.56}), the closure of the space
    $$
    H_Q:=\mathrm{Ran}(Q^{1/2})=\mathrm{Dom}(Q^{-1/2})=\left\{x\in H:\sum_{k:\lambda_k>0}\frac{\langle x,e_k\rangle_{H}^2}{\lambda_k}<\infty,\,\langle x,e_k\rangle_{H}=0\text{ if }\lambda_k=0\right\}
    $$
    is a \emph{Cameron-Martin space} which is a Hilbert space with the inner product by
    $$
    \langle x,y\rangle_{Q}:=\langle Q^{-1/2}x,Q^{-1/2}y\rangle_{H}.
    $$
\end{remark}

\begin{theorem}
\label{25.12.13.20.24}
    Let $\mu$ be a centered Gaussian measure with a covariance operator $Q$ on $H$.
    Then
    \begin{enumerate}
        \item $\mu(\overline{H_Q})=1$.
        \item $\mu(H_Q)=0$ if $\dim(H_Q)=\infty$.
        \item $\mu(H_Q)=1$ if $\dim(H_Q)<\infty$.
    \end{enumerate}
\end{theorem}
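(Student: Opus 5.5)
The plan is to reduce everything to the Karhunen--Lo\`eve representation of Proposition~\ref{prop:hilbert_Gaussian_random_app}. Diagonalize $Q$ as in Lemma~\ref{lem:diag-trace}, with $Qe_k=\lambda_k e_k$ and $\sum_k\lambda_k<\infty$, and realize $\mu$ as the law of $X=\sum_k\sqrt{\lambda_k}\xi_k e_k$. Since $\mu$ is centered with $\mathbb{E}\langle X,e_k\rangle_H^2=\lambda_k$, every $e_k$ with $\lambda_k=0$ gives $\langle X,e_k\rangle_H=0$ almost surely. Hence $X$ lies in the closed subspace $\overline{\mathrm{span}}\{e_k:\lambda_k>0\}$ almost surely.

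For item 1, this subspace equals $\mathrm{Ker}(Q)^\perp=\overline{\mathrm{Ran}(Q^{1/2})}=\overline{H_Q}$ by Lemma~\ref{lem:diag-trace}(3) with $\alpha=1/2$. So item 1 follows directly.

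For item 3, suppose $\dim(H_Q)<\infty$. Then only finitely many $\lambda_k$ are positive. In that case $H_Q=\mathrm{span}\{e_k:\lambda_k>0\}$ is already closed, by the description in Remark~\ref{25.11.29.23.02}, where the condition $\sum_{k:\lambda_k>0}\langle x,e_k\rangle_H^2/\lambda_k<\infty$ is a finite sum. Item 3 then reduces to item 1.

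Item 2 is the main obstacle. Here infinitely many $\lambda_k$ are positive; relabel them as $\lambda_{k_j}>0$ for $j\in\mathbb{N}$. By Remark~\ref{25.11.29.23.02}, $X\in H_Q$ if and only if
\[
S:=\sum_{j=1}^\infty \frac{\langle X,e_{k_j}\rangle_H^2}{\lambda_{k_j}}=\sum_{j=1}^\infty \xi_{k_j}^2<\infty,
\]
given that the kernel coordinates already vanish almost surely. So the event $\{X\in H_Q\}$ is the event that a sum of infinitely many i.i.d.\ $\chi^2_1$ variables converges.

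To show $\mathbb{P}(S<\infty)=0$, there are two routes. The first is the strong law of large numbers: $\frac1n\sum_{j\le n}\xi_{k_j}^2\to 1$ almost surely, so the partial sums diverge almost surely. The second is more elementary: compute $\mathbb{E}e^{-S}=\lim_n\prod_{j\le n}(1+2)^{-1/2}=0$. Since $e^{-S}\ge 0$, this forces $e^{-S}=0$ almost surely, i.e.\ $S=\infty$ almost surely, by monotone convergence. I would present the Laplace-transform argument because it is self-contained.

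One measurability check remains: $H_Q$ must be a Borel set. This holds because $H_Q=\bigcup_M\{x:\sum_j\langle x,e_{k_j}\rangle_H^2/\lambda_{k_j}\le M\}\cap\mathrm{Ker}(Q)^\perp$. Each set in the union is closed, since it is an intersection over $N$ of the closed sets defined by the partial sums up to $N$.
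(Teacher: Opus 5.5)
Your proposal is correct and follows essentially the same route as the paper. Both arguments realize $\mu$ as the law of $\sum_k\sqrt{\lambda_k}\xi_k e_k$, use the coordinate description of $H_Q$ from Remark~\ref{25.11.29.23.02}, and show that $\sum_{k:\lambda_k>0}\xi_k^2$ diverges almost surely; the paper does this last step with the strong law of large numbers, your first option. Your small deviations are all sound and slightly tidier than the paper's text. You use the Laplace transform $\mathbb{E}e^{-S}=0$ in place of the SLLN, and you deduce item 3 from item 1 because a finite-dimensional $H_Q$ is closed, where the paper uses the Markov bound $\mathbb{P}(\sum_{k:\lambda_k>0}\xi_k^2>t)\le n/t$. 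In item 1 you test only the countably many basis vectors spanning $\mathrm{Ker}(Q)$, which avoids the paper's unjustified passage from ``$\langle X,h\rangle_H=0$ a.s.\ for each $h\in\mathrm{Ker}(Q)$'' to $X\in\mathrm{Ker}(Q)^\perp$ a.s. You also verify that $H_Q$ is Borel, which the paper leaves implicit.
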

\begin{proof}
Let $X$ be a $H$-valued Gaussian random vector defined in (Eq. \ref{25.11.29.22.58}).
    By Proposition \ref{prop:hilbert_Gaussian_random_app}, $\mu=\mathrm{Law}(X)= \mathcal{N}(0,Q)$.

    1. Since $\langle X,h\rangle_{H}\sim\mathcal{N}(\langle m,h\rangle_{H},\langle Qh,h\rangle_{H})$, for $h\in \mathrm{Ker}(Q)$, $\mathbb{E}[\langle X, h\rangle_{H}^2]=\langle Qh,h\rangle_{H}=0$.
    By Chebyshev's inequality
    $$
    \mathbb{P}(|\langle X,h\rangle_{H}|>\varepsilon)\leq \frac{\mathbb{E}[|\langle X,h\rangle_{H}|^2]}{\varepsilon^2}=0
    $$
    for all $\varepsilon>0$.
    Therefore, $\langle X,h\rangle_{H}=0$ almost surely, and $X\in \mathrm{Ker}(Q)^{\perp}$ almost surely, thus
    $$
    \mu(\overline{H_Q})=1.
    $$

    2. By the definition of $H_Q$ anc $X$,
    $$
    X\in H_Q \Longleftrightarrow \sum_{k:\lambda_k>0}\frac{\langle X,e_k\rangle_{H}^2}{\lambda_k}=\sum_{k:\lambda_k>0}\xi_k^2<\infty.
    $$
    One can directly check that
    $$
    \dim(H_Q)=\#(k\text{ such that }\lambda_k>0).
    $$
    Since $\dim(H_Q)=\infty$ and $\xi_k^2\sim \chi^1(1)$ are independent and identically distributed random variables,
    the law of large numbers yields
    $$
    \lim_{n\to\infty}\frac{1}{n}\sum_{k\in D_n}\xi_k^2=1
    $$
    almost surely, where
    $$
    D_1\subseteq D_1 \subseteq \cdots \subseteq D_n\subseteq\cdots \subseteq\{k\in\mathbb{N}:\lambda_k>0\},\quad |D_n|=n.
    $$
    Therefore, $\sum_{k:\lambda_k>0}\xi_k^2=\infty$ almost surely, and this implies that $\mathbb{P}(X\in H_Q)=\mu(H_Q)=0$.

    3. If $\dim(H_Q)=n<\infty$, then
    $$
    \mathbb{E}\left[\sum_{k:\lambda_k>0}\xi_k^2\right]=n
    $$
    By Chebyshev's inequality,
    $$
    \mathbb{P}\left(\sum_{k:\lambda_k>0}\xi_k^2=\infty\right)=\lim_{\lambda\to\infty}\mathbb{P}\left(\sum_{k:\lambda_k>0}\xi_k^2>\lambda\right)\leq \lim_{\lambda\to\infty}\frac{n}{\lambda}=0.
    $$
    Therefore, $\mathbb{P}(X\in H_Q)=\mu(H_Q)=1$.
 \end{proof}

\begin{theorem}[Cameron-Martin theorem]
\label{25.12.08.19.00}
    Let $\mu$ be a centered Gaussian measure with a covariance operator $Q$.
    Then $\mu_h$ is absolutely continuous with respect to $\mu$ if and only if $h\in \overline{H_Q}$, where
    $$
    \mu_h(A):=\mu(A-h),\quad A-h:=\{x-h:x\in A\}.
    $$
\end{theorem}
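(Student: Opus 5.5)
The plan is to split the equivalence into its two directions and use the structure of $\mu=\mathcal{N}(0,Q)$ from Theorem \ref{25.12.13.20.24} and the decomposition $H=\overline{\mathrm{Ran}(Q^{1/2})}\oplus\mathrm{Ker}(Q)$ of Lemma \ref{lem:diag-trace}(3). Throughout I realize $\mu$ as the law of $X=\sum_k\sqrt{\lambda_k}\xi_k e_k$ (Proposition \ref{prop:hilbert_Gaussian_random_app}). Then $\mu_h$ is the law of $X+h$, and everything reduces to comparing the laws of the coordinate sequences $(\sqrt{\lambda_k}\xi_k)_k$ and $(\sqrt{\lambda_k}\xi_k+\langle h,e_k\rangle_H)_k$.

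\emph{Necessity.} Suppose $h\notin\overline{H_Q}$. By Lemma \ref{lem:diag-trace}(3), $h$ has a nonzero component $h_K=\pi_K h\in\mathrm{Ker}(Q)$. The set $\overline{H_Q}=\mathrm{Ker}(Q)^\perp$ is a closed subspace, so it and its translate $\overline{H_Q}+h$ are disjoint: a point in both would give $h\in\overline{H_Q}$. By Theorem \ref{25.12.13.20.24}(1), $\mu(\overline{H_Q})=1$, while
\[
\mu_h(\overline{H_Q})=\mu(\overline{H_Q}-h)\le 1-\mu(\overline{H_Q})=0 .
\]
Then $\mu_h(H\setminus\overline{H_Q})=1$ but $\mu(H\setminus\overline{H_Q})=0$, so $\mu_h\not\ll\mu$. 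This direction is routine.

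\emph{Sufficiency.} I first treat $h\in H_Q$ by an explicit density. The coordinates with $\lambda_k=0$ are unaffected, since $h\perp\mathrm{Ker}(Q)$. For the coordinates with $\lambda_k>0$, the $k$-th one-dimensional shifted Gaussian has density
\[
\exp\!\Big(\tfrac{h_k x_k}{\lambda_k}-\tfrac{h_k^2}{2\lambda_k}\Big),\qquad h_k=\langle h,e_k\rangle_H ,
\]
with respect to the unshifted one. Multiplying over the first $n$ such indices gives densities $\rho_n$, which form a positive $\mu$-martingale. Their $L^2(\mu)$ norms equal $\exp\big(\sum_{k\le n}h_k^2/\lambda_k\big)$, and these are bounded exactly because $\|h\|_Q^2=\sum h_k^2/\lambda_k<\infty$ (Remark \ref{25.11.29.23.02}). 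So $\rho_n$ converges in $L^1(\mu)$ to a density $\rho$. Since cylinder sets generate $\mathcal{B}(H)$, this $\rho$ satisfies $\mathrm{d}\mu_h=\rho\,\mathrm{d}\mu$, which gives absolute continuity.

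\emph{Passage from $H_Q$ to $\overline{H_Q}$.} This is the step I expect to be the main obstacle. When $\dim H_Q<\infty$, $H_Q$ is closed, so $\overline{H_Q}=H_Q$ and the previous paragraph already gives the full statement; this is consistent with Theorem \ref{25.12.13.20.24}(3). In infinite dimensions I would first try approximating $h$ by $h_n\in H_Q$ and passing absolute continuity to the limit. This fails as it stands, because absolute continuity is not preserved under weak convergence of $\mu_{h_n}$ to $\mu_h$. By Kakutani's dichotomy for the product measures above, $\mu_h$ and $\mu$ are either equivalent or mutually singular, and they are equivalent precisely when $\sum h_k^2/\lambda_k<\infty$. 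So for $h\in\overline{H_Q}\setminus H_Q$ with $\dim H_Q=\infty$, $\mu_h\perp\mu$, by the same law-of-large-numbers mechanism used in Theorem \ref{25.12.13.20.24}(2).

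Consequently, the statement as worded is established by the argument above only when $\dim H_Q<\infty$. In infinite dimensions the argument proves it with $\overline{H_Q}$ replaced by $H_Q$ in the sufficiency direction, which is the standard Cameron--Martin criterion. I have not found a way to establish sufficiency for $h\in\overline{H_Q}\setminus H_Q$, and the dichotomy indicates that $\mu_h\ll\mu$ actually fails for such $h$. Any later use of this theorem should therefore rely only on shifts $h\in H_Q$, or on the finite-dimensional case.
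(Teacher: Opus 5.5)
Your analysis is correct, and the defect you found is in the statement, not in your argument. The paper gives no proof of its own here; it only cites Hairer's notes, and the classical Cameron--Martin theorem characterizes $\mu_h\ll\mu$ by $h\in H_Q$, the Cameron--Martin space itself, not by $h\in\overline{H_Q}$. The two conditions coincide exactly when $\dim H_Q<\infty$, since the range of the compact operator $Q^{1/2}$ is closed only in that case.

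Your necessity argument, which in fact gives $\mu_h\perp\mu$ for $h\notin\overline{H_Q}$, is sound. So is your martingale construction of the density for $h\in H_Q$; there, $\exp(\sum_{k\le n}h_k^2/\lambda_k)$ is the squared $L^2(\mu)$ norm of $\rho_n$, which changes nothing. You therefore have a self-contained proof of the correct theorem, which the paper does not supply.

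Your Kakutani argument for failure on $\overline{H_Q}\setminus H_Q$ is also right. Here is an explicit instance of the law-of-large-numbers mechanism you allude to. Take $\lambda_k=k^{-2}$ for all $k$, so that $\mathrm{Ker}(Q)=\{0\}$ and $\overline{H_Q}=H$, and let $h=\sum_k k^{-1}e_k\in H$. Then $\sum_k\langle h,e_k\rangle_H^2/\lambda_k=\infty$, so $h\notin H_Q$. Consider the Borel set
\[
E:=\Big\{x\in H:\ \frac{1}{n}\sum_{k=1}^{n}k\,\langle x,e_k\rangle_H\to 0\Big\}.
\]
The strong law of large numbers gives $\mu(E)=\mathbb{P}\big(\frac1n\sum_{k\le n}\xi_k\to 0\big)=1$. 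On the other hand, $\mu_h(E)=\mathbb{P}(X+h\in E)=\mathbb{P}\big(\frac1n\sum_{k\le n}(\xi_k+1)\to 0\big)=0$. Hence $\mu_h\not\ll\mu$ even though $h\in\overline{H_Q}$.

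Your warning about later uses is warranted. In Step 3 of the proof of Theorem \ref{25.12.09.14.29}, the paper invokes this theorem to pass from $\gamma_Q(A_k')=0$ to $\gamma_Q(A_k'-h)=0$ for every $h\in\overline{H_Q}$, and then integrates over $h\sim\mu_k$. That is precisely the false part of the statement. Since $\mu_k$ need not charge $H_Q$ at all (compare Theorem \ref{25.12.13.20.24}(2)), that step does not go through as written and would need a different justification.
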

\begin{proof}
    See \citep[Theorem 4.4]{hairer2009introduction}.
\end{proof}

\begin{lemma}
\label{25.12.09.11.49}
Let $\mu$ be a centered nondegenerate Gaussian measure on $H$.
Then for every  $\nu\in\mathcal P(H)$,
the convolution $\mu\ast\nu$ belongs to $\mathcal P^r(H)$.
\end{lemma}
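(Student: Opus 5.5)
Let $N$ be a Gaussian null set. We must show $(\mu\ast\nu)(N)=0$. The plan is to unfold the convolution and reduce the claim to a statement about a single translated Gaussian measure. By the definition of the convolution,
\begin{equation*}
(\mu\ast\nu)(N)=\int_H \mu(N-x)\,\nu(\mathrm{d}x),
\end{equation*}
so it suffices to show that $\mu(N-x)=0$ for every $x\in H$. Writing $\mu_x(A):=\mu(A-x)$ as in Theorem \ref{25.12.08.19.00}, this amounts to $\mu_x(N)=0$ for all $x$. Measurability of $x\mapsto\mu(N-x)$ holds because it is the section integral of the Borel set $\{(y,x):y+x\in N\}$, by Tonelli's theorem.

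The key observation is that each translate $\mu_x$ is itself a nondegenerate Gaussian measure. For $h\neq0$, the pushforward of $\mu_x$ under $z\mapsto\langle h,z\rangle_H$ is the law of $\langle h,Y+x\rangle_H$ with $Y\sim\mu$. This is $\mathcal N(\langle h,x\rangle_H,\langle Qh,h\rangle_H)$, and its variance $\sigma_h^2=\langle Qh,h\rangle_H>0$ is strictly positive by nondegeneracy of $\mu$. So $\mu_x$ satisfies the defining property of a nondegenerate Gaussian measure in the appendix definition, with mean $m_h=\langle h,x\rangle_H$. The definition of nondegenerate Gaussian measure allows an arbitrary mean $m_h$, so no Cameron--Martin argument is needed. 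Since $N$ is a Gaussian null set, $\mu_x(N)=0$ for every $x$, and integrating against $\nu$ gives $(\mu\ast\nu)(N)=0$, i.e.\ $\mu\ast\nu\in\mathcal P^r(H)$.

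The main point to check is that the paper's definition of Gaussian null set really quantifies over non-centered nondegenerate Gaussians; the appendix definition does allow general $m_h$. If instead only centered Gaussians were intended, I would invoke Theorem \ref{25.12.08.19.00}: nondegeneracy gives $\mathrm{Ker}(Q)=\{0\}$, so $\overline{H_Q}=H$ by Lemma \ref{lem:diag-trace}(3), hence $\mu_x\ll\mu$ for all $x$ and $\mu_x(N)=0$ follows from $\mu(N)=0$. Either route closes the argument, and the remaining steps are routine measure theory.
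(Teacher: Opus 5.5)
Your main argument is correct and is essentially the paper's proof. The paper likewise shows, via the characteristic function of $Y=X+x$, that each translate $\mu_x=\mathcal N(x,Q)$ is itself a nondegenerate Gaussian measure, so $\mu_x(N)=0$ for every Gaussian null set $N$, and then integrates $\mu(N-x)$ against $\nu$.

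You should discard your fallback route, though. The correct Cameron--Martin theorem gives $\mu_x\ll\mu$ only for $x\in H_Q=\mathrm{Ran}(Q^{1/2})$, not for all $x\in\overline{H_Q}$; the version in Theorem~\ref{25.12.08.19.00} is misstated. For nondegenerate $Q$ in infinite dimensions, $H_Q$ is a proper, $\mu$-null subspace of $H$, and $\mu_x\perp\mu$ whenever $x\notin H_Q$, so the claim ``$\mu_x\ll\mu$ for all $x$'' is false.
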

\begin{proof}
Let $X\sim\mathcal N(0,Q)$ so that $\mathrm{Law}(X)=\mu$.  
For each $x\in H$, consider the translated random variable $Y := X + x$.
Its characteristic function satisfies
$$
\mathbb{E}\big[\mathrm{e}^{\mathrm{i}\langle Y,h\rangle_{H}}\big]
= \mathrm{e}^{\mathrm{i}\langle x,h\rangle_{H}}\,
  \exp\left(-\tfrac12\langle Qh,h\rangle_{H}\right),
\qquad h\in H,
$$
hence
$$
\mathrm{Law}(Y)=:\mu_x=\mathcal N(x,Q).
$$
Thus, $\mu_x$ is a nondegenerate Gaussian measure for every $x\in H$.
In particular, if $B\subset H$ is a Gaussian null set, then by definition
$$
\mu_x(B)=0
\qquad\text{for all }x\in H.
$$
Now, let $B$ be a Gaussian null set in $H$.  
Using the definition of convolution,
$$
(\mu\ast\nu)(B)
= \int_H \mu(B-x)\,\nu(\mathrm{d}x)
= \int_H \mu_x(B)\,\nu(\mathrm{d}x)
= 0.
$$
Hence, $\mu\ast\nu$ assigns zero measure to every Gaussian null set,
so $\mu\ast\nu\in\mathcal P^r(H)$.
\end{proof}

Let 
    $$
    T:(h,k)\in\overline{H_Q}\times\mathrm{Ker}(Q)\mapsto h+k\in H,\quad T^{-1}:h\in H\mapsto (\pi_Qh,\pi_Kh)\in\overline{H_Q}\times\mathrm{Ker}(Q),
    $$
    where $\pi_Q:H\to \overline{H_Q}$ and $\pi_K:H\to\mathrm{Ker}(Q)$ are projections.
    Then $T$ is an unitary map.

\begin{lemma}
\label{25.12.08.19.47}
Let $\gamma$ be a centered Gaussian measure with a covariance operator $Q$.
For $\mu\in\mathcal{P}(H)$, set
$$
\tilde{\mu} := T^{-1}_{\#}\mu, \qquad 
\mu_K := (\mathrm{pr}_2)_{\#}\tilde{\mu}=(\pi_K)_{\#}\mu,
$$
where $\mathrm{pr}_2:(h,k)\in\overline{H_Q}\times\mathrm{Ker}(Q)\mapsto k$.
Let $\rho := \gamma\ast\mu$ and $\rho' := T^{-1}_{\#}\rho$. 
Then for $\mu_K$-almost every $k\in\mathrm{Ker}(Q)$, there exists a family $\mu_k\in \mathcal{P}(\overline{H_Q})$ such that
$$
\tilde{\mu}(\mathrm dh,\mathrm dk)
= \mu_k(\mathrm dh)\,\mu_K(\mathrm dk).
$$
Moreover, $\rho'$ admits the disintegration
$$
\rho'(\mathrm dh,\mathrm dk)
= \rho'_k(\mathrm dh)\,\mu_K(\mathrm dk),
\qquad \text{for }\mu_K\text{-almost every }k\in K,
$$
and for such $k$,
$$
\rho'_k = \gamma_Q \ast \mu_k,
\qquad \gamma_Q := (\pi_Q)_{\#}\gamma.
$$
\end{lemma}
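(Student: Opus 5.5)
We reduce everything to the product structure given by the unitary map $T$, which identifies $H$ with $\overline{H_Q}\times\mathrm{Ker}(Q)$. The first assertion is a standard disintegration. Both $\overline{H_Q}$ and $\mathrm{Ker}(Q)$ are closed subspaces of a separable Hilbert space, hence Polish. So the disintegration theorem for Borel probability measures on a product of Polish spaces, applied to $\tilde\mu=T^{-1}_{\#}\mu$ and its second marginal $\mu_K=(\mathrm{pr}_2)_{\#}\tilde\mu$, yields a Borel family $k\mapsto\mu_k\in\mathcal P(\overline{H_Q})$, unique up to $\mu_K$-null sets, with $\tilde\mu(\mathrm dh,\mathrm dk)=\mu_k(\mathrm dh)\mu_K(\mathrm dk)$. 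The identity $(\mathrm{pr}_2)_{\#}\tilde\mu=(\pi_K)_{\#}\mu$ is immediate because $\mathrm{pr}_2\circ T^{-1}=\pi_K$.

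Next we describe $\gamma$ in these coordinates. By part 1 of Theorem \ref{25.12.13.20.24}, $\gamma(\overline{H_Q})=1$, so $\pi_K=0$ holds $\gamma$-a.s. Hence $T^{-1}_{\#}\gamma=\gamma_Q\otimes\delta_0$ with $\gamma_Q=(\pi_Q)_{\#}\gamma$. Let $\mathbf G\sim\gamma$ and $\mathbf X\sim\mu$ be independent, so that $\rho=\mathrm{Law}(\mathbf G+\mathbf X)$. Since $T^{-1}$ is linear,
\[
T^{-1}(\mathbf G+\mathbf X)=(\pi_Q\mathbf G+\pi_Q\mathbf X,\ \pi_K\mathbf X)\quad\text{a.s.}
\]
The noise therefore only moves the first coordinate. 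In particular the second marginal of $\rho'$ is again $\mu_K$.

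To identify the conditional laws, fix bounded Borel test functions $f$ on $\overline{H_Q}$ and $g$ on $\mathrm{Ker}(Q)$. By independence and Fubini,
\[
\int f(h)g(k)\,\rho'(\mathrm dh,\mathrm dk)=\int_{\mathrm{Ker}(Q)}g(k)\left[\int\!\!\int f(u+h)\,\gamma_Q(\mathrm du)\,\mu_k(\mathrm dh)\right]\mu_K(\mathrm dk),
\]
where we substituted the disintegration of $\tilde\mu$ for the law of $(\pi_Q\mathbf X,\pi_K\mathbf X)$. The inner bracket equals $\int f\,\mathrm d(\gamma_Q\ast\mu_k)$. The map $k\mapsto\gamma_Q\ast\mu_k$ is a measurable kernel, since $k\mapsto\int f(u+h)\,\mu_k(\mathrm dh)$ is measurable for each $u$, and we can integrate in $u$ by Fubini. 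Product functions $f\otimes g$ determine measures on the product, so
\[
\rho'(\mathrm dh,\mathrm dk)=(\gamma_Q\ast\mu_k)(\mathrm dh)\,\mu_K(\mathrm dk).
\]
By uniqueness of disintegration, $\rho'_k=\gamma_Q\ast\mu_k$ for $\mu_K$-a.e. $k$.

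The main obstacle is only a matter of care rather than depth. We must justify that $T^{-1}_{\#}\gamma$ is concentrated on $\overline{H_Q}\times\{0\}$, which is exactly part 1 of Theorem \ref{25.12.13.20.24}. We must also check measurability of $k\mapsto\gamma_Q\ast\mu_k$ so that uniqueness of disintegration applies. If that measurability is delicate, we will first prove the identity for continuous bounded $f$ and extend it to all bounded Borel $f$ by a monotone class argument.
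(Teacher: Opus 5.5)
Your proposal is correct and follows essentially the same route as the paper. Realize $\rho$ as the law of $X+Z$ with $X\sim\gamma$, $Z\sim\mu$ independent; use $\gamma(\overline{H_Q})=1$ to get $\pi_K X=0$ a.s., so that $T^{-1}(X+Z)=(\pi_Q X+\pi_Q Z,\pi_K Z)$; disintegrate the law of $(\pi_Q Z,\pi_K Z)$; and test against products $\varphi(h)\psi(k)$ to identify the conditional law as $\gamma_Q\ast\mu_k$. Your final step identifies all of $\rho'$ as $(\gamma_Q\ast\mu_k)(\mathrm dh)\,\mu_K(\mathrm dk)$ via the product $\pi$-system, checks measurability of $k\mapsto\gamma_Q\ast\mu_k$, and then invokes uniqueness of disintegration; this is slightly more careful than the paper's direct comparison ``for all bounded Borel $\varphi$, for $\mu_K$-a.e.\ $k$'', which tacitly needs a countable determining class to exchange the quantifiers.
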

\begin{proof}
Let $(\Omega,\mathcal F,\mathbb P)$ be a probability space, and let 
$X,Z:\Omega\to H$ be independent random vectors such that
$$
\mathrm{Law}(X)=\gamma=\mathcal N(0,Q),
\qquad 
\mathrm{Law}(Z)=\mu.
$$
Recall the orthogonal decomposition $H = \overline{H_Q} \oplus \mathrm{Ker}(Q)$ and the
unitary map $T:(h,k)\overline{H_Q}\times \mathrm{Ker}(Q)\mapsto h+k\in H$.
We write
$$
(Y,\tilde Y) := T^{-1}(X) = (\pi_{Q}X,\pi_K X), 
\qquad
(Z_Q,Z_K) := T^{-1}(Z) = (\pi_{Q}Z,\pi_K Z),
$$
so that $X = Y + \tilde Y$ and $Z = Z_Q + Z_K$.

By Theorem \ref{25.12.13.20.24}, we know that $\pi_K X = 0$ almost surely, \textit{i.e.}, $\tilde Y = 0$ a.s., hence $X = Y$ a.s. with $Y\in \overline{H_Q}$.
In particular,
$\mathrm{Law}(Y)=\mu_Q$ is a nondegenerate Gaussian measure on $\overline{H_Q}$.

Define
$$
W := X+Z, \qquad (W_Q,W_K) := T^{-1}(W) 
= (\pi_{Q}W,\pi_K W).
$$
Using $X=Y$ a.s. and $Z=Z_Q+Z_K$, we have
$$
W = X+Z = Y + Z_Q + Z_K,
$$
and therefore
$$
(W_Q,W_K)= (\pi_{Q}W,\pi_K W)= (Y+Z_Q,\ Z_K).
$$
Thus $\rho' = T^{-1}_{\#}\rho = \mathrm{Law}(W_Q,W_K)$, and its $\mathrm{Ker}(Q)$-marginal is
$$
(\mathrm{pr}_2)_{\#}\rho' = \mathrm{Law}(W_K) = \mathrm{Law}(Z_K) = \mu_K.
$$

Next we apply the disintegration theorem to the joint law of $(Z_Q,Z_K)$.
By \citep[Theorem 5.3.1]{ambrosio2005gradient}, there exists a 
$\mu_K$-measurable family $\{\mu_k\}_{k\in \mathrm{Ker}(Q)}$ of probability measures on $\overline{H_Q}$ such that
$$
\mathrm{Law}(Z_Q,Z_K)
= \mu_k(\mathrm dh)\mu_K(\mathrm dk),
$$
that is,
$$
\mathbb P\big(Z_Q\in E,\ Z_K\in B\big)= \int_B \mu_k(E)\mu_K(\mathrm dk)
$$
for all Borel sets $E\subset \overline{H_Q}$ and $B\subset \mathrm{Ker}(Q)$.
Equivalently, for $\mu_K$-almost every $k$,
$$
\mu_k(\cdot) = \mathbb P(Z_Q\in\cdot \mid Z_K = k),
$$
so $\nu_k$ is the conditional law of $Z_Q$ given $Z_K=k$.

\medskip

We now compute the conditional law of $W_Q$ given $W_K$.
Let $\varphi:H_Q\to\mathbb R$ be a bounded Borel function.
For any bounded Borel function $\psi:K\to\mathbb R$ we have
\begin{align*}
\int_{\overline{H_Q}\times \mathrm{Ker}(Q)} \varphi(h)\psi(k)\,\rho'(\mathrm dh,\mathrm dk)
= \mathbb E\big[\varphi(W_Q)\psi(W_K)\big] = \mathbb E\big[\varphi(Y+Z_Q)\psi(Z_K)\big].
\end{align*}
Using the independence of $Y$ and $(Z_Q,Z_K)$, and $\mathrm{Law}(Y)=\gamma_Q$, we obtain
\begin{align*}
\mathbb E\big[\varphi(Y+Z_Q)\psi(Z_K)\big]
&= \mathbb E\Big[\,
      \psi(Z_K)\,\mathbb E\big[\varphi(Y+Z_Q)\,\big|\,Z_Q,Z_K\big]
    \Big] \\
&= \mathbb E\left[\,
      \psi(Z_K)\,\int_{\overline{H_Q}}\varphi(y+Z_Q)\,\gamma_Q(\mathrm dy)
    \right],
\end{align*}
Using again the disintegration of $\mathrm{Law}(Z_Q,Z_K)$, this equals
\begin{align*}
&\int_{\mathrm{Ker}(Q)} \psi(k)
      \bigg(
        \int_{\overline{H_Q}}
          \int_{\overline{H_Q}} \varphi(y+z)\,\gamma_Q(\mathrm dy)\,\mu_k(\mathrm dz)
      \bigg)\,
      \mu_K(\mathrm dk).
\end{align*}
We now recognize the inner integral as the convolution of $\gamma_Q$ and $\mu_k$:
$$
\int_{\overline{H_Q}}
  \int_{\overline{H_Q}} \varphi(y+z)\,\gamma_Q(\mathrm{d}y)\,\mu_k(\mathrm{d}z)
= \int_{\overline{H_Q}}\varphi(h)\,(\gamma_Q*\mu_k)(\mathrm{d}h).
$$
Therefore
$$
\int_{\overline{H_Q}\times \mathrm{Ker(Q)}} \varphi(h)\psi(k)\,\rho'(\mathrm{d}h,\mathrm{d}k)
= \int_{\mathrm{Ker(Q)}} \psi(k)
    \bigg(
      \int_{\overline{H_Q}}\varphi(h)\,(\gamma_Q*\mu_k)(\mathrm{d}h)
    \bigg)\,
    \mu_K(\mathrm{d}k).
$$

On the other hand, if we disintegrate $\rho'$ with respect to its second marginal
$\nu_K$, there exists a family $\{\rho'_k\}_{k\in \mathrm{Ker(Q)}}$ of probability measures on
$\overline{H_Q}$ such that
$$
\rho'(\mathrm{d}h,\mathrm{d}k)
= \rho'_k(\mathrm{d}h)\nu_K(\mathrm{d}k),
$$
and hence
$$
\int_{\overline{H_Q}\times \mathrm{Ker(Q)}} \varphi(h)\psi(k)\,\rho'(\mathrm{d}h,\mathrm{d}k)
= \int_{\mathrm{Ker(Q)}} \psi(k)
    \bigg(
      \int_{H_Q}\varphi(h)\,\rho'_k(\mathrm{d}h)
    \bigg)\,
    \mu_K(\mathrm{d}k).
$$

Comparing the two identities for all bounded Borel $\varphi$ and $\psi$, we obtain
$$
\int_{\overline{H_Q}}\varphi(h)\,\rho'_k(\mathrm{d}h)
= \int_{\overline{H_Q}}\varphi(h)\,(\gamma_Q*\mu_k)(\mathrm{d}h)
$$
for $\mu_K$-almost every $k\in K$ and all bounded Borel $\varphi$.
Therefore
$$
\rho'_k = \gamma_Q \ast \mu_k
\qquad\text{for $\mu_K$-a.e.\ }k\in \mathrm{Ker(Q)},
$$
which is the desired identity.
\end{proof}

\begin{proof}[Proof of Theorem \ref{25.12.09.14.29}]
    First, we show that $\mu_K \in \mathcal{P}^r(\mathrm{Ker}(Q))$ whenever $\gamma \ast \mu \in \mathcal{P}^r(H)$.  
Let $B \subset \mathrm{Ker}(Q)$ be a Gaussian null set, and define 
$$
A := \overline{H_Q}\oplus B .
$$

    \emph{Step 1.} $A$ is a Gaussian null set in $H$.

    Let $Y \sim \mathcal N(m,C)$, where $C$ is a nondegenerate trace-class operator on $H$, and denote its law by $\Gamma$.  
We claim that
$$
\Gamma(A) = 0.
$$
Consider the projection $Z := \pi_K Y$.  
By definition of the push-forward measure, $\Gamma_K := (\pi_K)_{\#}\Gamma$ is the law of $Y$ on $\mathrm{Ker}(Q)$.  
For any $k \in \mathrm{Ker}(Q)$,
$$
    \mathbb{E}[\mathrm{e}^{\mathrm{i}\langle Z,k\rangle_{\mathrm{Ker}(Q)}}]:=\mathbb{E}[\mathrm{e}^{\mathrm{i}\langle Z,k\rangle_{H}}]=\mathbb{E}[\mathrm{e}^{\mathrm{i}\langle \pi_KY,k\rangle_{H}}]=\mathbb{E}[\mathrm{e}^{\mathrm{i}\langle Y,k\rangle_{H}}]=\exp\left(\mathrm{i}\langle m,k\rangle_{H}-\frac{\langle Ck,k\rangle_{H}}{2}\right).
    $$
Hence $Z \sim \mathcal N(\pi_K m,\, C_K)$ on $\mathrm{Ker}(Q)$, where $C_K:=(\pi_KC)|_{\mathrm{Ker}(Q)}:\mathrm{Ker}(Q)\to\mathrm{Ker}(Q)$.
To check that $\Gamma_K$ is nondegenerate, let $k\neq 0$ in $\mathrm{Ker}(Q)$.  
Since $C$ is nondegenerate on $H$,
$$
\langle C_K k, k\rangle_{H}
= \langle Ck, k\rangle_{H} > 0 .
$$
Thus $\Gamma_K$ is a nondegenerate Gaussian measure on $\mathrm{Ker}(Q)$.
Since $B$ is Gaussian null set in $\mathrm{Ker}(Q)$ and $\Gamma_K$ is a nondegenerate Gaussian measure on $\mathrm{Ker}(Q)$,
$$
\Gamma(A)= \Gamma\left(\pi_K^{-1}(B)\right)= \Gamma_K(B)= 0 .
$$

\emph{Step 2.} $(\gamma\ast\mu)(A)=\mu_{K}(B)$.

Let
$$
T:(h,k)\in \overline{H_Q}\times\mathrm{Ker}(Q)\mapsto h+k\in H,\quad T^{-1}:x\in H\mapsto(\pi_Qx,\pi_Kx)\in \overline{H_Q}\times\mathrm{Ker}(Q).
$$
By Theorem \ref{25.12.13.20.24}, $X=\pi_QX$ almost surely, thus $\pi_KX=0$ almost surely.
This implies that for $A\subseteq\overline{H_Q}$ and $B\subseteq \mathrm{Ker}(Q)$,
$$
T_{\#}\mu(A\times B)=\mathbb{P}(\pi_QX\in A,\pi_KX\in B)=\mathbb{P}(X\in A,0\in B)=\mathbb{P}(X\in A)\delta_0(B)=\gamma(A)\delta_0(B).
$$
Therefore,
\begin{align*}
    (\mu\ast\nu)(A)=&\int_{H}\gamma(A-x)\mu(\mathrm{d}x)\\
    =&\int_{H}\gamma(T(\overline{H_Q}\times B)-T(\pi_Qx,\pi_Kx))\mu(\mathrm{d}x)\\
    =&\int_{H}\gamma(T(\overline{H_Q}\times (B-\pi_Kx)))\mu(\mathrm{d}x)\\
    =&\int_{H}T_{\#}\gamma(\overline{H_Q}\times (B-\pi_Kx))\mu(\mathrm{d}x)\\
    =&\gamma(\overline{H_Q})\int_{H}\delta_0(B-\pi_Kx)\mu(\mathrm{d}x)=\gamma(\overline{H_Q})\mu_K(B).
\end{align*}
By Theorem \ref{25.12.13.20.24}, $\gamma(\overline{H_Q})=1$.

Now we prove the converse, $\gamma \ast \mu \in \mathcal{P}^r(H)$ whenever $\mu_K \in \mathcal{P}^r(\mathrm{Ker}(Q))$.

% \emph{Step 3.} $\rho_k'=\mu_Q\ast\nu_k\in\mathcal{P}^r(\overline{H_Q})$.

% By the disintegration theorem(\textit{e.g.} Lemma \ref{25.12.08.19.47}), for $\nu_K$-almost every $k\in\mathrm{Ker}(Q)$, there exists a family $\nu_k\in \mathcal{P}(\overline{H_Q})$ such that
% $$
% \tilde{\nu}(\mathrm dh,\mathrm dk)
% = \nu_k(\mathrm dh)\,\nu_K(\mathrm dk).
% $$
% Since $\mu_Q$ is a nondegenerate Gaussian measure on $\overline{H_Q}$, by Lemma \ref{25.12.09.11.49}, $\rho_k'=\mu_Q\ast\nu_k\in\mathcal{P}^r(\overline{H_Q})$.

\emph{Step 3.} For a Gaussian null set $A\subset H$, 
    \begin{equation}
    \label{25.12.09.13.54}
    N:=\{k\in\mathrm{Ker}(Q):\rho_k'(A_k')>0\}
    \end{equation}
    is a Gaussian null set in $\mathrm{Ker}(Q)$.
Here $A_k':=\{h\in\overline{H_Q}:(h,k)\in T^{-1}(A)\}$.

Let $\Gamma_K$ be a nondegenerate Gaussian measure on $\mathrm{Ker}(Q)$.
    Then $\gamma_Q\otimes\Gamma_K$ is a nondegenerate Gaussian measure on $\overline{H_Q}\times\mathrm{Ker}(Q)$, and
    $$
    \Gamma:=T_{\#}(\gamma_Q\otimes\Gamma_K)
    $$
    is a nondegenerate Gaussian measure on $H$.
    Since $A$ is a Gaussian null set in $H$,
    $$
    0=\Gamma(A)=(\gamma_Q\otimes\Gamma_K)(T^{-1}(A)).
    $$
    By Fubini's theorem
    \begin{align*}
        (\gamma_Q\otimes\Gamma_K)(T^{-1}(A))=&\int_{\overline{H_Q}\times\mathrm{Ker}(Q)}1_{T^{-1}(A)}(h,k)(\gamma_Q\otimes\Gamma_K)(\mathrm{d}h,\mathrm{d}k)\\
    =&\int_{\mathrm{Ker}(Q)}\int_{\overline{H_Q}}1_{T^{-1}(A)}(h,k)\gamma_Q(\mathrm{d}h)\Gamma_K(\mathrm{d}k)=\int_{\mathrm{Ker}(Q)}\gamma_Q(A_k')\Gamma_K(\mathrm{d}k).
    \end{align*}
    Therefore, $\gamma_Q(A_k')=0$ for $\Gamma_K$-a.e. $k\in\mathrm{Ker}(Q)$.
    By the Cameron-Martin theorem(Theorem \ref{25.12.08.19.00}), $\gamma_Q(A_k'-h)=0$ for all $h\in\overline{H_Q}$, thus
    $$
    \rho_k'(A_k')=\int_{\overline{H_Q}}\gamma_Q(A_k'-h)\mu_k(\mathrm{d}h)=0
    $$   
    for $\Gamma_K$-a.e. $k\in\mathrm{Ker}(Q)$.
    This also implies that $\Gamma_K(N)=0$.
    Since $\Gamma_K$ is arbitrary nondegenerate Gaussian measure on $\mathrm{Ker}(Q)$, $N$ is a Gaussian null set in $\mathrm{Ker}(Q)$.

\emph{Step 4.} $(\gamma\ast\mu)(A)=0$ for a Gaussian null set $A\subset H$.

By the disintegration theorem(Lemma \ref{25.12.08.19.47}),
$$
(\gamma\ast\mu)(A)=T_{\#}^{-1}(\gamma\ast\mu)(T^{-1}(A))=\int_{\mathrm{Ker}(Q)}\rho_k'(A_k')\mu_K(\mathrm{d}k),
$$
where $\rho_k'=\gamma_Q\ast\mu_k$.
Since $\mu_K\in\mathcal{P}^r(\mathrm{Ker}(Q))$, by Step 3, $\mu_K(N)=0$ and $\rho_k'(A_k')=0$ for $k\in\mathrm{Ker}(Q)\setminus N$, where $N$ is the set in (Eq. \ref{25.12.09.13.54}).
Therefore,
\begin{align*}
    \int_{\mathrm{Ker}(Q)}\rho_k'(A_k')\mu_K(\mathrm{d}k)=\int_{\mathrm{Ker}(Q)\setminus N}\rho_k'(A_k')\mu_K(\mathrm{d}k)+\int_{N}\rho_k'(A_k')\mu_K(\mathrm{d}k)=0
\end{align*}
This completes the proof.
\end{proof}

\section{Convergence of OTP}
\label{26.01.29.13.52}
\begin{proof}[Proof of Theorem \ref{thm:convergence}]
    Since $\mu_k \in \mathcal{P}_2^r(H)$, by Proposition \ref{25.12.16.14.13}, the optimal transport plan is given by $\pi_{k}^{\star} = (Id \times T_k^{\star})_{\#} \mu_k$.
    Since $\mu_k$ converges to $\mu$ in $\mathcal{P}_2(H)$, the second moments converge, \textit{i.e.}, $\int_H \|x\|_H^2 \mu_k(\mathrm{d}x) \to \int_H \|x\|^2 \mu(\mathrm{d}x)$.
    Using the inequality $\|x - y\|_H^2 \leq 2(\|x\|_H^2 + \|y\|_H^2)$, we have
    \begin{align*}
    \liminf_{k\to\infty} \iint_{H \times H} \|x - y\|_H^2 \pi_k^{\star}(\mathrm{d}x, \mathrm{d}y) 
    & \leq \liminf_{k\to\infty} 2 \left( \int_{H} \|x\|_H^2 \mu_k(\mathrm{d}x) + \int_{H} \|y\|_H^2 \nu(\mathrm{d}y) \right) \\
    & = 2 \left( \int_{H} \|x\|_H^2 \mu(\mathrm{d}x) + \int_{H} \|y\|_H^2 \nu(\mathrm{d}y) \right) < \infty.
    \end{align*}
    Thus, both assumptions (consistency of plans and finiteness of costs) in \citep[Theorem 5.20]{villani} are satisfied, and the desired result is obtained.
\end{proof}

To apply Theorem \ref{thm:convergence} to our framework, specifically for the Gaussian smoothing approximations, we rely on the following lemma which guarantees the required convergence in the Wasserstein metric.

\begin{lemma} \label{lem:gaussian_smoothing}
    Let $\mu \in \mathcal{P}_2(H)$, and let $\gamma$ be a centered Gaussian measure with covariance operator $Q$.
    Then, the convolution $\gamma_{\epsilon} \ast \mu$ converges to $\mu$ in the Wasserstein metric $W_2$ as $\epsilon \to 0$.
    Here, $\gamma_{\epsilon}$ denotes a centered Gaussian measure with covariance operator $\epsilon Q$.
\end{lemma}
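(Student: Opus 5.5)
The plan is to bound $W_2(\gamma_\epsilon\ast\mu,\mu)$ by the cost of the natural synchronous coupling. Take independent random vectors $\mathbf{X}\sim\mu$ and $\xi\sim\gamma=\mathcal{N}(0,Q)$ on a common probability space, and set $\xi_\epsilon:=\sqrt{\epsilon}\,\xi$. By the characteristic function computation used in Lemma \ref{25.12.09.11.49}, $\mathbb{E}[\mathrm{e}^{\mathrm{i}\langle \xi_\epsilon,h\rangle_H}]=\exp(-\tfrac{\epsilon}{2}\langle Qh,h\rangle_H)$. Hence $\mathrm{Law}(\xi_\epsilon)=\mathcal{N}(0,\epsilon Q)=\gamma_\epsilon$. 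By independence and the definition of convolution, $\mathrm{Law}(\mathbf{X}+\xi_\epsilon)=\gamma_\epsilon\ast\mu$. In particular the pair $(\mathbf{X}+\xi_\epsilon,\mathbf{X})$ has law in $\Pi(\gamma_\epsilon\ast\mu,\mu)$.

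The key estimate is then
\begin{equation*}
W_2^2(\gamma_\epsilon\ast\mu,\mu)\le \mathbb{E}\big[\|\mathbf{X}+\xi_\epsilon-\mathbf{X}\|_H^2\big]=\epsilon\,\mathbb{E}\big[\|\xi\|_H^2\big].
\end{equation*}
To evaluate the right-hand side, I would use the representation of Proposition \ref{prop:hilbert_Gaussian_random_app}, namely $\xi=\sum_k\sqrt{\lambda_k}\zeta_k e_k$ in an eigenbasis of $Q$ from Lemma \ref{lem:diag-trace}. By Parseval and monotone convergence, $\mathbb{E}\|\xi\|_H^2=\sum_k\lambda_k\mathbb{E}[\zeta_k^2]=\mathrm{tr}\,Q<\infty$. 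Therefore $W_2(\gamma_\epsilon\ast\mu,\mu)\le\sqrt{\epsilon\,\mathrm{tr}\,Q}\to0$.

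For completeness I would also check that $\gamma_\epsilon\ast\mu\in\mathcal{P}_2(H)$, so that $W_2$ is a genuine metric between the two measures. This follows from $\|\mathbf{X}+\xi_\epsilon\|^2\le2\|\mathbf{X}\|^2+2\|\xi_\epsilon\|^2$ and the two finite second moments.

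The only step needing care is identifying $\mathbb{E}\|\xi\|_H^2$ with the trace. This requires interchanging the expectation with the infinite sum, which is justified because the terms are nonnegative. We also need the law of the series to be exactly $\mathcal{N}(0,Q)$, which Proposition \ref{prop:hilbert_Gaussian_random_app} provides. If the covariance is instead read as $\epsilon^2Q$, as in the annealing section, the same argument gives the bound $\epsilon\sqrt{\mathrm{tr}\,Q}$.
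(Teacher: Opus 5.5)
Your proposal is correct and follows essentially the same route as the paper: couple $\mathbf{X}\sim\mu$ with an independent $\mathcal{N}(0,\epsilon Q)$ perturbation, then bound $W_2^2(\gamma_\epsilon\ast\mu,\mu)$ by $\mathbb{E}\|\xi_\epsilon\|_H^2=\epsilon\,\mathrm{tr}\,Q\to 0$. Your version is somewhat more explicit than the paper's, since it gives the rate $\sqrt{\epsilon\,\mathrm{tr}\,Q}$, justifies the trace identity by monotone convergence, and checks that $\gamma_\epsilon\ast\mu\in\mathcal{P}_2(H)$.
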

\begin{proof}
    Let $X$ and $Y_{\epsilon}$ be independent $H$-valued random vectors satisfying
    $$
        \mathrm{Law}(X) = \mu, \quad \mathrm{Law}(Y_{\epsilon}) = \gamma_{\epsilon}.
    $$
    Then, the sum $X + Y_{\epsilon}$ follows the distribution $\gamma_{\epsilon} \ast \mu$.
    Since $\gamma_\epsilon$ is a centered Gaussian measure with covariance $\epsilon Q$, we have
    $$
        \lim_{\epsilon \to 0} \mathbb{E}[\| (X + Y_{\epsilon}) - X \|_H^2] 
        = \lim_{\epsilon \to 0} \mathbb{E}[\| Y_{\epsilon} \|_H^2] 
        = \lim_{\epsilon \to 0} \epsilon \mathrm{Tr}(Q) = 0.
    $$
    This implies that $X + Y_{\epsilon}$ converges to $X$ in $L^2(\Omega)$.
    Since convergence in $L^2(\Omega)$ implies convergence in the Wasserstein $W_2$ distance for the induced laws, we conclude that $\gamma_{\epsilon} \ast \mu$ converges to $\mu$ in $\mathcal{P}_2(H)$.
\end{proof}

\section{Gaussian Smoothing in Hilbert Spaces} \label{app:gaussian_hilbert}

In this section, we provide the theoretical justification for the Gaussian random field construction used in our spectral augmentation strategy, followed by specific examples of orthonormal bases implemented in our experiments.

\subsection{Proof of Proposition \ref{prop:hilbert_Gaussian_random_app}}

Here we prove that the series defined in the main text converges to a valid Gaussian random variable in $H$.

\begin{proof}
    Let $X_N := \sum_{k=1}^{N} \sqrt{\lambda_k} \xi_k e_k$ be the partial sum. We first show that $\{X_N\}_{N \in \mathbb{N}}$ is a Cauchy sequence in $L^2(\Omega; H)$. For $M > N$, we have:
    \begin{align*}
        \mathbb{E}\left[ \|X_M - X_N\|_H^2 \right] 
        &= \mathbb{E}\left[ \left\| \sum_{k=N+1}^{M} \sqrt{\lambda_k} \xi_k e_k \right\|_H^2 \right] \\
        &= \sum_{k=N+1}^{M} \lambda_k \mathbb{E}[\xi_k^2] \|e_k\|_H^2 \\
        &= \sum_{k=N+1}^{M} \lambda_k.
    \end{align*}
    Since $\{\lambda_k\}_{k\in\mathbb{N}}$ is summable (trace-class condition), the tail sum converges to 0 as $N, M \to \infty$. 
    Thus, $X_N$ converges to a limit $X$ in $L^2(\Omega; H)$. 
    By the It\^o-Nisio theorem, this implies that $X$ converges almost surely in $H$.
    
    To verify that $X$ is Gaussian, consider an arbitrary linear functional $\ell(h) = \langle h, u \rangle_H$ for $u \in H$. Then
    $$
    \langle X, u \rangle_H = \sum_{k=1}^{\infty} \sqrt{\lambda_k} \xi_k \langle e_k, u \rangle_H.
    $$
    This is a countable sum of independent Gaussian random variables. 
    Since the variance 
    $$
    \sum_{k=1}^{\infty} \lambda_k \langle e_k, u \rangle^2 \le \left(\max_{k\in\mathbb{N}} \lambda_k\right) \|u\|_H^2 < \infty,
    $$ the limit is a univariate Gaussian random variable. Thus, $X$ is a Gaussian random vector.
    
    Finally, we identify the covariance operator $Q$. For any $u, v \in H$,
    \begin{align*}
        \mathbb{E}[\langle X, u \rangle \langle X, v \rangle] 
        &= \mathbb{E}\left[ \left(\sum_{j=1}^{\infty} \sqrt{\lambda_j} \xi_j \langle e_j, u \rangle\right) \left(\sum_{k=1}^{\infty} \sqrt{\lambda_k} \xi_k \langle e_k, v \rangle\right) \right] \\
        &= \sum_{j,k=1}^{\infty} \sqrt{\lambda_j \lambda_k} \delta_{jk} \langle e_j, u \rangle \langle e_k, v \rangle \\
        &= \sum_{k=1}^{\infty} \lambda_k \langle e_k, u \rangle \langle e_k, v \rangle \\
        &= \langle Q u, v \rangle,
    \end{align*}
    where $Q$ is the operator defined by $Q e_k = \lambda_k e_k$. 
    This completes the proof.
\end{proof}

\subsection{Examples of Orthonormal Bases}

As described in the \textbf{Practical Smoothing via Spectral Augmentation} section, our method projects data onto an orthonormal basis $\{e_k\}$ to inject noise. In our experiments, we specifically utilized the Fourier basis. We detail this basis below, along with other illustrative examples for $H=L^2(-1,1)$ that are compatible with our framework.

\paragraph{1. Fourier Basis.}
On the interval $(-1,1)$, the orthonormal basis is defined as:
$$
e_k(t) = \begin{cases} 
    \frac{1}{\sqrt{2}} & \text{if } k=0, \\
    \cos(m \pi t) & \text{if } k=2m, \\
    \sin(m \pi t) & \text{if } k=2m-1.
\end{cases}
$$

\paragraph{2. Legendre Polynomials.}
Let $P_n(t)$ be the Legendre polynomials satisfying the recurrence relation $(n+1)P_{n+1}(t)=(2n+1)tP_n(t)-nP_{n-1}(t)$ with $P_0(t)=1, P_1(t)=t$.
To ensure orthonormality on $L^2(-1,1)$, we apply the normalization factor $\sqrt{\frac{2k+1}{2}}$:
$$
e_k(t) = \sqrt{\frac{2k+1}{2}} P_{k}(t), \quad k \in \mathbb{N}_0.
$$
This basis is particularly useful for capturing global trends without enforcing periodicity at the boundaries.

\paragraph{3. Haar Wavelets.}
Let $h_0(t)=\frac{1}{\sqrt{2}}\mathbf{1}_{(-1,1)}(t)$. For scale $j \in \mathbb{N}_0$ and shift $k=0,\dots,2^j-1$, define:
$$
h_{j,k}(t):=2^{j/2}\frac{1}{\sqrt{2}}\left(\mathbf{1}_{\left[ -1 + \frac{2k}{2^j}, -1 + \frac{2(k+0.5)}{2^j} \right)} - \mathbf{1}_{\left[ -1 + \frac{2(k+0.5)}{2^j}, -1 + \frac{2(k+1)}{2^j} \right)}\right).
$$
The collection $\{h_0\} \cup \{h_{j,k}\}_{j,k}$ forms a complete orthonormal system for $L^2(-1,1)$.

\section{Further Analysis on Semi-dual Optimal Transport}
\label{sec:fur}

\subsection{Ill-posedness and Spurious Solutions in Non-regular Settings}
\label{26.01.25.17.05}

As established in Proposition \ref{25.12.16.14.13}, the regular assumption on the source measure  $\mu \in \mathcal{P}_2^r(H)$ is a critical requirement for the well-posedness of the Monge map $T^{\star}$. 
In practice, this theoretical guarantee serves as a prerequisite for the stable training of Neural OT models. 
Below, we present two examples demonstrating that \textbf{without the regular condition}, the SNOT objective becomes ill-posed, admitting \textbf{infinitely many spurious solutions} that fail to recover the true transport map.

To construct these examples, let $H=L^2((-1,1))$ and consider the standard Fourier orthonormal basis $\{e_k\}_{k\in\mathbb{N}}$.
We define two basis functions for distinct nonnegative integers $k_1, k_2$:
\begin{equation}
    e_{k_1}(t) := \sin(k_1\pi t),\quad \text{and}\quad e_{k_2}(t) := \sin(k_2\pi t).
\end{equation}
Let $X$ and $Y$ be independent random variables following the uniform distribution $\mathrm{Unif}(-1,1)$. The detailed constructions are provided in Appendix~\ref{app:example_details}.

% \paragraph{Example 1. When Monge map $T^{\star}$ exists but is not unique.}
\paragraph{Example 1. Non-uniqueness of the Monge Map}
We first define two $H$-valued random variables:
$$
\mathbf{X} := X e_{k_1}, \quad \mathbf{Y} := Y e_{k_2}.
$$
Their probability laws $\mu_{\mathbf{X}}$ and $\mu_{\mathbf{Y}}$ are supported on orthogonal lines $L_1$ and $L_2$.
Clearly, $\mu_{\mathbf{X}}$ is not regular because it is concentrated on a finite-dimensional subspace $L_1$, which is a Gaussian null set in the infinite-dimensional Hilbert space $H$ (Figure~\ref{fig:example}).

Due to the orthogonality ($L_1 \perp L_2$), the quadratic cost function decouples, leading to a constant transport cost for \emph{any} coupling $\pi \in \Pi(\mu_{\mathbf{X}}, \mu_{\mathbf{Y}})$. This triviality results in two critical issues:
\begin{enumerate}[leftmargin=*]
    \item \textbf{Non-uniqueness of $T^{\star}$:} There exist infinitely many Monge maps $T^{\star}$.
    % $\mu_{\mathbf{X}}$ to $\mu_{\mathbf{Y}}$.
    \item \textbf{Spurious solutions:} In the max-min objective (Eq. \ref{eq:otm}), there exist an optimal Kantorovich potential (\textit{e.g.}, $V^\star(y)=\frac{1}{2}\|y\|_H^2$) under which the inner minimization problem for $T_{\theta}$ becomes \emph{independent} of the map $T$. 
    Consequently, \emph{any} measurable map $T$ becomes a solution to
    \begin{equation}
    \label{26.01.29.17.45}
    \inf_{T:H\to H}\mathcal{L}(V^{\star},T)
    \end{equation}
    regardless of whether it satisfies  $T_\# \mu_{\mathbf{X}} = \mu_{\mathbf{Y}}$.
    % respects the transport geometry.
\end{enumerate}
These observations highlight that the max-min formulation can allow infinitely many spurious solutions. 

% \paragraph{Example 2. When an OT plan exists, but the Monge map does not.}
\paragraph{Example 2. Non-existence of the Monge Map}
In the second example, we consider a case where an optimal transport plan $\pi^{\star}$ exists (splitting mass), but no Monge map exists.
Let $R$ be a Rademacher random variable independent of $X, Y$. We define the random variables in $H$ as:
$$
\mathbf{X} := X e_{k_1}, \quad \tilde{\mathbf{Y}} := Ye_{k_1} + R e_{k_2}.
$$
Here, the source measure $\mu_{\mathbf{X}}$ is supported on a single line segment, while the target $\mu_{\tilde{\mathbf{Y}}}$ is supported on two disjoint parallel segments (Figure~\ref{fig:example}).

We demonstrate that even if the optimal Kantorovich potential $V^\star$ is explicitly provided, the SNOT max-min formulation fails to recover a valid transport map.
Specifically, the inner minimization problem for $T_{\theta}$ directs the map $T$ to have zero horizontal displacement to minimize the transport cost.
However, as shown in our analysis in Appendix~\ref{app:example_details}, any such map fails to satisfy the push-forward constraint because it cannot ``split" the mass of $\mu_{\mathbf{X}}$ to cover the support of $\mu_{\tilde{\mathbf{Y}}}$. 
This identifies another spurious solution: for a given Kantorovich potential $V^{\star}$, there is a solution $T$ to inner minimization problem (Eq. \ref{26.01.29.17.45}), but $T$ is not a valid Monge map, making it impossible for a neural network $T_\theta$ to learn the true coupling.

% This leads to a critical failure mode: the pair $(V^\star, T)$ attains the optimal dual cost and constitutes a saddle point of the max-min objective, yet the resulting map $T$ is \textbf{not} a valid Monge map (it fails to push $\mu_{\mathbf{X}}$ to $\mu_{\tilde{\mathbf{Y}}}$).
% This implies that the max-min relaxation admits solutions that are physically invalid, making it impossible for a neural network $T_\theta$ to learn the true optimal coupling $\pi^\star$ even with perfect potential information.
% Detailed derivations are provided in Appendix~\ref{app:example_details}.

\begin{figure}[t]
    \centering
    \includegraphics[width=.45\linewidth]{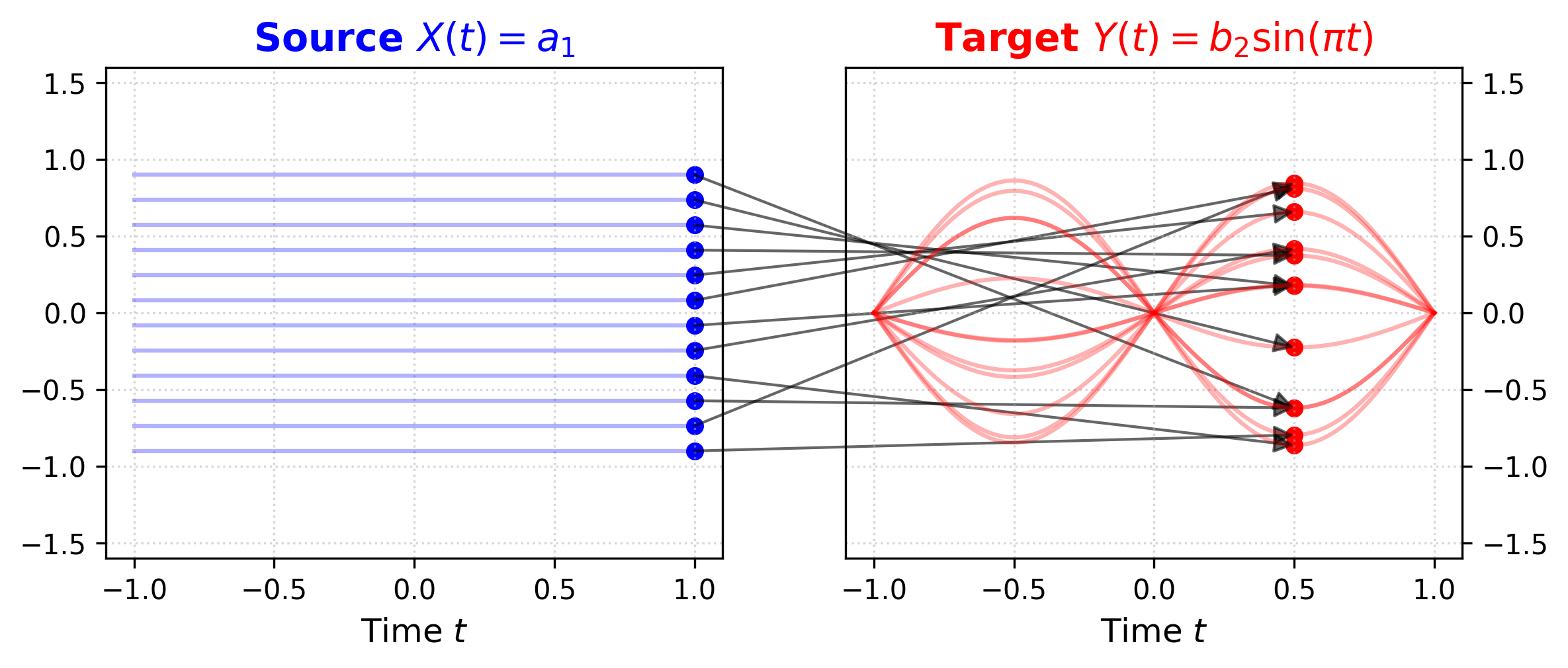}
    \ \
    \includegraphics[width=.45\linewidth]{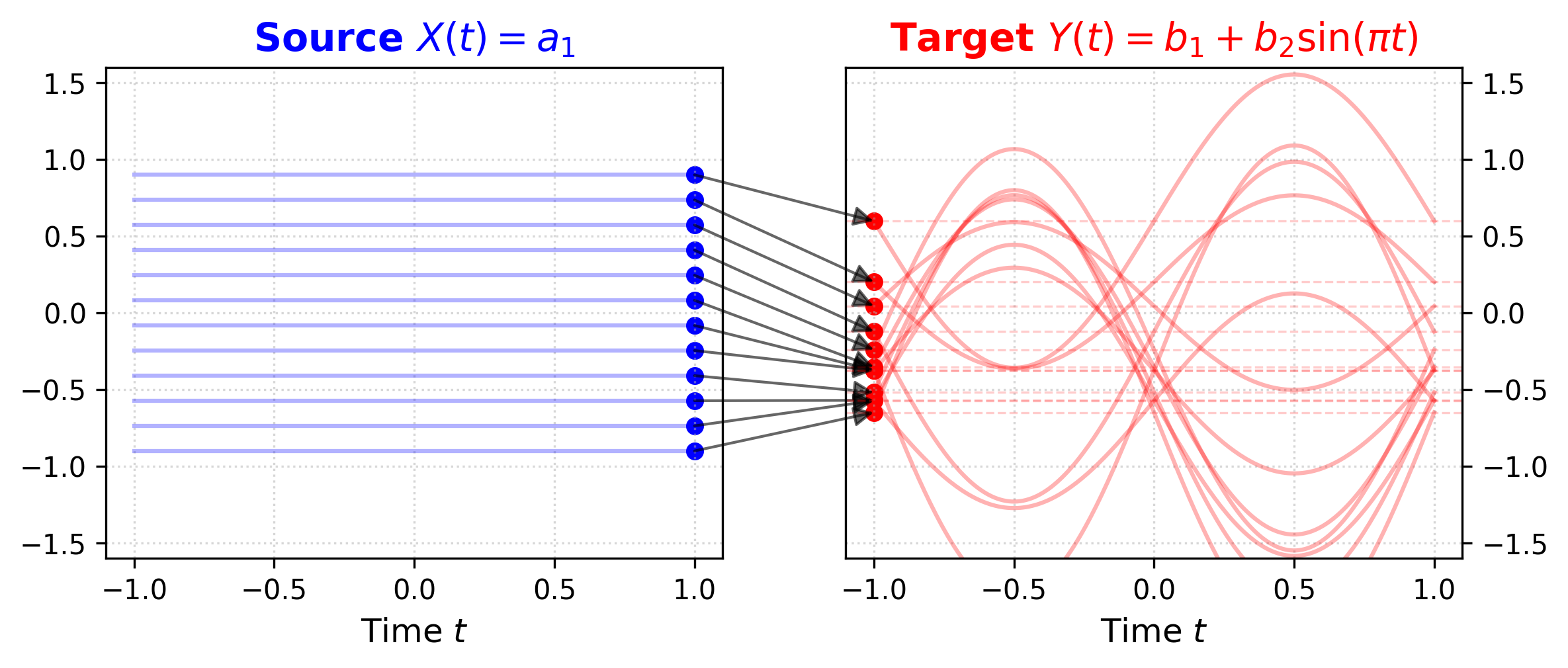}
    \caption{Illustration of optimal transport maps in Example 1}
    \vspace{-10pt}
    \label{fig:example}
\end{figure}

\subsection{Details on Examples $1$ and $2$}
\label{app:example_details}

\subsubsection{Example $1$: Perpendicular case}
\paragraph{Setup.}
Let $k_1$ and $k_2$ be two distinct nonnegative integers. 
Define the basis functions $e_{k_1}(t) := \sin(k_1\pi t)$ and $e_{k_2}(t) := \sin(k_2\pi t)$. 
Let $X$ and $Y$ be real-valued random variables following the continuous uniform distribution $\mathrm{Unif}(-1,1)$. 
We define two $H$-valued random variables:
$$
\mathbf{X} := X e_{k_1}, \quad \mathbf{Y} := Y e_{k_2}.
$$
Let $\mu_{\mathbf{X}}$ and $\mu_{\mathbf{Y}}$ denote the probability laws of $\mathbf{X}$ and $\mathbf{Y}$ in $H$, respectively. 
Their supports are given by the orthogonal lines:
$$
L_1 := \{ae_{k_1}:a\in(-1,1)\}, \quad L_2 := \{be_{k_2}:b\in(-1,1)\}.
$$
Note that $\mu_{\mathbf{X}}$ is not regular because it is concentrated on a finite-dimensional subspace $L_1$, which is a Gaussian null set in the infinite-dimensional Hilbert space $H$.

\paragraph{Kantorovich's Problem.}
Consider the Kantorovich problem with quadratic cost between $\mu_{\mathbf{X}}$ and $\mu_{\mathbf{Y}}$:
\begin{equation} \label{eq:app_kantorovich}
\begin{aligned}
C(\mu_{\mathbf{X}},\mu_{\mathbf{Y}}) &:= \inf_{\pi\in\Pi(\mu_{\mathbf{X}},\mu_{\mathbf{Y}})} \int_{H\times H}\frac{1}{2}\|x-y\|_H^2\,\pi(\mathrm{d}x,\mathrm{d}y) \\
&= \inf_{\pi\in\Pi(\mu_{\mathbf{X}},\mu_{\mathbf{Y}})} \int_{L_1\times L_2} \frac{1}{2}\|x-y\|_H^2\,\pi(\mathrm{d}x,\mathrm{d}y).
\end{aligned}
\end{equation}
Since the supports are orthogonal ($L_1 \perp L_2$), for any $x \in L_1$ and $y \in L_2$, the Pythagorean theorem implies $\|x-y\|_H^2 = \|x\|_H^2 + \|y\|_H^2$. Consequently, for \emph{any} coupling $\pi \in \Pi(\mu_{\mathbf{X}}, \mu_{\mathbf{Y}})$, the total transport cost is constant:
\begin{align*}
\int_{L_1\times L_2} \|x-y\|_H^2\,\pi(\mathrm{d}x,\mathrm{d}y) &= \int_{L_1} \|x\|_H^2\,\mu_{\mathbf{X}}(\mathrm{d}x) + \int_{L_2} \|y\|_H^2\,\mu_{\mathbf{Y}}(\mathrm{d}y)\\
&= \mathbb E\left[\|\mathbf{X}\|_H^2+\|\mathbf{Y}\|_H^2\right] = \frac{2}{3}.
\end{align*}
Since the cost is independent of the coupling $\pi$, every coupling is an optimal transport plan.
This triviality implies a catastrophic loss of uniqueness for the Monge map. Specifically, let $\psi: (-1,1) \to (-1,1)$ be \emph{any} measure-preserving map with respect to the uniform measure (\textit{e.g.}, $\psi(t)=t$, $\psi(t)=-t$, or any rearrangement). Then, the map $T_\psi: L_1 \to L_2$ defined by
$$
T_\psi(a e_{k_1}) := \psi(a) e_{k_2}
$$
is a valid Monge map pushing $\mu_{\mathbf{X}}$ forward to $\mu_{\mathbf{Y}}$. 
Since there are infinitely many such maps $\psi$, the optimal transport map is not unique.

\paragraph{Max-Min Problem and Spurious Solutions.}
We now show that this setting leads to spurious solutions in the max-min formulation. 
Let $V^{\star}(y) := \frac{1}{2}\|y\|_{H}^2$. 
Since $V^\star \in S_c$, the $c$-transform is explicitly calculated as:
$$
(V^{\star})^c(x) := \inf_{y\in L_2}[c(x,y)-V^\star(y)] = \inf_{y\in L_2} \left[ \frac{1}{2}\|x\|_H^2 + \frac{1}{2}\|y\|_H^2 - \frac{1}{2}\|y\|_H^2 \right] = \frac{1}{2}\|x\|_{H}^2, \quad \forall x\in L_1.
$$
Therefore, the dual objective value is:
$$
\int_{L_1}(V^{\star})^c(x)\mu_{\mathbf{X}}(\mathrm{d}x)+\int_{L_2}V^\star(y)\mu_{\mathbf{Y}}(\mathrm{d}y) = \frac{1}{6} + \frac{1}{6} = \frac{1}{3} =  C(\mu_{\mathbf{X}},\mu_{\mathbf{Y}}).
$$
Thus, $V^{\star}$ is indeed an optimal Kantorovich potential.

Now, consider the inner minimization problem for this fixed potential $V^\star$:
$$
\inf_{T:L_1\to L_2} \int_{H} \left( c(x,T(x)) - V^{\star}(T(x)) \right) \mu_{\mathbf{X}}(\mathrm{d}x).
$$
Using the orthogonality condition again, the integrand simplifies significantly:
$$
c(x,T(x)) - V^\star(T(x)) = \left( \frac{1}{2}\|x\|_{H}^2+\frac{1}{2}\|T(x)\|_{H}^2 \right) - \frac{1}{2}\|T(x)\|_{H}^2 = \frac{1}{2}\|x\|_{H}^2.
$$
Notice that the term depending on $T(x)$ has vanished perfectly. 
Therefore, the inner objective function is:
\begin{align*}
\inf_{T:L_1\to L_2} \int_{L_1} \left( c(x,T(x)) - V^{\star}(T(x)) \right) \mu_{\mathbf{X}}(\mathrm{d}x) 
&= \int_{L_1}\frac{\|x\|_{H}^2}{2}\,\mu_{\mathbf{X}}(\mathrm{d}x).
\end{align*}
This implies that for the pair $(V^\star, T)$, the global objective $\mathcal{L}(V^{\star},T)$ attains the primal cost $C(\mu_{\mathbf{X}},\mu_{\mathbf{Y}})$ for \textbf{any} measurable function $T:L_1\to L_2$.
In other words, the objective function becomes completely independent of the choice of the map $T$.
Consequently, even if the Kantorovich potential $V^\star$ is explicitly provided, the optimization problem fails to provide any gradient signal to guide $T$ toward a valid Monge map.
This demonstrates that attaining the optimal dual cost is insufficient for recovering the OT map in this singular setting, making the learning problem fundamentally ill-posed.

\subsubsection{Example $2$: One-to-Many case.}

\paragraph{Setup.}
Let $R$ be a Rademacher random variable independent of $X, Y \sim \mathrm{Unif}(-1,1)$.
We define the random variables in $H$ as:
$$
\mathbf{X} := X e_{k_1}, \quad \tilde{\mathbf{Y}} := Ye_{k_1} + R e_{k_2}.
$$
The support of $\mu_{\tilde{\mathbf{Y}}}$ consists of two disjoint parallel line segments:
$$
L_{3}:=\{ae_{k_1}+be_{k_2}:a\in(-1,1), b\in\{-1,1\}\}.
$$

\paragraph{Kantorovich's Problem.}
We propose the candidate plan $\pi^{\star}$ which splits the mass equally:
$$
\pi^{\star}(\mathrm{d}x,\mathrm{d}y) := \frac{1}{2}\delta_{x+e_{k_2}}(\mathrm{d}y)\mu_{\mathbf{X}}(\mathrm{d}x)+\frac{1}{2}\delta_{x-e_{k_2}}(\mathrm{d}y)\mu_{\mathbf{X}}(\mathrm{d}x).
$$
\emph{Marginal Verification:}
For the source, $\pi^{\star}(A \times H) = \int_{A} 1 \,\mu_{\mathbf{X}}(\mathrm{d}x) = \mu_{\mathbf{X}}(A)$.
For the target, using the symmetry of $Y$ and $\mu_{\mathbf{X}}=\mu_{\mathbf{Y}}$, for any Borel set $B$:
\begin{align*}
    \mu_{\tilde{\mathbf{Y}}}(B) &= \frac{1}{2}\mathbb{P}(Ye_{k_1}+e_{k_2}\in B)+\frac{1}{2}\mathbb{P}(Ye_{k_1}-e_{k_2}\in B) \\
    &= \int_{H} \left( \frac{1}{2}1_{B}(x+e_{k_2}) + \frac{1}{2}1_{B}(x-e_{k_2}) \right) \mu_{\mathbf{X}}(\mathrm{d}x),
\end{align*}
which matches the marginal of $\pi^{\star}$.

\emph{Cost and Optimality:}
The cost of $\pi^{\star}$ is purely vertical:
\begin{align*}
    \frac{1}{2}\int_{H\times H} \|x-y\|_H^2 \pi^{\star}(\mathrm{d}x, \mathrm{d}y)=& \frac{1}{4}\int_{H} \|x - (x+e_{k_2})\|_H^2\mu_{\mathbf{X}}(\mathrm{d}x)+ \frac{1}{4}\int_{H}\|x - (x-e_{k_2})\|_H^2  \mu_{\mathbf{X}}(\mathrm{d}x) \\
    =& \int_{H} \left( \frac{1}{4}\|-e_{k_2}\|_H^2 + \frac{1}{4}\|e_{k_2}\|_H^2 \right) \mu_{\mathbf{X}}(\mathrm{d}x) = \frac{1}{2}\int_{H}  \, \mu_{\mathbf{X}}(\mathrm{d}x) = \frac{1}{2}.
\end{align*}
Now, we verify that $1/2$ is the optimal cost.
Since for any $(\phi,\psi)\in L^1(H,\mu_{\mathbf{X}})\times L^1(H,\mu_{\tilde{\mathbf{Y}}})$ satisfying $\phi(x)+\psi(y)\leq c(x,y)$, for $\pi\in\Pi(\mu_{\mathbf{X}},\mu_{\tilde{\mathbf{Y}}})$, we have
\begin{align*}
    \int_{H\times H}c(x,y)\pi(\mathrm{d}x,\mathrm{d}y)\geq\int_{H}\phi(x)\mu_{\mathbf{X}}(\mathrm{d}x)+\int_{H}\psi(y)\mu_{\tilde{\mathbf{Y}}}(\mathrm{d}x).
\end{align*}
Put $\bar{\phi}(x):=0$, and $\bar{\psi}(y):=1/2$.
For $(x,y)=(ae_{k_1},be_{k_1}\pm e_{k_2})\in L_1\times L_3$,
\begin{align*}
c(x,y)=\frac{1}{2}\|x-y\|_{H}^2=\frac{1}{2}\|(a-b)e_{k_1}\pm e_{k_2}\|_{H}^2=\frac{|a-b|^2}{2}+\frac{1}{2}\geq \frac{1}{2}=\bar{\phi}(x)+\bar{\psi}(y).
\end{align*}
Therefore,
\begin{align*}
    \frac{1}{2}=&\int_{H\times H} c(x,y) \pi^{\star}(\mathrm{d}x,\mathrm{d}y)\geq\inf_{\pi\in\Pi(\mu_{\mathbf{X}},\mu_{\tilde{\mathbf{Y}}})}\int_{H\times H}c(x,y)\pi(\mathrm{d}x,\mathrm{d}y)\geq \int_{H}\phi(x)\mu_{\mathbf{X}}(\mathrm{d}x)+\int_{H}\psi(y)\mu_{\tilde{\mathbf{Y}}}(\mathrm{d}y)=\frac{1}{2}.
\end{align*}
By the Kantorovich duality theorem (\textit{e.g.} \citep[Theorem 5.10]{villani}), $1/2$ is the optimal cost, and $\pi^{\star}$ is an OT plan.

\emph{Non-existence of Monge Map:}
We show that any Monge map $T:L_1\to L_3$ yields a cost strictly larger than $1/2$.
Since $T$ must be a deterministic function, for each $\lambda \in (-1,1)$, the image $T(\lambda e_{k_1})$ must lie on either the upper line segment $L_3^{up} :=\{ \lambda e_{k_1} + e_{k_2}:\lambda\in(-1,1)\}$ or the lower line segment $L_3^{down} = \{\lambda e_{k_1} - e_{k_2}:\lambda\in(-1,1)\}$.
We can decompose the map as:
$$
T(\lambda e_{k_1}) = T_1(\lambda)e_{k_1} + S(\lambda)e_{k_2},
$$
where $T_1: (-1,1) \to (-1,1)$ determines the horizontal position and $S: (-1,1) \to \{1, -1\}$ determines the vertical selection.
The total cost is decomposed into vertical and horizontal components:
\begin{align*}
    \mathcal{C}(T) = \frac{1}{4}\int_{-1}^{1}  \| \lambda e_{k_1} - (T_1(\lambda)e_{k_1} + S(\lambda)e_{k_2}) \|_H^2 \mathrm{d}\lambda = \underbrace{\int_{-1}^{1} \frac{1}{4} | \lambda - T_1(\lambda) |^2 \mathrm{d}\lambda}_{\text{Horizontal Cost}} + \underbrace{\int_{-1}^{1} \frac{1}{4} | S(\lambda)|^2 \mathrm{d}\lambda}_{\text{Vertical Cost}}\geq\int_{-1}^{1} \frac{1}{4} | S(\lambda) |^2 \mathrm{d}\lambda
\end{align*}
Since $S(\lambda) \in \{1, -1\}$, the vertical cost is fixed at $\frac{1}{2}$.
Thus, for $T$ to be optimal (optimal cost $=1/2$), the horizontal cost must be zero, which implies $T_1(\lambda) = \lambda$ almost everywhere.

However, this leads to a contradiction with the push-forward constraint $T_{\#} \mu_{\mathbf{X}} = \mu_{\tilde{\mathbf{Y}}}$.
Consider the upper segment $L_3^{up}$ and the set of points mapped to it, $A := \{\lambda e_{k_1} : S(\lambda) = 1\}\subseteq L_1$.
The push-forward condition on $L_3^{up}$ requires that for any subset $B \subset L_3^{up}$, the mass must match.
Specifically, $\mu_{\tilde{\mathbf{Y}}}$ is uniform on $L_3^{up}$ with total mass $1/2$. Since $L_3^{up}$ corresponds to the interval $(-1,1)$ of length 2, the density of $\mu_{\tilde{\mathbf{Y}}}$ on this line is $1/4$ with respect to the Lebesgue measure (length).
In contrast, $\mu_{\mathbf{X}}$ is uniform on $(-1,1)$ with density $1/2$.
If $T_1(\lambda)=\lambda$, then $T$ acts as a rigid translation on $A$.
The measure of the image of $A$ is:
$$
\mu_{\tilde{\mathbf{Y}}}(T(A)) = \mu_{\tilde{\mathbf{Y}}}(\{ \lambda e_{k_1} + e_{k_2} : S(\lambda)=1 \}) = \int_A \frac{1}{4} \mathrm{d}\lambda = \frac{1}{4} \text{Length}(A).
$$
On the other hand, the source measure is:
$$
\mu_{\mathbf{X}}(A) = \int_A \frac{1}{2} \mathrm{d}\lambda = \frac{1}{2} \text{Length}(A).
$$
Due to the push-forward constraint, we require $\mu_{\mathbf{X}}(A) = \mu_{\tilde{\mathbf{Y}}}(T(A))$.
This implies $\frac{1}{2}\text{Length}(A) = \frac{1}{4}\text{Length}(A)$, which is only possible if $\text{Length}(A)=0$.
However, to transport the total mass of $1/2$ to the upper segment, we must have $\mu_{\mathbf{X}}(A)=1/2$, which requires $\text{Length}(A)=1$.
This contradiction implies that $T_1$ cannot be the identity map almost everywhere.
Therefore, any valid transport map must involve horizontal displacement ($T_1(\lambda) \neq \lambda$), strictly increasing the cost:
$$
\inf_{T} \mathcal{C}(T) > \frac{1}{2}.
$$
This proves the non-existence of an optimal Monge map.

\paragraph{Max-Min Problem and Spurious Solutions.}
We now investigate the behavior of the max-min formulation under the Kantorovich potential.
For any $y=\lambda e_{k_1}+\tilde{\lambda}e_{k_2}\in L_3$, we define the potential:
$$
V^{\star}(y) := \frac{1}{2}.
$$
Since $V^\star\in S_c$, the $c$-transform is explicitly calculated as follows. 
For any $x \in L_1$, let $x = a e_{k_1}$. Then:
\begin{align*}
(V^{\star})^c(x) := \inf_{y\in L_3}[c(x,y)-V^\star(y)] = \inf_{b\in (-1,1), \tilde{\lambda}\in\{-1,1\}} \left[ \left( \frac{1}{2}|a-b|^2+\frac{1}{2} \right) - \frac{1}{2} \right] = \inf_{b\in (-1,1)} \frac{1}{2}|a-b|^2 = 0.
\end{align*}
The infimum is achieved when $b=a$.
Therefore, the dual objective value is:
$$
\int_{L_1}(V^{\star})^c(x)\mu_{\mathbf{X}}(\mathrm{d}x)+\int_{L_3}V^\star(y)\mu_{\tilde{\mathbf{Y}}}(\mathrm{d}y) = 0 + \frac{1}{2} = C(\mu_{\mathbf{X}},\mu_{\tilde{\mathbf{Y}}}).
$$
Thus, $V^{\star}$ is indeed an optimal Kantorovich potential.

Now, consider the inner minimization problem for this fixed potential $V^\star$:
$$
\inf_{T:L_1\to L_3} \int_{H} \left( c(x,T(x)) - V^{\star}(T(x)) \right) \mu_{\mathbf{X}}(\mathrm{d}x).
$$
Since $V^{\star}(y)=1/2$ for all $y \in L_3$, the term $V^\star(T(x))$ is constant.
For a map $T$ decomposed as $T(\lambda e_{k_1}) = T_1(\lambda) e_{k_1} + S(\lambda) e_{k_2}$, the integrand becomes:
\begin{align*}
c(x,T(x)) - V^\star(T(x)) &= \left( \frac{1}{2}|\lambda-T_1(\lambda)|^2+\frac{1}{2} \right) - \frac{1}{2} \\
&= \frac{1}{2}|\lambda-T_1(\lambda)|^2.
\end{align*}
Since this term is non-negative, the infimum is clearly 0, achieved if and only if $T_1(\lambda) = \lambda$ almost everywhere.
Therefore,
$$
\inf_{T:L_1\to L_3} \int_{L_1} \left( c(x,T(x)) - V^{\star}(T(x)) \right) \mu_{\mathbf{X}}(\mathrm{d}x) = 0.
$$
Combining this with the outer expectation, the global objective value is $0 + 1/2 = 1/2$, which matches the primal cost.
The minimizers of the inner problem are of the form:
$$
T(\lambda e_{k_1}) = \lambda e_{k_1} + S(\lambda)e_{k_2},
$$
where $S(\lambda) \in \{1, -1\}$ is any measurable selection.

This leads to a contradiction.
Mathematically, the pair $(V^\star, T)$ attains the optimal primal cost $C(\mu_{\mathbf{X}},\mu_{\mathbf{Y}})$, making it a solution to the max-min problem.
\textbf{However}, we explicitly proved that any map of this form (where $T_1(\lambda)=\lambda$) fails to satisfy the push-forward constraint $T_\# \mu_{\mathbf{X}} = \mu_{\tilde{\mathbf{Y}}}$ due to density mismatch (it requires a length expansion factor of 2, but the identity map has a factor of 1).

This implies that even when the true Kantorovich potential is provided, the max-min objective directs the generator $T_{\theta}$ toward a set of maps that are physically invalid.
Consequently, the neural network cannot learn the optimal coupling $\pi^{\star}$ (which requires mass splitting) nor a valid approximation, proving that the formulation is ill-posed for learning Monge maps in this regime.

\subsection{Revisiting Examples with Smoothing} \label{sec:revisit_example}
\black{We revisit the failure cases from \cref{26.01.25.17.05} to demonstrate how the proposed Gaussian smoothing strategy resolves the ill-posedness. Let $\gamma = \mathcal{N}(0, Q)$ be a non-degenerate centered Gaussian measureon $H$. The smoothed source $\mu_Q := \mu \ast \gamma$ belongs to $\mathcal{P}_2^r(H)$ by Theorem \ref{25.12.09.14.29}, ensuring the existence and uniqueness of the Monge map $T_Q^{\star}$ between $\mu_Q$ and the target $\nu$.}

\paragraph{Revisiting Example 1.}
\black{In the original setting, the orthogonality of $\operatorname{supp}(\mu_{\mathbf{X}})$ and $\operatorname{supp}(\mu_{\mathbf{Y}})$ made the transport cost constant. This resulted in a "flat" optimization landscape where the objective $\mathcal{L}(V^{\star}, \cdot)$ was minimized by any measurable map, leading to an infinite number of spurious solutions. By injecting Gaussian noise along the $e_{k_2}$-direction, we break this orthogonality and restore the geometric correlation between the source and target. 
This refinement ensures the functional $\mathcal{L}(V_Q^{\star}, \cdot)$ has a unique minimizer (Theorem~\ref{thm:uniqueness}). Therefore, spurious solutions are eliminated, leaving only a single unique minimizer $T_{Q}^\star$. See Figure \ref{fig:synthetic}.}

\paragraph{Revisiting Example 2.}
\black{In the original problem, the optimal transport plan requires splitting mass from a single point $x e_{k_1}$ to two distinct target points $x e_{k_1} \pm e_{k_2}$. In this singular setting, a deterministic Monge map cannot exist, as a single-valued function is incapable of mapping one input to multiple disjoint outputs.
Applying Gaussian smoothing regularizes the source to $\mu_Q \in \mathcal{P}_2^r(H)$, guaranteeing the existence of a unique Monge map $T_Q^{\star}$ (Proposition \ref{25.12.16.14.13}).  While the function $T_Q^{\star}$ cannot literally split a point, it approximates the splitting behavior by leveraging the continuous density of the smoothed support. Specifically, it maps local neighborhoods of $x$ to different target segments($L_{\text{up}}$ and $L_{\text{down}}$). See Figure \ref{fig:synthetic}.
}

\subsection{
Saddle Points and Optimal Max-Min Solutions
%Further Discussion on SNOT in Hilbert Spaces
}
\label{26.01.28.23.33}

\paragraph{Saddle Point.}
Following the consistency result established in Theorem~\ref{thm:uniqueness} (2), this section provides a deeper analysis of the variational structure of the SNOT objective.
Specifically, we clarify the rigorous relationship between \textbf{the saddle point}, \textbf{optimal max-min solutions}, and the identification of the \textbf{Kantorovich potential}.

We begin by formally defining the saddle point of the SNOT objective.
\begin{definition}[Saddle Point]
\label{def:saddle_point}
A pair $(V^{\star}, T^{\star}) \in S_c \times \{T:H \to H\}$ is called a \emph{saddle point} of the functional $\mathcal{L}$ if it satisfies:
\begin{equation}
    \label{26.01.25.01.26}
    \sup_{V\in S_c}\mathcal{L}(V,T^{\star}) \leq \mathcal{L}(V^{\star},T^{\star}) \leq \inf_{T:H\to H}\mathcal{L}(V^{\star},T).
\end{equation}
\end{definition}

The existence of a saddle point is a stronger condition than the optimality of the max-min formulation.
First, we establish that any saddle point is inherently an optimal max-min solution.

\begin{proposition}[Saddle Point implies Optimal Max-Min Solution]
    If $(V^{\star}, T^{\star})$ is a saddle point, then it is an optimal max-min solution as defined in (Eq. \ref{eq:opt_pair_V}) and (Eq. \ref{eq:opt_pair_T}).
\end{proposition}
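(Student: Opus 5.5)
The argument is the standard "saddle point implies minimax equality" chain, applied to the two defining conditions (Eq. \ref{eq:opt_pair_V}) and (Eq. \ref{eq:opt_pair_T}). Throughout, $T$ ranges over measurable maps $H\to H$ and $V$ over $S_c$.

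\emph{Step 1: the weak max-min inequality.} For any $V'\in S_c$ and any $T'$ we have $\inf_T\mathcal{L}(V',T)\le \mathcal{L}(V',T')\le \sup_V \mathcal{L}(V,T')$. Taking the supremum over $V'$ and then the infimum over $T'$ gives
\begin{equation*}
\sup_{V\in S_c}\inf_{T}\mathcal{L}(V,T)\;\le\;\inf_{T}\sup_{V\in S_c}\mathcal{L}(V,T).
\end{equation*}

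\emph{Step 2: close the chain using the saddle inequalities (Eq. \ref{26.01.25.01.26}).} We have
\begin{equation*}
\inf_T\sup_V\mathcal{L}(V,T)\le \sup_V\mathcal{L}(V,T^\star)\le \mathcal{L}(V^\star,T^\star)\le \inf_T\mathcal{L}(V^\star,T)\le \sup_V\inf_T\mathcal{L}(V,T).
\end{equation*}
Combined with Step 1, every quantity in this chain is equal. In particular, $\inf_T\mathcal{L}(V^\star,T)=\sup_V\inf_T\mathcal{L}(V,T)$, so $V^\star$ attains the outer supremum, which is (Eq. \ref{eq:opt_pair_V}). Also, $\mathcal{L}(V^\star,T^\star)=\inf_T\mathcal{L}(V^\star,T)$, so $T^\star$ is an inner minimizer for $V^\star$, which is (Eq. \ref{eq:opt_pair_T}).

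\emph{Main obstacle: finiteness of $\mathcal{L}$.} The only delicate point is that $\mathcal{L}$ could take the value $\pm\infty$, since $V(T(x))$ need not be integrable for arbitrary $T$. I would address this first by noting that the saddle condition forces $\mathcal{L}(V^\star,T^\star)$ to be a real number: the paper implicitly works with $V\in L^1(\nu)$ and integrands with a well-defined value in the extended reals. All the inequalities above remain valid in $[-\infty,\infty]$, so the equality chain needs no further justification once $\mathcal{L}(V^\star,T^\star)$ is known to be finite.
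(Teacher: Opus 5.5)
Your proof is correct and uses the same ingredients as the paper, which argues more directly: with $h(V):=\inf_T\mathcal{L}(V,T)$ it writes $h(V)\le\mathcal{L}(V,T^\star)\le\mathcal{L}(V^\star,T^\star)=h(V^\star)$ and skips weak duality, whereas your chain additionally yields $\sup_V\inf_T\mathcal{L}=\inf_T\sup_V\mathcal{L}$. Your finiteness discussion is unnecessary, since the chain closes in $[-\infty,\infty]$ as it stands, and your claim that the saddle condition forces $\mathcal{L}(V^\star,T^\star)$ to be finite is not justified, though nothing in the argument depends on it.
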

\begin{proof}
    The right inequality in (Eq. \ref{26.01.25.01.26}) implies that for all measurable maps $T$, $\mathcal{L}(V^{\star}, T^{\star}) \leq \mathcal{L}(V^{\star}, T)$.
    Thus, $T^{\star}$ minimizes $\mathcal{L}(V^{\star}, \cdot)$, satisfying (Eq. \ref{eq:opt_pair_T}).

    For the maximization part, let $h(V) := \inf_{T:H\to H} \mathcal{L}(V, T)$.
    For any $V \in S_c$, we have:
    $$
    h(V) \leq \mathcal{L}(V, T^{\star}).
    $$
    From the left inequality in (Eq. \ref{26.01.25.01.26}), we know $\mathcal{L}(V, T^{\star}) \leq \mathcal{L}(V^{\star}, T^{\star})$.
    Combining these with the right inequality (which implies $\mathcal{L}(V^{\star}, T^{\star}) = h(V^{\star})$), we obtain:
    $$
    h(V) \leq \mathcal{L}(V, T^{\star}) \leq \mathcal{L}(V^{\star}, T^{\star}) = h(V^{\star}).
    $$
    Since $h(V) \leq h(V^{\star})$ for all $V$, $V^{\star}$ maximizes the inner minimization problem, satisfying (Eq. \ref{eq:opt_pair_V}).
    Therefore, $(V^{\star}, T^{\star})$ is an optimal max-min solution.
\end{proof}
However, the converse implication—whether an optimal max-min solution constitutes a saddle point—does not hold in general.
This discrepancy is precisely where the issue of \textbf{spurious solutions} arises.
While the set of optimal max-min solutions contains the true solution, it also includes numerous spurious pairs that fail to capture the optimal transport geometry.

To understand this, consider the case where $T^{\star}$ is a valid transport map (\textit{i.e.}, $T^{\star}_{\#}\mu = \nu$).
In this scenario, for \textbf{any} measurable function $V$, the potential terms cancel out due to the change of variables formula:
$$
\int_{H} V(y) \nu(\mathrm{d}y) - \int_{H} V(T^{\star}(x)) \mu(\mathrm{d}x) = 0.
$$
Consequently, the objective function becomes constant with respect to $V$:
$$
\mathcal{L}(V, T^{\star}) = \int_{H} c(x, T^{\star}(x)) \mu(\mathrm{d}x) = \text{Constant}.
$$
This implies that \textbf{every} potential $V \in S_c$ maximizes the functional $\mathcal{L}(\cdot, T^{\star})$ (satisfying the left inequality of the saddle point condition), regardless of whether $V$ is a true Kantorovich potential or an arbitrary function.
Therefore, relying solely on the max-min optimality allows for spurious potentials that have no geometric connection to the cost function.

Consequently, the saddle point condition imposes a crucial constraint that rules out these spurious solutions.
Specifically, the \textbf{inner minimization inequality} (the right inequality in (Eq. \ref{26.01.25.01.26})) necessitates that $T^{\star}$ is the $c$-transform of $V^{\star}$.
This geometric link prevents $V^{\star}$ from being an arbitrary maximizer and ensures it qualifies as a valid Kantorovich potential.
This observation leads to the following characterization.

\begin{proposition}[Characterization of Kantorovich Potential]
\label{prop:consistency_snot_2}
    Let $\mu \in \mathcal{P}_2^r(H)$ and $\nu \in \mathcal{P}_2(H)$.
    Suppose that $T^{\star}$ is the Monge map and the pair $(V^{\star}, T^{\star})$ is a saddle point of $\mathcal{L}$.
    Then, $V^{\star}$ is a Kantorovich potential.
\end{proposition}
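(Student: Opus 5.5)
The plan is to show that the dual functional at $V^{\star}$ equals the primal optimal cost, and then invoke Kantorovich duality. Recall $J(V):=\int_H V^c\,\mathrm{d}\mu+\int_H V\,\mathrm{d}\nu$; weak duality gives $J(V)\le C(\mu,\nu)$ for every $V\in S_c$. So it suffices to prove $J(V^{\star})\ge C(\mu,\nu)$. The two inequalities of the saddle point condition (Eq.~\ref{26.01.25.01.26}) each give one identification of $\mathcal{L}(V^{\star},T^{\star})$, and comparing them closes the argument.

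\emph{Step 1 (inner inequality).} I will use the right inequality $\mathcal{L}(V^{\star},T^{\star})\le\inf_T\mathcal{L}(V^{\star},T)$. As in the proof of Theorem~\ref{thm:uniqueness}(2), the pointwise bound $c(x,T(x))-V^{\star}(T(x))\ge (V^{\star})^c(x)$ gives $\inf_T\mathcal{L}(V^{\star},T)\ge J(V^{\star})$. For the reverse inequality, I will choose a measurable (near-)minimizing selection $T_\varepsilon(x)$ with $c(x,T_\varepsilon(x))-V^{\star}(T_\varepsilon(x))\le (V^{\star})^c(x)+\varepsilon$. Hence $\inf_T\mathcal{L}(V^{\star},T)=J(V^{\star})$, and therefore $\mathcal{L}(V^{\star},T^{\star})=J(V^{\star})$. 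Equivalently, since the integrand minus $(V^{\star})^c$ is nonnegative with integral $\le 0$, $T^{\star}(x)$ attains the $c$-transform for $\mu$-a.e.\ $x$; this is the route I would actually write out, because it avoids the selection issue.

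\emph{Step 2 (push-forward).} Since $T^{\star}$ is the Monge map, $T^{\star}_{\#}\mu=\nu$. The change of variables $\int V^{\star}(T^{\star}(x))\,\mu(\mathrm{d}x)=\int V^{\star}\,\mathrm{d}\nu$ then cancels the potential terms. This gives $\mathcal{L}(V^{\star},T^{\star})=\int c(x,T^{\star}(x))\,\mu(\mathrm{d}x)=C(\mu,\nu)$, where the last equality uses the optimality of $T^{\star}$ from Proposition~\ref{25.12.16.14.13} together with the fact that the Monge value equals the Kantorovich value when the optimal plan is induced by $T^{\star}$.

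\emph{Step 3 (conclusion).} Combining Steps 1 and 2 gives $J(V^{\star})=C(\mu,\nu)=S(\mu,\nu)$, so $V^{\star}$ attains the supremum in (Eq.~\ref{eq:kantorovich-semi-dual}) and is a Kantorovich potential.

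The main obstacle is integrability. Cancelling $\int V^{\star}\circ T^{\star}\,\mathrm{d}\mu-\int V^{\star}\,\mathrm{d}\nu$ requires $V^{\star}\in L^1(\nu)$, and splitting the integrals in Step 1 requires $(V^{\star})^c\in L^1(\mu)$. I would either make this part of the standing assumption that $\mathcal{L}(V^{\star},\cdot)$ is finite, or derive it as follows. Since $V^{\star}\in S_c$ and $c$ has quadratic growth, $V^{\star}(y)\le c(x_0,y)-(V^{\star})^c(x_0)$ controls the positive part by second moments. Finiteness of $\mathcal{L}(V^{\star},T^{\star})$ then controls the negative part.
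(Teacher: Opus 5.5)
Your proof is correct and is essentially the paper's own argument, given as the second part of the appendix proof of Theorem~\ref{thm:uniqueness}(2): the inner half of the saddle condition gives $\mathcal{L}(V^\star,T^\star)=J(V^\star)$, the push-forward $T^\star_\#\mu=\nu$ gives $\mathcal{L}(V^\star,T^\star)=C(\mu,\nu)$, and duality concludes. One correction: the ``pointwise'' variant does not avoid the selection issue, because ``integral $\le 0$'' needs $\inf_T\mathcal{L}(V^\star,T)\le J(V^\star)$, which is exactly the near-minimizing selection step (the paper simply asserts it), and you can make that step rigorous by picking a countable dense subset $\{(y_n,V^\star(y_n))\}_n$ of $\{(y,V^\star(y)):V^\star(y)>-\infty\}\subset H\times\mathbb{R}$, noting that $(V^\star)^c(x)=\inf_n[c(x,y_n)-V^\star(y_n)]$ for every $x$, and setting $T_\varepsilon(x):=y_{n(x)}$ with $n(x)$ the first $\varepsilon$-optimal index.
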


\vspace{5pt}

\paragraph{Comparison with Finite-Dimensional Results.}
    In the finite-dimensional Euclidean setting, recent work has demonstrated that the SNOT framework is free from spurious solutions provided the source measure assigns zero mass to sets with Hausdorff dimension at most $d-1$ \citep{OTP}.
    This condition is slightly more general than our regular assumption (absolute continuity) in the finite-dimensional case, as discussed in Section \ref{sec:background}.
    However, the validity of a direct infinite-dimensional analogue to this condition remains unclear due to the absence of the Lebesgue measure.
    While we conjecture that a similar geometric condition may hold in Hilbert spaces, pursuing this theoretical extension lies beyond the scope of our current study.

\section{Implementation Details} \label{app:implementation_detail}

\subsection{Synthetic Data Experiments}\label{app:data-2d}
In this section, we describe the experimental setup and the implementation details for the synthetic data experiments
\paragraph{Dataset Description} Throughout this paragraph, let $e_1(t) := \sin(\pi t)$ and $e_2(t) := \sin(2\pi t)$ be the basis functions for $t \in [0, 1]$. We represent each signal as a linear combination of these bases, where $\mathbf{c} = (c_1, c_2) \in \mathbb{R}^2$ and $\mathbf{d} = (d_1, d_2) \in \mathbb{R}^2$ denote the coefficient vectors for the source signal $x(t) = \sum_{i=1}^2 c_i e_i(t)$ and the target signal $y(t) = \sum_{i=1}^2 d_i e_i(t)$, respectively. We consider the following four scenarios:
\begin{itemize}
    \item \textbf{Perpendicular}: We generate the source coefficients $\mathbf{c} \sim \mu$ as follows: $c_1 \sim U([-1, 1])$ and $c_2 \equiv 0$. Similarly, we sample the target coefficients $\mathbf{d} \sim \nu$ by $d_1 \equiv 0$ and $d_2 \sim U([-1, 1])$.
    \item \textbf{Horizontal}: We generate $\mathbf{c} \sim \mu$ as follows: $c_1 \sim U([-1, 1])$, and $c_2 \equiv 0$. Similarly, we sample $\mathbf{d} \sim \nu$ by $d_1 \sim U([-1, 1])$ and $d_2 \equiv 0.5$.
    \item \textbf{One-to-Many}: We generate $\mathbf{c} \sim \mu$ as follows: $c_1 \sim U([-1, 1])$, and $c_2 \equiv 0$. Similarly, we sample $\mathbf{d} \sim \nu$ by $d_1 \sim U([-1, 1])$ and $d_2 \sim \text{Cat}(\{-0.5, 0.5\}, (0.5, 0.5))$.
    \item \textbf{Multi-Perpendicular}:  Let $\mathbb{P} := \text{Cat}(\{-0.75, -0.25, 0.25, 0.75\}, (0.25, 0.25, 0.25, 0.25))$. We generate $\mathbf{c} \sim \mu$ by sampling $c_1 \sim U([-1, 1])$, and $c_2 \sim \mathbb{P}$. Similarly, we sample $\mathbf{d} \sim \nu$ by $d_1 \sim \mathbb{P}$ and $d_2 \sim U([-1, 1])$.
\end{itemize}

\paragraph{Training Details}

Across all experiments, we use the same network architecture and training hyperparameters unless otherwise stated.
Both the potential function $V_\phi$ and the transport map $T_\theta$ are parameterized by neural networks composed of two Fourier layers \citep{li2020fourier}.
In particular, the potential network $V_\phi$ additionally includes a final linear integral functional layer \citep{rahman2022generative}.
We train the models with a batch size of $256$ for $5000$ epochs.
We set the OT cost parameter to $\tau = 0.2$ for the Multi-Perpendicular case, and $\tau = 0.001$ for all other cases.
To improve the optimization of the inner-loop in the dual formulation of equation \ref{eq:snot}, we perform $K_T = 5$ updates of $T_\theta$ for every single update of $v_\phi$.
Regarding the regularization, we set $\lambda = 0$ for the Orthogonal case and $\lambda = 0.01$ otherwise.

For our experiments, we generate perturbed inputs $\hat{x}(t)$ as follows:
$\hat{x}(t) = x(t) + \sigma \varepsilon(t)$,
where $x(t)$ is a clean source signal and $\varepsilon$ is a Fourier-basis noise.
Specifically, we construct $\varepsilon$ by expanding it on a Fourier basis $\{\epsilon_k(t)\}_{k\ge 1}$:
\begin{equation}
\varepsilon(t)
=
\sum_{k=1}^{K} \sqrt{\lambda_k}\,\xi_k\,\epsilon_k(t),
\qquad
\xi_k \stackrel{\text{i.i.d.}}{\sim} \mathcal{N}(0,1),
\qquad
\lambda_k = \frac{1}{k^{2}},
\label{eq:fourier_basis_noise}
\end{equation}
where we use $K=16$ Fourier modes in all experiments.
We employ a linear noise schedule from the initial noise $\sigma_{\max}=0.5$ to the terminal noise $\sigma_{\min}=0.06$.
The noise level is updated over the first $80\%$ of training epochs and kept fixed afterwards.
At epoch $e$, the noise level is
\begin{equation}
\sigma
=
\max\!\left(\sigma_{\min},\; (1-t_e)\sigma_{\max} + t_e \sigma_{\min}\right),
\qquad
t_e = \frac{e}{0.8E},\;\;
\sigma_{\max}=0.5,\;\;\sigma_{\min}=0.06,
\label{eq:noise_schedule}
\end{equation}
where $E$ is the total number of epochs.

\subsection{Time Series Imputation}\label{app:TSI}
In this section, we describe the experimental setup and implementation details for the time series imputation task.
We conduct experiments on ETTm1, ETTm2, ETTh1, ETTh2, and Exchange datasets \citep{wu2021autoformer}.
For each dataset, we consider missing ratios of $0.5$ and $0.7$, and generate missing patterns following the same setup as \citet{wang2025optimal}, and split the data into training, validation, and test sets with a ratio of $7{:}1{:}2$.

Both the potential function $V_\phi$ and the transport map $T_\theta$ are parameterized using the Neural Fourier Model (NFM) \citep{kim2024neural}.
Unless otherwise specified, we follow \citet{kim2024neural}. We additionally adopt channel-independent modeling as in NFM and use a batch size of $1792$.
We train the model for $40$ epochs with learning rate $2\times 10^{-4}$, dropout $0.1$, regularization weight $\lambda = 0.01$, and $\tau = 50$.

We follow the same input perturbation scheme as in the synthetic time-series experiments \ref{app:data-2d}.
We employ a linear noise schedule from the initial noise $\sigma_{\max}=0.2$ to the terminal noise $\sigma_{\min}=0.01$.
The noise level is updated over the first $80\%$ of the maximum schedule iterations and kept fixed afterwards:
\begin{equation}
\sigma_i
=
\max\!\left(\sigma_{\min},\; \sigma_{\max}\left(1-\frac{i}{0.8 I }\right)\right),
\qquad
\sigma_{\max} = 0.2\;\;,\sigma_{\min}=0.01.
\label{eq:tsi_noise_schedule}
\end{equation}
where $i$ is the iteration index and $I$ denotes the maximum schedule iterations.
After $i \ge 0.8 I$, the noise level stays at $\sigma_{\min}$.

\begin{table*}[t]
\centering
\caption{Comparison of Time-Series Imputation Performance for Missing Ratios 0.5 and 0.7. All baseline results (except Ours) are taken from \citet{wang2025optimal}.
% \textbf{Unpaired time-series imputation performance (MSE and MAE)} across five datasets. Results are averaged over missing ratios of $0.5$ and $0.7$. The \textbf{bold} and \underline{underlined} values denote the best and second-best results. All baseline results are taken from \citet{wang2025optimal}.
}
\label{table:TSI_all}
\resizebox{\textwidth}{!}{
\begin{tabular}{l|c|cc|cc|cc|cc|cc}
\toprule
\textbf{Datasets} & & \multicolumn{2}{c|}{\textbf{ETTh1}} & \multicolumn{2}{c|}{\textbf{ETTh2}} & \multicolumn{2}{c|}{\textbf{ETTm1}} & \multicolumn{2}{c|}{\textbf{ETTm2}} & \multicolumn{2}{c}{\textbf{Exchange}} \\
\cmidrule{3-12}
\textbf{Methods} & \textbf{Ratio} & \textbf{MSE} & \textbf{MAE} & \textbf{MSE} & \textbf{MAE} & \textbf{MSE} & \textbf{MAE} & \textbf{MSE} & \textbf{MAE} & \textbf{MSE} & \textbf{MAE} \\
\midrule

% Transformer 
\multirow{2}{*}{Transformer} 
 & 0.5 & 0.249 & 0.359 & 0.250 & 0.360 & 0.069 & 0.173 & 0.032 & 0.122 & 0.261 & 0.215 \\
 & 0.7 & 0.426 & 0.461 & 0.313 & 0.405 & 0.100 & 0.219 & 0.080 & 0.199 & 0.378 & 0.362 \\
\midrule

% DLinear 
\multirow{2}{*}{DLinear} 
 & 0.5 & 0.148 & 0.272 & 0.117 & 0.240 & 0.110 & 0.229 & 0.089 & 0.206 & 0.252 & 0.208 \\
 & 0.7 & 0.225 & 0.341 & 0.159 & 0.283 & 0.152 & 0.270 & 0.125 & 0.248 & 0.323 & 0.267 \\
\midrule

% TimesNet 
\multirow{2}{*}{TimesNet} 
 & 0.5 & 0.163 & 0.297 & 0.126 & 0.260 & 0.061 & 0.179 & 0.072 & 0.197 & 0.343 & 0.283 \\
 & 0.7 & 0.523 & 0.532 & 0.168 & 0.295 & 0.110 & 0.238 & 0.114 & 0.243 & 0.401 & 0.332 \\
\midrule

% FreTS 
\multirow{2}{*}{FreTS} 
 & 0.5 & 0.229 & 0.358 & 0.141 & 0.261 & 0.048 & 0.149 & 0.037 & 0.134 & 0.256 & 0.211 \\
 & 0.7 & 0.236 & 0.351 & 0.176 & 0.294 & 0.074 & 0.187 & 0.042 & 0.137 & 0.201 & 0.166 \\
\midrule

% PatchTST
\multirow{2}{*}{PatchTST} 
 & 0.5 & 0.153 & 0.284 & 0.137 & 0.269 & \textbf{0.046} & 0.139 & 0.029 & 0.116 & 0.204 & 0.169 \\
 & 0.7 & 0.278 & 0.392 & 0.150 & 0.275 & \textbf{0.064} & 0.166 & 0.036 & 0.127 & 0.250 & 0.206 \\
\midrule

% SCINet 
\multirow{2}{*}{SCINet} 
 & 0.5 & 0.169 & 0.300 & 0.176 & 0.285 & 0.062 & 0.173 & 0.077 & 0.206 & 0.280 & 0.231 \\
 & 0.7 & 0.220 & 0.341 & 0.168 & 0.290 & 0.113 & 0.234 & 0.093 & 0.216 & 0.371 & 0.307 \\
\midrule

% iTransformer
\multirow{2}{*}{iTransformer} 
 & 0.5 & 0.168 & 0.289 & 0.095 & 0.210 & 0.054 & 0.154 & 0.029 & 0.117 & 0.058 & 0.045 \\
 & 0.7 & 0.244 & 0.353 & 0.155 & 0.263 & 0.082 & 0.191 & 0.057 & 0.167 & 0.116 & 0.095 \\
\midrule

% SAITS
\multirow{2}{*}{SAITS} 
 & 0.5 & 0.229 & 0.321 & 0.184 & 0.264 & 0.064 & 0.169 & 0.057 & 0.156 & 1.007 & 0.831 \\
 & 0.7 & 0.343 & 0.395 & 0.287 & 0.341 & 0.094 & 0.207 & 0.061 & 0.164 & 1.006 & 0.833 \\
\midrule

% CSDI 
\multirow{2}{*}{CSDI} 
 & 0.5 & 0.161 & 0.293 & 0.096 & 0.269 & 0.059 & 0.139 & 0.186 & 0.126 & 0.125 & 0.139 \\
 & 0.7 & 0.213 & 0.312 & 0.117 & 0.273 & 0.252 & 0.273 & 0.136 & 0.174 & 0.143 & 0.161 \\
\midrule

% Sinkhorn
\multirow{2}{*}{Sinkhorn} 
 & 0.5 & 0.910 & 0.669 & 0.820 & 0.614 & 0.950 & 0.708 & 0.905 & 0.678 & 0.740 & 0.611 \\
 & 0.7 & 0.962 & 0.701 & 0.934 & 0.673 & 0.979 & 0.725 & 0.948 & 0.725 & 0.826 & 0.684 \\
\midrule

% TDM 
\multirow{2}{*}{TDM} 
 & 0.5 & 0.993 & 0.749 & 0.978 & 0.738 & 1.002 & 0.754 & 0.995 & 0.746 & 0.957 & 0.790 \\
 & 0.7 & 0.989 & 0.749 & 1.017 & 0.751 & 1.004 & 0.757 & 1.001 & 0.749 & 0.981 & 0.812 \\
\midrule

% PWS-I 
\multirow{2}{*}{PWS-I} 
 & 0.5 & \textbf{0.125} & \textbf{0.234} & 0.047 & 0.145 & 0.048 & \textbf{0.133} & 0.021 & 0.096 & 0.032 & \textbf{0.026}\\
 & 0.7 & \textbf{0.194} & \textbf{0.289} & 0.059 & 0.165 & 0.066 & \textbf{0.158} & 0.029 & 0.110 & 0.039 & \textbf{0.033} \\
\midrule
\midrule
% our
\multirow{2}{*}{\textbf{Ours}} 
 & 0.5 & 0.157 & 0.252 & \textbf{0.043} & \textbf{0.134} & 0.055 & 0.136 & \textbf{0.018} & \textbf{0.081} & \textbf{0.003} & 0.040 \\
 & 0.7 & 0.272 & 0.323 & \textbf{0.052} & \textbf{0.153} & 0.076 & 0.163 & \textbf{0.023} & \textbf{0.096} & \textbf{0.005} & 0.043 \\
\bottomrule
\end{tabular}
}
\end{table*}
% \section{You \emph{can} have an appendix here.}

% You can have as much text here as you want. The main body must be at most $8$
% pages long. For the final version, one more page can be added. If you want, you
% can use an appendix like this one.

% The $\mathtt{\backslash onecolumn}$ command above can be kept in place if you
% prefer a one-column appendix, or can be removed if you prefer a two-column
% appendix.  Apart from this possible change, the style (font size, spacing,
% margins, page numbering, etc.) should be kept the same as the main body.
%%%%%%%%%%%%%%%%%%%%%%%%%%%%%%%%%%%%%%%%%%%%%%%%%%%%%%%%%%%%%%%%%%%%%%%%%%%%%%%
%%%%%%%%%%%%%%%%%%%%%%%%%%%%%%%%%%%%%%%%%%%%%%%%%%%%%%%%%%%%%%%%%%%%%%%%%%%%%%%

\end{document}